\renewenvironment{proof}{\vspace{.1cm}\noindent{\sc Proof.}\hspace{0.10cm}\,\,}{ \hfill }
\newtheorem{theorem}            {Theorem}[section]
\newtheorem{proposition}		[theorem]{Proposition}
\newcommand{\dd}   {{\rm d}\hbox{\hskip 0.5pt}}
\newcommand{\rline}{{\mathbb R}}
\newcommand{\bbm}[1]{\left[\begin{matrix} #1 \end{matrix}\right]}
\newcommand{\sbm}[1]{\left[\begin{smallmatrix} #1
	\end{smallmatrix}\right]}
\newcommand{\rfb}[1]{\mbox{\rm
		(\ref{#1})}\ifx\undefined\stillediting\else:\fbox{$#1$}\fi}
\newcommand{\bluff}{{\hbox{\raise 15pt \hbox{\hskip 0.5pt}}}}
\newcommand{\etal}{\textit{et al }}
\newfont{\roma}{cmr10 scaled 1200}
\begin{document}
\title{Source Seeking Control of Unicycle Robots with 3D-printed Flexible Piezoresistive Sensors}

\author{Tinghua Li\textsuperscript{1} , Bayu Jayawardhana\textsuperscript{1} ,  Amar Kamat\textsuperscript{2} ,  and Ajay Giri Prakash Kottapalli\textsuperscript{2} % <-this % stops a space
    \thanks{This work was supported in part by China Scholarship Council, in part by the SNN programme on CoE Smart Sustainable Manufacturing, and in part was labelled by ITEA and funded by local authorities under the grant agreement ITEA-2018-17030-Daytime.}
	\thanks{\textsuperscript{1}Tinghua Li and Bayu Jayawardhana are with DTPA, ENTEG, Faculty  of  Science  and  Engineering,  University  of  Groningen, The Netherlands
		\{\tt\small t.li, b.jayawardhana\}@rug.nl.}%
	\thanks{\textsuperscript{2}Amar Kamat and Ajay Giri Prakash Kottapalli are with APE, ENTEG, Faculty  of  Science  and  Engineering,  University  of  Groningen, The Netherlands
		\{\tt\small a.m.kamat, a.g.p.kottapalli\}@rug.nl.}%
}

\maketitle

\begin{abstract}
We present the design and experimental validation of source seeking control algorithms for a unicycle mobile robot that is equipped with novel 3D-printed flexible graphene-based piezoresistive airflow sensors. Based solely on a local gradient measurement from the airflow sensors, we propose and analyze a projected gradient ascent algorithm to solve the source seeking problem. In the case of partial sensor failure, we propose a combination of Extremum-Seeking Control with our projected gradient ascent algorithm. For both control laws, we prove the asymptotic convergence of the robot to the source. Numerical simulations were performed to validate the algorithms and experimental validations are presented to demonstrate the efficacy of the proposed methods.

\end{abstract}

\begin{IEEEkeywords}
Motion Control, Sensor-based Control, Autonomous Vehicle Navigation, 3D-printed Piezoresistive Sensor
\end{IEEEkeywords}

\IEEEpeerreviewmaketitle

\section{Introduction}

In many biological organisms, source seeking is an innate ability that is crucial for their survival. For instance, bacterial chemotaxis is used to reach the much-needed chemical nutrients where the gradient of chemical concentration directs the bacterial movement \cite{Wadhams}. Another example is the navigation of blind cave fish in dark underwater caves based on its ability to measure the local fluid flow sensitively \cite{Kottapalli}, \cite{Bora1}, \cite{Bora}. The last example is the ability of seals to locate their preys using sensitive flow measurement via their whiskers \cite{Dehnhardt}. Inspired by these examples from nature, such source seeking capability will enable autonomous robotic systems to navigate and localize sources that are crucial for completing some desired tasks. For example, search-and-rescue robotic missions in a hazardous environment may require the ability to locate heat sources, in order to search for escape exits in a collapsed site or subterranean area or to navigate through the air flow to find dangerous substances. In an environmental disaster recovery mission, the robots must be able to seek hazardous chemicals and to contain them as quickly as possible. In particular, as a carrier of chemical substances, airflow plays a key role in the advection and diffusion transportation mechanisms. Thus, estimating the local flow field using airflow sensors can yield useful information about the environment. The monitoring of volumetric airflow can effectively estimate the strength and help search for the source direction in a localization task. For instance, in extreme rescue situations, an array of sensitive elastic airflow sensors helps sense the ambient airflow in multiple directions. The airflow information can provide effective assistance for aerial robots in exploration trajectories \cite{Al-Sabban}, \cite{Neumann}, minimize the onboard energy usage and improve concentration leak localization \cite{Kowadlo} and mapping \cite{Reggente}. Further, by tracking the lowest airflow path, the aerodynamic drag and fuel consumption in a platoon of autonomous trucks can be reduced \cite{ Mahrle}.

Depending on the tasks and sources at hand, the autonomous systems can be equipped with a multitude of local and global sensor systems, based on which, the control systems steer them towards the right trajectory. The sources are typically assumed to be fields of physical variables (e.g., the field of electromagnetic, temperature, pressure or chemical concentration) or advection-induced flow (such as, fluid/air flows and heat transfer). The corresponding sensor systems can then provide the field or flow information by its magnitude or as a gradient vector.

Based on the available information, a source-seeking control algorithm, which is designed based on the specific systems' dynamics, computes the required control signal to steer the systems towards the sources. When the gradient information is lacking, bio-inspired source seeking control algorithms have been proposed in literature \cite{Nurzaman, Grasso} where the robots perform chemotaxis-like exploration to locate the sources. Another approach to tackle the lack of gradient information is to deploy Bayesian inference method as pursued in \cite{Porat} or to deploy a mobile sensor network \cite{Ogren, Pang}. The aforementioned approaches however rely on the availability of global position information to track the robot's location in real time. Yet this assumption does not typically hold true in difficult environments, such as, in the deep sea, under the ground, indoors and in fire grounds, where the robots must rely on local sensor measurements. In this paper, we focus on the deployment of local sensor systems and source seeking control for unicycle-like mobile robots that operate in the latter environment, e.g., without position measurement.

A large class of source seeking algorithms developed in the literature use gradient information. In this case, the gradient-descent or gradient-ascent algorithms have been developed and deployed both for single or a group of vehicles, see for examples \cite{Baronov2008, Burian1996, Madgwick, Huynh, Soltero, Cortes}. The Artificial Potential Field (APF) method was proposed by Khatib for  robot motion planning \cite{Khatib}.
In general, the APF-based control approach relies on the use of an artificial potential, which is known apriori, in order to design an admissible trajectory of the robot for reaching the target while avoiding obstacles. Such admissible trajectory will then be used in the low-level control as a reference trajectory that also takes into account the kinematics of the underlying robot \cite{Rostami,Vadakkepat}. When the gradient information is not available, an approximation to the gradient using multiple sensors in a unicycle agent has been presented \cite{Fabbiano}.  Recently, Bachmayer and Leonard \cite{Bachmayer} proposed the use of a coordinated control strategy for a group of autonomous vehicles to descend or climb an environmental gradient, depending on the measurements of the environment together with relative position measurements of nearest neighbors. The approximation of the local gradient in \cite{Bachmayer} is based on the use of a single sensor per vehicle, where each vehicle is assumed to be able to measure the gradient only in the direction of motion. Subsequently the authors presented a distributed controller with inter-vehicle communication in order to steer the group to the global minimum (or maximum) of the sampled environmental gradient field. Another related work \cite{Moore} presents a distributed control method for moving agents in a specific shape formation that solves the source seeking problem collaboratively. Without the gradient estimation, Matveev \etal \cite{Matveev} proposed a sliding mode navigation strategy to control the agent by a limited time-varying angular velocity control.

Another family of popular source seeking methods are the Extremum Seeking Control (ESC) based algorithms. Roughly speaking, it is based on the use of averaging technique via dither signals in order to extract the gradient information \cite{Ghadiri}. The dither signals are typically combined with the sensor signals and the control signals, and they can also be regarded as singular perturbation as studied in \cite{Durr,DurrHB}. Some recent works on ESC-based source seeking for unicycle-like mobile robots are presented in \cite{Zhang, Ghods, Lin, Liu, LinJ, Azuma, Fu}. By setting the angular velocity constant, Zhang \etal in \cite{Zhang} presented a control law for the forward velocity to seek the source. The use of ESC for the control of angular velocity component has also been explored in  \cite{Cochran}. In \cite{Ghods, Lin}, the authors presented the ESC method for both the angular and forward velocity of unicycle-like vehicles. Fu and Ozguner \cite{Fu} studied ESC method for unicycle-like agents when there are constraints on the accessible area of the agent.
While sinusoidal dither signals are used in the aforementioned works, filtered white noise dither signals have been used in \cite{Liu,LinJ,Azuma} that are commonly known as the stochastic source seeking methods. In \cite{Raisch2016}, Raisch and Krstic combined two kinds of perturbation signals (periodic-based and constant-based) for an efficient convergence toward the source and maintaining tight hovering near the source.  By bounding the update rate in the optimum seeking and stabilization control law, Scheinker and Krstic proposed a new constrained ESC scheme \cite{Scheinker2014a}, which is advantageous to the hardware control implementation \cite{Scheinker2014b} and is generalized in \cite{Scheinker2014c}.

In our first main contributions, we present the design of two source seeking control methods for a unicycle agent based solely on the use of local field gradient or flow information and local coordinate frame:
\begin{enumerate}
\item In our first control method, we propose a projected gradient-ascent control law where we control both the longitudinal and angular velocities in order to reach the position of a local maxima using instantaneous gradient or flow information provided by an on-board sensor system. In most of the gradient-based source seeking algorithm for unicycle agent, it is common to consider the angular velocity as the input variable while keeping longitudinal velocity constant, see for instance \cite{Fabbiano, Cochran}. In \cite{Baronov2008}, the authors present a gradient-based control law for both the longitudinal and angular velocities where the longitudinal velocity is proportional to the magnitude of the gradient. Due to the particular structure of the control law, the controller in \cite{Baronov2008} is restricted by the bounds on the gradient's curvature and it results in a semi-global asymptotic stability of the source location. We relax this limitation by introducing a projected gradient-ascent control law to both velocities and we prove the asymptotic stability of the source (which is a global one when there is only one extremum).
\item In our second control approach, we propose a combination of extremum seeking control and our projected gradient-ascent control law to solve the source seeking problem when only the magnitude of the gradient or flow is available due to limitation introduced to the sensor systems, e.g., due to a fault in the on-board sensor systems. For both approaches, we prove the asymptotic convergence of the unicycle agent to the source and show the efficacy of the controllers numerically and experimentally via hardware-in-the-loop and lab experiments.
\end{enumerate}

The second main contribution of this paper pertains to the deployment of novel 3D-printed flexible graphene-based piezoresistive flow sensors as a proxy to the gradient information. The sensor design is based on our previous work in \cite{Kamat2020A,Kamat2020B}. In particular, we fabricate and characterize four flexible all-polymer flow sensors which are mounted on our mobile robot platform wherein each sensor provides information of bidirectional flows. Subsequently, we incorporate the sensor readout as a proxy of the gradient and experimentally demonstrate the performance of both proposed controllers in lab experiments.

The rest of the paper is organized as follows. In Section II, we discuss the source seeking control problem formulation. In Section III, we propose two control laws and we subsequently present asymptotic stability analysis of the closed-loop systems. The design and characterization of a 3D-printed flexible piezoresistive flow sensors are presented in Section IV. In Section V, we present numerical simulation results and in Section VI, we present experimental setup and results. Finally, the conclusions and future work are discussed in Section VII.

\section{Problem formulation }
\begin{figure}[htbp]
	\centering
	\includegraphics[width=8cm]{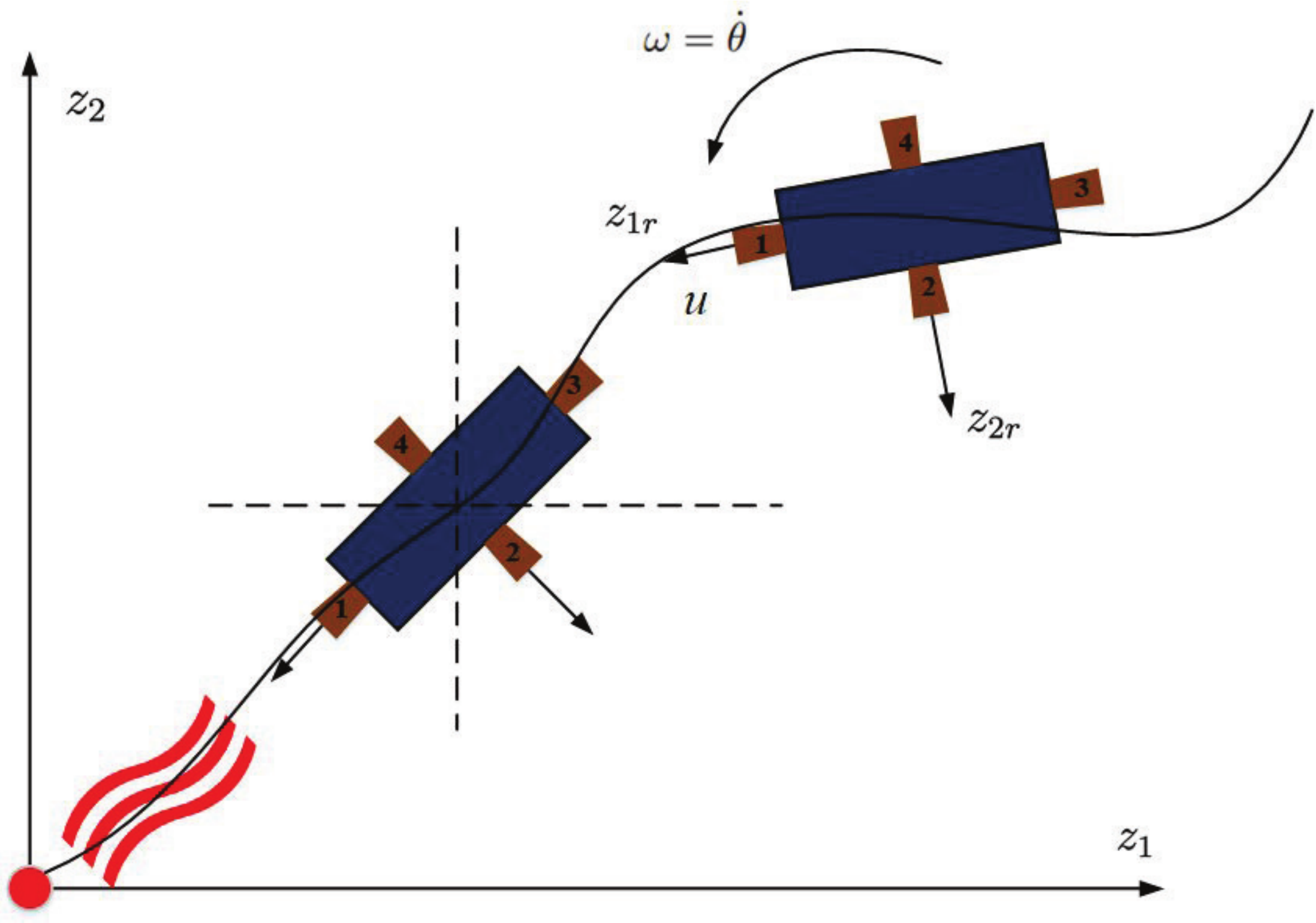}
	\caption{An illustration of a unicycle mobile robot navigating in a 2D plane. The red wavy lines illustrates the wind flow that emanates from a source (red dot). At any given position $(z_1,z_2)$ and heading angle $\theta$, the robot controls the longitudinal $u$ and angular $\omega$ velocity in order to seek the source based on the use of on-board flow sensors (indicated by brown squares on its side) and without global positioning systems.}
	\label{Robot Model with four airflow sensors}
\end{figure}

As described in the Introduction, we consider a unicycle robot which is equipped with four air flow sensors mounted at its front, end, left and right side. Figure \ref{Robot Model with four airflow sensors} shows an illustration of the unicycle robot which traverses the 2D plane. For a given 2D position $(z_1,z_2)$ and heading angle $\theta$, the robot can drive forward with a longitudinal velocity $u$ and rotate with an angular velocity $\omega$. %, air flow source and flow level trendency.

Correspondingly, we consider the following dynamic model of unicycle robot
\begin{equation}\label{eq:unicycle_model}
\begin{bmatrix}
\dot{z_1} \\
\dot{z_2}  \\
\dot{\theta}
\end{bmatrix}=\begin{bmatrix}
u\cos(\theta) \\
u\sin(\theta)  \\
\omega
\end{bmatrix}
\end{equation}
where $\sbm{z_1(t) \\ z_2(t)}$ is the 2D planar robot's position with respect to a global frame of reference, $\theta(t)$ is the heading angle, $u(t)$ is the longitudinal velocity input variable and $\omega(t)$ is the angular velocity input variable.

For our source-seeking control problem, we consider a source that emits (laminar) air flow on the 2D plane whose strength decays with distance to the source. Let the potential function $J(z_1,z_2)$ define the magnitude of the air flow which has a global maximum at the source location $(z_1^*,z_2^*)$. Furthermore, we consider the robot setup where it is able to measure the local gradient of $J$, denoted by $\nabla J(z_1,z_2)$, using our 3D-printed graphene-based piezoresistive sensors \cite{Kamat2020A,Kamat2020B} which will be described further in Section IV. Based on this setup, we can define our control design problem as follows.

{\bf Gradient-based source-seeking control problem:} For the unicycle systems as in \eqref{eq:unicycle_model} and the available measurement of gradient $\nabla J(z_1,z_2)$ and orientation $\theta$, design feedback control laws $u=F(\nabla J(z_1,z_2),\theta)$ and $\omega=G(\nabla J(z_1,z_2),\theta)$ such that
\begin{equation}\label{eq:convergence}
\lim_{t\to\infty} \left\|\bbm{z_1(t)-z_1^*\\z_2(t)-z_2^*}\right\| = 0
\end{equation}
holds for all initial conditions $(z_1(0), z_2(0)) \in \mathcal Z\subset \rline^2$ (globally when $\mathcal Z=\rline^2$).

We note that in the above control problem formulation, the control laws do not depend on the availability of global position $\sbm{z_1(t)\\z_2(t)}$. It relies only on the local gradient measurement as well as its local orientation. As discussed in the Introduction, many results in the literature assume that $F$ is a constant \cite{Fabbiano, Cochran} or $F(a,b)=|a|$ \cite{Baronov2008}.

\section{Control design and analysis}
\subsection{Projected gradient-ascent control law}\label{sec:gradient_ascent}

Let us now present a projected gradient-ascent control law that is based on the measured
local gradient information from airflow sensors in an unknown nonlinear map $J(z_1,z_2)$. Firstly, let us denote the unit vector orientation of the mobile robot by
\begin{equation}\label{eq:v_theta}
\vec{v}(\theta) =\begin{bmatrix}
\cos(\theta) \\
\sin(\theta)
\end{bmatrix}.
\end{equation}
 As usual, for a given scalar function $J(z_1,z_2)$, we define the local gradient $\nabla J(z_1,z_2)$ by %(Jacobian of $J$) at $\begin{bmatrix} z_1 \\ z_2\end{bmatrix} ^ T$ can be denoted by
\begin{equation}
\nabla J(z_1,z_2)=\begin{bmatrix}
\frac{\partial J}{\partial x}(z_1,z_2) & \frac{\partial J}{\partial y}(z_1,z_2)
\end{bmatrix}
\end{equation}
and correspondingly, we define
$\nabla J^{\perp }(z_1,z_2)$ as an orthogonal vector to $\nabla J$ satisfying %it is orthogonal to $\nabla J(z_1,z_2)$, such that %(when $ \neq 0$), then
\begin{equation*}
\left\langle\nabla J^{\perp }(z_1,z_2),\nabla J(z_1,z_2)\right\rangle=0
\end{equation*}
\begin{equation}\label{eq:crossproduct}
\nabla J^{\perp }(z_1,z_2)\times \nabla J(z_1,z_2)> 0
\end{equation}
\begin{equation}\label{eq:norm_gradient}
\left \| \nabla J^{\perp }(z_1,z_2) \right \|=\left \| \nabla J(z_1,z_2) \right \|
\end{equation}
where $\langle \cdot,\cdot \rangle$ denotes inner product. We note that the cross product in \eqref{eq:crossproduct} means that  $\nabla J^{\perp }(z_1,z_2)$ is the direction of  $\nabla^\perp J(z_1,z_2)$ is $-90^0$ of that of $\nabla J(z_1,z_2)$. The last equality \eqref{eq:norm_gradient} implies also that $\nabla J(z_1,z_2) = 0  \Leftrightarrow \nabla^\perp J(z_1,z_2) = 0$.

Using $\vec{v}(\theta)$, $\nabla J$ and $\nabla^\perp J$, the proposed projected gradient-ascent control laws for $u$ and $\omega$ are given by
\begin{equation}\label{eq:control_laws}
 \begin{array}{rl}
u & = k_1 \left\langle \vec{v}(\theta),\nabla J(z_1,z_2) \right\rangle \\
\omega & = -k_2 \left\langle \vec{v}(\theta),\nabla^\perp J(z_1,z_2) \right\rangle,
\end{array}
\end{equation}
where $k_1>0$ and $k_2>0$ are the longitudinal velocity gain and angular velocity gain, respectively.  We remark that the control law as given in \eqref{eq:control_laws} requires only local measurement of the gradient $\nabla J$ as well as local measurement of the orientation $\theta$ (such as, wheels' encoder) based on its local coordinate frame. In this regard, the framework is suitable for deployment in a GPS-free environment.
The closed-loop systems dynamics is given by

\begin{equation}\label{eq:closed_loop}
    \bbm{\dot z_1 \\ \dot z_2 \\ \dot \theta} = \bbm{k_1 \left\langle \vec{v}(\theta),\nabla J(z_1,z_2) \right\rangle \vec{v}(\theta) \\ -k_2 \left\langle \vec{v}(\theta),\nabla^\perp J(z_1,z_2) \right\rangle}.
\end{equation}

\begin{proposition}\label{prop_1}
	Consider the unicycle system in \eqref{eq:unicycle_model}. Assume that the potential function $J$ is twice-differentiable, radially unbounded and is a strictly concave function with a maximum at $\sbm{z_1^*\\z_2^*}$. Then for any positive gains $k_1, k_2>0$, the control law \eqref{eq:control_laws} solves the gradient-based source-seeking control problem globally.
\end{proposition}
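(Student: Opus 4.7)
The plan is to apply LaSalle's invariance principle using the Lyapunov candidate
\begin{equation*}
V(z_1,z_2) \;=\; J(z_1^*,z_2^*) - J(z_1,z_2).
\end{equation*}
Strict concavity of $J$ together with $\nabla J(z_1^*,z_2^*)=0$ gives $V\ge 0$ with equality only at the source, and the radial-unboundedness hypothesis (which, since $J$ has a global maximum, must be read as $J(z)\to -\infty$ as $\|z\|\to\infty$) transfers directly to radial unboundedness of $V$ on $\rline^{2}$. The first step is to differentiate $V$ along trajectories of \eqref{eq:closed_loop}; the $\omega$-equation contributes nothing to the chain rule, and a direct calculation yields
\begin{equation*}
\dot V \;=\; -\nabla J(z_1,z_2) \cdot \bbm{\dot z_1 \\ \dot z_2} \;=\; -k_1 \,\langle \vec v(\theta), \nabla J(z_1,z_2)\rangle^{2} \;\le\; 0.
\end{equation*}
Since sub-level sets of $V$ are forward-invariant in $(z_1,z_2)$ and $\theta$ lives on the compact circle $S^{1}$, every trajectory is bounded on $[0,\infty)$.

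The main obstacle is that $\dot V$ is only negative semi-definite: it vanishes whenever $\vec v(\theta)\perp\nabla J(z_1,z_2)$, not only at the source. LaSalle's invariance principle will supply the conclusion provided I can characterize the largest invariant subset of $E := \{(z_1,z_2,\theta)\,:\,\langle\vec v(\theta),\nabla J(z_1,z_2)\rangle=0\}$. On any trajectory entirely contained in $E$, the first control law gives $u\equiv 0$, so $(z_1(t),z_2(t))\equiv(\bar z_1,\bar z_2)$ and $\bar g:=\nabla J(\bar z_1,\bar z_2)$ is a constant vector for which $\vec v(\theta(t))\cdot\bar g\equiv 0$.

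The crucial sub-step is to rule out $\bar g\ne 0$. If $\bar g\ne 0$, the orthogonality constraint pins the unit vector $\vec v(\theta(t))$ to the two-point set $\{\pm\bar g^{\perp}/\|\bar g\|\}$; continuity of $\theta(t)$ then forbids sign-flips, so $\theta$ is constant and $\dot\theta\equiv 0$. On the other hand, in exactly the same configuration, using $\|\nabla^{\perp}J\|=\|\nabla J\|$ from \eqref{eq:norm_gradient}, one computes $\langle \vec v(\theta),\bar g^{\perp}\rangle=\pm\|\bar g\|$, so the second control law forces $\dot\theta=\mp k_{2}\|\bar g\|\ne 0$ -- a contradiction. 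Hence $\bar g=0$, and strict concavity of $J$ then identifies $(\bar z_1,\bar z_2)=(z_1^{*},z_2^{*})$.

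Putting the pieces together, the largest invariant subset of $E$ is $\{(z_1^{*},z_2^{*})\}\times S^{1}$, and LaSalle's theorem yields \eqref{eq:convergence} for every initial condition in $\rline^{2}\times S^{1}$, which is the global statement of the proposition. I expect the invariant-set analysis just sketched to be the principal difficulty, since it is where one must exploit the interplay between the two orthogonal directions $\nabla J$ and $\nabla^{\perp}J$ built into the control design \eqref{eq:control_laws} to preclude a stalled trajectory that sits at a non-source position with a perpetually transverse heading.
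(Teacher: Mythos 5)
Your proposal is correct and follows essentially the same route as the paper: the Lyapunov function $V=J(z_1^*,z_2^*)-J(z_1,z_2)$ (the paper adds the constant $\tfrac12\cos^2\theta+\tfrac12\sin^2\theta$), the identity $\dot V=-k_1\langle\vec v(\theta),\nabla J\rangle^2\le 0$, and a LaSalle invariant-set contradiction exploiting that on the set $\vec v(\theta)\perp\nabla J$ the heading is co-linear with $\nabla^\perp J$, forcing $\omega\neq 0$ away from the source. The only cosmetic difference is that you derive the contradiction from ``$\theta$ pinned hence $\dot\theta=0$'' versus the paper's computation that $\tfrac{\dd}{\dd t}\langle\nabla J,\vec v(\theta)\rangle\neq 0$; both rest on the same mechanism.
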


\begin{proof}
Firstly, we introduce  auxiliary state variables
	\begin{equation}	
	\bbm{z_3 \\ z_4} = \bbm{\cos(\theta) \\ \sin(\theta)}
	\end{equation}
	so that we can rewrite the closed-loop systems equation \eqref{eq:closed_loop} into the following autonomous system
	\begin{align}\label{system_eq_new}
	\dot z = \bbm{\dot z_1\\ \dot z_2 \\ \dot z_3 \\ \dot z_4} & = \bbm{k_1\langle\nabla J(z_1,z_2),\sbm{z_3\\z_4}\rangle z_3 \\ k_1\langle \nabla J(z_1,z_2),\sbm{z_3\\z_4}\rangle z_4 \\ z_4 k_2\langle\nabla^{\perp} J(z_1,z_2),\sbm{z_3\\z_4}\rangle \\ -z_3 k_2\langle\nabla^{\perp} J(z_1,z_2),\sbm{z_3\\z_4}\rangle},
	\end{align}
	where $z = \bbm{z_1 & z_2 & z_3 & z_4}^T$ are the new state variables. We consider now the following twice-differentiable function
    \begin{equation}
	V(z_1,z_2,z_3,z_4) = -J(z_1,z_2) + \frac{1}{2}z_3^2 + \frac{1}{2}z_4^2+J(z_1^*,z_2^*),
    \end{equation}
	which is positive definite and radially unbounded by the strictly concave and radial unboundedness property of $J$. Its time derivative is given by
	\begin{equation}\label{eq:V_dot}
	\begin{aligned}
    \dot V(z_1,z_2,z_3,z_4) & = -\nabla J(z_1,z_2)\bbm{k_1\langle\nabla J(z_1,z_2),\sbm{z_3\\z_4}\rangle z_3 \\ k_1\langle\nabla J(z_1,z_2),\sbm{z_3\\z_4}\rangle z_4} \\
	& \qquad + z_3\left(z_4 k_2\langle\nabla^{\perp} J(z_1,z_2),\sbm{z_3\\z_4}\rangle\right) \\& \qquad+z_4\left(-z_3 k_2\langle\nabla^{\perp} J(z_1,z_2),\sbm{z_3\\z_4}\rangle\right)\\
	& = -\nabla J(z_1,z_2) \underbrace{k_1\langle\nabla J(z_1,z_2),\sbm{z_3\\z_4}\rangle}_{u} \sbm{z_3\\z_4} \\
	& = -k_1 \left\langle\nabla J(z_1,z_2),\bbm{z_3\\z_4}\right\rangle^2 \leq 0.
	\end{aligned}
	\end{equation}

It follows from \eqref{eq:V_dot} that for all initial conditions $z_1(0), z_2(0), z_3(0), z_4(0)$, we have that
\begin{align*}
V(z_1(t),z_2(t),z_3(t),z_4(t)) & \leq V(z_1(0),z_2(0),z_3(0),z_4(0))
\end{align*}
holds for all $t\geq 0$.
By the radial unboundedness of $V$, the sub-level set of $V(z(t))$
is compact and thus
$\sbm{z_1(t) & z_2(t) & z_3(t) & z_4(t)}^T$ is bounded for all time $t\geq 0$.
	Accordingly, by using the standard La-Salle invariance arguments (see, for instance, \cite[Theorem 3.4]{Khalil}), $z$ converges to the largest invariant set of the $\Omega$-limit set, in which $\langle \nabla J(z_1,z_2),\sbm{z_3\\z_4}\rangle = 0$ holds for all time.

	We will now characterize the invariant set $\Omega$.
     In this invariant set $\Omega$, any trajectory $z(t)\in \Omega$ must satisfy that $\nabla J(z_1(t),z_2(t))\sbm{z_3(t)\\z_4(t)}=0$ for all $t\geq 0$. Note that $\nabla J(z_1,z_2)\sbm{z_3\\z_4}=0$ implies that the vectors $\nabla J(z_1,z_2)^T$ and $\sbm{z_3\\z_4}$ are perpendicular with each other.

     Let us prove by contradiction that in the invariant set $\Omega$, $\sbm{z_1\\z_2}=\sbm{z_1^*\\z_2^*}$. Suppose that in $\Omega$, $\sbm{z_1 \\ z_2} \neq \sbm{z_1^* \\ z_2^*}$, in which case, according to \eqref{system_eq_new}, we have that
	\begin{equation}
	\bbm{\dot z_3 \\ \dot z_4} = \bbm{z_4 k_2\nabla^{\perp} J(z_1,z_2)\sbm{z_3\\z_4} \\ -z_3 k_2\nabla^{\perp} J(z_1,z_2)\sbm{z_3\\z_4}} \neq 0.
	\end{equation}
	The above relation follows from the fact that $\nabla J(z_1,z_2) \sbm{z_3\\z_4}=0$ with $\nabla J(z_1,z_2)\neq 0 \Rightarrow \nabla^{\perp} J(z_1,z_2)\neq 0$ and $\sbm{z_3\\z_4}\neq 0$ (by the definition of $\sbm{z_3\\z_4}$ that lives in a circle) so that the vectors $\nabla^{\perp} J(z_1,z_2)$ and $\sbm{z_3\\z_4}$ are co-linear. This implies that when $\sbm{z_1(t) \\ z_2(t)} \neq \sbm{z_1^* \\ z_2^*}$
	\begin{equation}
	\begin{aligned}
	& \frac{\dd}{\dd t}\langle \nabla J(z_1,z_2),\sbm{z_3\\z_4}\rangle
\\ & = \bbm{z_3&z_4}k_1\underbrace{\langle \nabla J(z_1,z_2),\sbm{z_3\\z_4} \rangle}_{=0} \nabla^2 J(z_1,z_2) \sbm{z_3\\z_4}
	\\ & \qquad + \nabla J(z_1,z_2)\bbm{z_4 \\ -z_3} k_2\left\langle\nabla^{\perp} J(z_1,z_2),\bbm{z_3\\z_4} \right\rangle \\
	& = \nabla J(z_1,z_2)\bbm{z_4\\-z_3} k_2 \underbrace{\left\langle\nabla^{\perp} J(z_1(t),z_2(t)),\bbm{z_3(t)\\z_4(t)}\right\rangle}_{\neq 0} \\
	& \neq 0,
	\end{aligned}
	\end{equation}
	where the last relation is also due to the fact that $\nabla J(z_1,z_2) \sbm{z_4\\-z_3}\neq 0$ since $\sbm{z_4\\-z_3}$ is perpendicular to $\sbm{z_3\\z_4}$ so that the vectors $\nabla J(z_1,z_2)$ and $\sbm{z_4\\-z_3}$ are co-linear. This is a contradiction, as $\frac{\dd}{\dd t}\langle \nabla J(z_1,z_2),\sbm{z_3\\z_4}\rangle = 0$ for all $z\in\Omega$. Indeed, when $\sbm{z_1(t) \\ z_2(t)} = \sbm{z_1^* \\ z_2^*}$, it follows that
 	\begin{equation*}
 	\frac{\dd}{\dd t}\langle \nabla J(z_1,z_2),\sbm{z_3\\z_4}\rangle = 0
	\end{equation*}
	i.e., $z(t)$ remains always in the invariant set $\Omega$.

	Thus the invariant set $\Omega$, where $\nabla J(z_1(t),z_2(t))\sbm{z_3(t)\\z_4(t)}=0$ holds for all $t\geq 0$, satisfies
	\[
	\Omega \subset \{z\in \rline^2\times S^1 \ | \ z = \bbm{z_1^* & z_2^* & \cos(\theta) & \sin(\theta)}^T, \theta\in \rline \},
	\]
	where $S^1$ defines the unit circle. Therefore, by the La-Salle invariance principle, we have that all bounded solutions $z(t)\to \Omega$ as $t\to \infty$, in particular, $\sbm{z_1(t)\\z_2(t)}$ converges to $\sbm{z_1^*\\z_2^*}$ as claimed.
	
	The global attractivity of $\sbm{z_1^*\\z_2^*}$ follows directly from the radial unboundedness of $V$ and the previous analysis applies vis-\`a-vis.
\end{proof}\vspace{0.2cm}

As can be seen in the proof of Proposition \ref{prop_1}, the radial unboundedness of $J$ is required to guarantee the forward completeness and boundedness of the closed-loop systems trajectories for any initial conditions. The assumption of radial unboundedness of $J$ in Proposition \ref{prop_1} can be relaxed to a locally strictly concave function provided that we can guarantee these boundedness properties of the trajectories for some initial conditions in the neighborhood of the maxima. In this case, the asymptotic convergence follows the same arguments of La-Salle invariance principle.

\subsection{Extremum Seeking Control based approach}\label{sec:extremum_seeking}

The source-seeking controller that we designed in Subsection \ref{sec:gradient_ascent} relies upon the availability of the airflow vector measurement, in which the wind direction is in line with the gradient of airflow strength $\nabla J$. In the case of a partial sensor failure, the real-time measurement of the airflow vector may no longer be available. In this situation, we need a fault tolerant mechanism for the source-seeking controller based only on the remaining working sensor that can still provide the information on the potential function $J$, instead of the gradient $\nabla J$. In this sub-section, we design a complementary controller to the one developed in Subsection \ref{sec:gradient_ascent} where we combine the projected gradient-ascent control law with the Extremum Seeking Control (ESC) approach.

\begin{figure}[htbp]
	\centering
	\includegraphics[width=8cm]{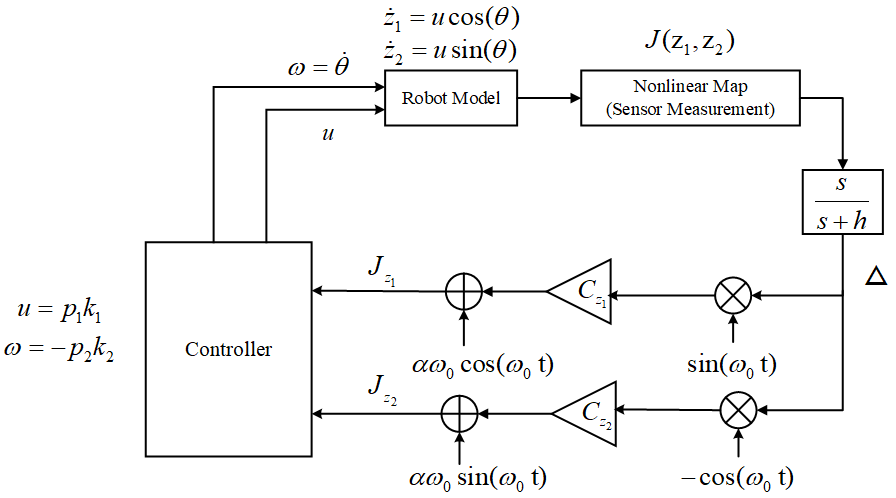}
	\caption{Block diagram of the proposed Extremum Seeking Control-based projected gradient-ascent control law for solving the source-seeking problem based only on the measurement of the potential function $J$. In this diagram, $p_1=  \left\langle \protect\sbm{J_{z_1} \protect\\ J_{z_2}},  \protect\sbm{\cos(\theta)\protect\\ \sin(\theta)}\right\rangle$ and $p_2=  \left\langle  \protect\sbm{-J_{z_2}\protect \\ J_{z_1}},  \protect\sbm{\cos(\theta)\protect \\ \sin(\theta)} \right\rangle$ where $ \protect\sbm{J_{z_1}\protect\\J_{z_2}}$ is the approximation of $\nabla J$ and $ \protect\sbm{-J_{z_2}\protect \\ J_{z_1}}$ is the approximation of $\nabla J^\perp$.}
	\label{Model of Extremum Seeking Control}
\end{figure}

ESC is an averaging control method for steering the systems towards the extremum point of a scalar function that represents a cost or potential function and is available through a sensor system. Roughly speaking, the ESC is done by introducing dither signals (typically, sinusoidal signals) into the measurement signal so that it is able to evaluate the potential function in the neighborhood region and to subsequently provide an approximation of the potential gradient. The approximate gradient information can then be used to steer the systems towards the extremum of the potential function.

Figure \ref{Model of Extremum Seeking Control} shows a block diagram of our proposed ESC-based projected gradient-ascent control law. As shown in this figure, perturbation signals $a\omega_0 \cos(\omega_0 t)$ and $a\omega_0 \sin(\omega_0 t)$ are firstly introduced to the gradient-ascent control law to excite the system. Subsequently, the measurement signals from sensor systems are filtered through a first-order high-pass filter with a cut-off frequency $h$ which filters out the DC-component.
The filtered signal $\Delta(t)$ is then modulated by a dither signal $\sbm{C_{z_1}\sin(\omega_0t)\\ C_{z_2}\cos(\omega_0t)}$,
where the constants $C_{z_1}$ and $C_{z_2}$ are design parameters. The approximated gradient of the potential function $J$ is then given by
\begin{align}
\label{eq:hat_J_z1}\frac{\partial \hat J}{\partial z_1} & := J_{z1}=C_{z_1}\Delta\sin(\omega_0 t)+a\omega_0 \cos(\omega_0  t),      \\ \label{eq:hat_J_z2} \frac{\partial \hat J}{\partial z_2} & :=J_{z2}=-C_{z_2}\Delta \cos(\omega_0 t)+a\omega_0 \sin(\omega_0  t).
\end{align}
Using $J_{z1}$ and $J_{z2}$ in \eqref{eq:hat_J_z1} and \eqref{eq:hat_J_z2}, respectively, the approximated gradient vector $\widehat{\nabla J}$ and its orthogonal vector $\widehat{\nabla J}^\perp$ is given by
\begin{equation}\label{eq:widehat_J}
\left.\begin{matrix}
\widehat{\nabla J}=\begin{bmatrix}
J_{z_1} & J_{z_2}
\end{bmatrix}^T \\
\widehat{\nabla J}^{\perp }=\begin{bmatrix}
-J_{z_2} & J_{z_1}
\end{bmatrix}^T
\end{matrix}\right.
\end{equation}

Using the approximated gradient  above and the projected gradient-ascent control law in \eqref{eq:control_laws}, the ESC-based projected gradient-ascent control law can be given as follows
\begin{equation}\label{eq:ESC_control_laws}
\left. \begin{array}{rl}
u & = k_1 \left\langle \vec{v}(\theta),\widehat{\nabla J}(z_1,z_2) \right\rangle \\
\omega & = -k_2 \left\langle \vec{v}(\theta),\widehat{\nabla J}(z_1,z_2)^\perp \right\rangle,
\end{array} \right.
\end{equation}
where $\vec{v}(\theta)$ is as in \eqref{eq:v_theta}.

In the following proposition, we show that for a quadratic local potential function, the average trajectory of $z_1$ and $z_2$ will converge to the local extremum point. For convenience, we define the average trajectories of $z_i$, $i=1,2,3,4$, by
\begin{align*}
z_{i,\text{avg}}(t) & = \frac{\omega_0}{2k\pi}\int_t^{t+\frac{2k\pi}{\omega_0}}z_i(\sigma) \dd\sigma,
\end{align*}
where $\omega_0$ is the frequency of the dither signal that is typically a high-frequency that allows for a time-scale separation with $k>0$ be a number of periods of dither signals that can be taken into account without affecting the slow time-scale dynamics.

\begin{proposition}\label{prop_2}
Consider the unicycle system in \eqref{eq:unicycle_model} and assume that the potential function $J$ is quadratic function given by
\begin{equation}\label{eq:quadratic_J}
J(z_1,z_2)=J^{*}-c_{1}(z_1-z_1^*)^2-c_{2}(z_2-z_2^*)^2,
\end{equation}
where $J^*$ is the local maximum, $c_1, c_2$ are unknown positive constants and $\sbm{z_1^*\\z_2^*}$ is the global maximizer.
Then, for any positive gains $k_1, k_2, C_{z1}, C_{z2}, a>0$ and sufficiently large $\omega_0$ the ESC-based projected gradient-ascent control law in  \eqref{eq:ESC_control_laws} with $\widehat{\nabla J}$ and $\widehat{\nabla J}^\perp$ be as in \eqref{eq:widehat_J} guarantees that  the average trajectories $\sbm{ z_{1,\text{avg}}\\z_{2,\text{avg}}}$ are bounded and
\begin{equation}\label{eq:ESC_convergence}
\lim_{t\to\infty} \left\|\bbm{z_{1,\text{avg}}(t)-z_1^*\\z_{2,\text{avg}}(t)-z_2^*}\right\| = 0
\end{equation}
holds for all initial conditions in the neighborhood of $\sbm{z_1^*\\z_2^*}$.
\end{proposition}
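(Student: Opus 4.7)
The strategy is to use averaging theory to reduce the ESC-based closed-loop to a slow dynamics that coincides (up to positive scaling constants) with the projected gradient-ascent dynamics analyzed in Proposition~\ref{prop_1}, and then to leverage Proposition~\ref{prop_1} together with a standard averaging theorem (e.g., \cite[Theorem~10.4]{Khalil}) to conclude convergence of the average trajectory.

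\textbf{Step 1 (Write the closed loop and extract the slow/fast structure).} I would first substitute the control law \eqref{eq:ESC_control_laws} with the definitions \eqref{eq:hat_J_z1}--\eqref{eq:hat_J_z2} of $J_{z_1},J_{z_2}$ and the orthogonal pair in \eqref{eq:widehat_J} into the unicycle dynamics \eqref{eq:unicycle_model}, augmented with the high-pass filter state producing $\Delta(t)$. This yields a time-varying system of the form $\dot{\xi}=\omega_0 f(\omega_0 t,\xi,1/\omega_0)$ after an appropriate rescaling of the perturbation amplitude $a\omega_0$. The key observation is that the $T$-periodic explicit time dependence enters only through the dither signals $\sin(\omega_0 t),\cos(\omega_0 t)$, setting up the classical averaging framework with small parameter $\varepsilon=1/\omega_0$.

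\textbf{Step 2 (Compute the averaged dynamics).} I would compute $\frac{1}{T}\int_0^T f(\tau,\xi,0)\,d\tau$, using the quadratic form \eqref{eq:quadratic_J} to evaluate $\Delta(t)$ to leading order. For the high-pass filter with cutoff $h$, to leading order in $1/\omega_0$ the filter passes the oscillatory components and removes the DC part $J^*-c_1(z_{1,\text{avg}}-z_1^*)^2-c_2(z_{2,\text{avg}}-z_2^*)^2$, so $\Delta$ inherits cross-terms such as $-2c_1(z_{1,\text{avg}}-z_1^*)\cdot(\text{oscillation in }z_1)$ and similarly in $z_2$. Averaging the products $\Delta\sin(\omega_0 t)$ and $\Delta\cos(\omega_0 t)$ against the dither extracts, via $\langle\sin^2\rangle=\langle\cos^2\rangle=\tfrac12$ and $\langle\sin\cos\rangle=0$, terms proportional to $-2c_1(z_{1,\text{avg}}-z_1^*)=\partial J/\partial z_1|_{\text{avg}}$ and $-2c_2(z_{2,\text{avg}}-z_2^*)=\partial J/\partial z_2|_{\text{avg}}$. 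After bookkeeping the constants $a,C_{z_1},C_{z_2}$, the averaged system takes the form
\begin{equation*}
\bbm{\dot{z}_{1,\text{avg}}\\\dot{z}_{2,\text{avg}}\\\dot{\theta}_{\text{avg}}}
=
\bbm{\tilde{k}_1\langle \vec{v}(\theta_{\text{avg}}),\nabla J(z_{1,\text{avg}},z_{2,\text{avg}})\rangle\vec{v}(\theta_{\text{avg}})\\
-\tilde{k}_2\langle \vec{v}(\theta_{\text{avg}}),\nabla^\perp J(z_{1,\text{avg}},z_{2,\text{avg}})\rangle}
\end{equation*}
for some effective positive gains $\tilde{k}_1,\tilde{k}_2$ depending on $k_1,k_2,a,C_{z_1},C_{z_2}$. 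This is precisely the closed-loop system \eqref{eq:closed_loop} considered in Proposition~\ref{prop_1}.

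\textbf{Step 3 (Stability of the averaged system and transfer to the actual trajectory).} Since $J$ in \eqref{eq:quadratic_J} is strictly concave with a unique global maximizer at $\sbm{z_1^*\\z_2^*}$, Proposition~\ref{prop_1} (applied locally, as noted in the remark following that proof) yields that the averaged system has $\sbm{z_1^*\\z_2^*}$ as a locally asymptotically stable equilibrium. I would then invoke the general averaging theorem for time-periodic vector fields: if the averaged system has an asymptotically stable equilibrium, then for all sufficiently large $\omega_0$ the original time-varying system admits trajectories whose running average $z_{i,\text{avg}}$ stays in a neighborhood of that equilibrium and satisfies \eqref{eq:ESC_convergence}, provided the initial conditions lie in a neighborhood of $\sbm{z_1^*\\z_2^*}$. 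Boundedness of the averaged trajectories follows from the Lyapunov function $V$ constructed in the proof of Proposition~\ref{prop_1} and standard closeness-of-trajectories estimates in averaging.

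\textbf{Expected main obstacle.} The delicate point is the treatment of the high-pass filter and the justification that $\Delta(t)$ admits the expected leading-order expansion uniformly in $t$: one must argue that on the fast time scale the filter output tracks the oscillatory part of $J(z_1(t),z_2(t))$ up to an $\mathcal{O}(1/\omega_0)$ residual, so that the dither correlations produce a clean gradient term plus a perturbation that vanishes as $\omega_0\to\infty$. The quadratic assumption \eqref{eq:quadratic_J} is essential here because it keeps these cross-terms exactly linear in the state deviation, which is why the proposition is stated for quadratic $J$ rather than for the general strictly concave $J$ used in Proposition~\ref{prop_1}.
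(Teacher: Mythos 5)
Your proposal follows essentially the same route as the paper's appendix proof: formal averaging over one dither period in the fast time scale $\tau=\omega_0 t$ (with the slow states and the washout-filter state frozen over the period), so that the correlations $\Delta\sin(\omega_0 t)$ and $\Delta\cos(\omega_0 t)$ produce terms proportional to $c_1(z_{1,\mathrm{avg}}-z_1^*)$ and $c_2(z_{2,\mathrm{avg}}-z_2^*)$ via $\langle\sin^2\rangle=\langle\cos^2\rangle=\tfrac12$, followed by a Lyapunov/La-Salle argument of exactly the type used for Proposition \ref{prop_1} on the resulting averaged system. One small correction to Step 2: when $C_{z_1}\neq C_{z_2}$ the averaged vector field is not \eqref{eq:closed_loop} with $\nabla J$ and rescaled scalar gains $\tilde k_1,\tilde k_2$, but rather the projected gradient flow of the modified potential $-\tfrac{a}{2}\bigl(C_{z_1}c_1(z_1-z_1^*)^2+C_{z_2}c_2(z_2-z_2^*)^2\bigr)$ — still strictly concave and quadratic with the same maximizer, so your reduction to Proposition \ref{prop_1} goes through after replacing $J$ by this effective potential, whereas the paper instead re-runs the La-Salle contradiction argument directly on the averaged system with the Lyapunov function $\tfrac12\bigl(C_{z1}c_{1}\tilde{z}^2_{1,\mathrm{avg}}+C_{z2}c_{2}\tilde{z}^2_{2,\mathrm{avg}}+z^2_{5}+z^2_{6}\bigr)$.
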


The proof of Proposition \ref{prop_2} can be found in \hyperref[Appendix]{Appendix}.

\section{3D-Printed Flexible Piezoresistive Flow Sensors}\label{sec:3D-sensor}

\subsection{Sensors design and fabrication}
The flow sensors were designed to be in the form of a soft polymeric cantilever (aspect ratio $= 40$) with graphene-based piezoresistors near its fixed end, as described in our previous work \cite{Kamat2020A}, \cite{Kamat2020B}. Airflow causes the high-aspect ratio cantilever to bend due to the flow-induced drag force, generating mechanical strain near its fixed end and consequently changing the electrical resistance of the serpentine graphene piezoresistors (Figure \ref{sensor_Amar}). This change in resistance, which can be either positive (for tensile strains) or negative (for compressive strains), is then calibrated against the airflow velocity to realize a bidirectional airflow sensor. The cantilever sensor used in this work had a length of $20 mm$, a thickness of $0.5 mm$, and a width that varied from $8 mm$ at its fixed end to $20 mm$ at its free end. This `inverted triangle’ geometry was chosen to enhance the drag force-induced bending strains (and consequently the sensitivity) of the cantilever flow sensor. Further, the design featured serpentine microchannels ($0.3 mm$ width $\times$  $0.15 mm$ depth $\times$ $ 15 mm$ total length) near the fixed end of the cantilever. The analytical and numerical models developed in \cite{Kamat2020B} were used as design guidelines for the above parameters.

\begin{figure}[htbp]
	\centering
	\includegraphics[width=8cm]{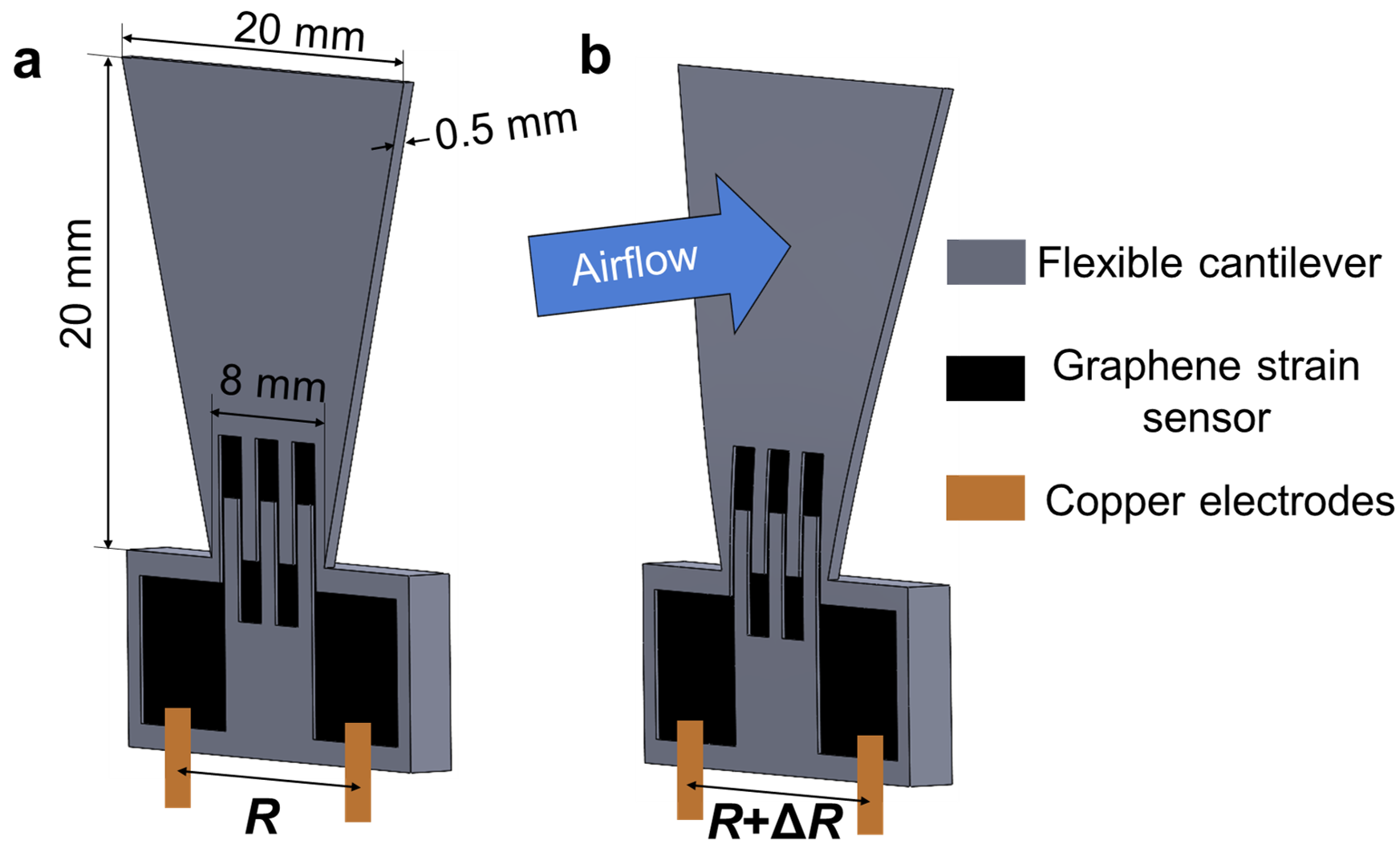}
	\caption{Schematic of the flexible cantilever airflow sensor: a) dimensions of cantilever structure at rest, and b) increase in sensor resistance caused by cantilever bending due to airflow (shown here for the case of tensile strain inducing a positive resistance change).}
	\label{sensor_Amar}
\end{figure}

The cantilever structure was 3D-printed in a commercial low-cost stereolithography (SLA) 3D printer (Form3, Formlabs) using the rubber-like `flexible resin’ (shore hardness  $\sim$ $80A$), a proprietary soft polymer resin offered by Formlabs. This novel approach leveraged recent developments in the 3D printing of soft materials and simplified our fabrication process compared to earlier work \cite{Kamat2020A}, \cite{Kamat2020B}, where 3D printing of a mold and casting of a soft polymer cantilever were performed as two separate steps. The cantilever was printed such that the build direction was parallel to its length to ensure minimal support removal after printing. Post-printing, the flexible cantilever was washed in isopropyl alcohol for $10$ minutes (Formwash, Formlabs) followed by UV-curing at $ \SI{60}{\celsius} $ for $15$ minutes (Formcure, Formlabs) to improve its mechanical properties. A diluted solution of conductive graphene nanoplatelets dispersion (Graphene Supermarket) was then drop casted into the serpentine microchannels and gently annealed ($\SI{100}{\celsius}$ for $1$ hour) to realize the flexible piezoresistive airflow sensor. Finally, the cantilever was mounted at the end of a glass slide and the graphene strain sensor was connected via conductive silver epoxy to copper tape electrodes, which served as connection points of the airflow sensor to the Wheatstone bridge circuit. The resulting sensors (nominal resistance $\sim$  $40-70 k\Omega$) were tested in a custom-built benchtop wind tunnel ($40mm$ $\times$ $40mm$ test section). We tested their piezoresistive response (e.g., the resistance change) to the airflow velocities in the range ($0-5 m/s$) that is of relevance to our source seeking application with the mobile robot. Several batches of sensors were produced and showed consistent performance and repeatability. In Figure \ref{sensor calibration}, a representative calibration result is presented where the sensor shows good sensitivity for low wind velocities on the order of $1m/s$. The Figure also shows the calibration curve that was calculated based on the average of the resulting hysteresis loop from the piezoresistive sensors. We will use this calibration curve later in the experimental setup to fit a linear curve for estimating the wind velocity in the direction of compression or tension. Despite the presence of strong nonlinearity hysteresis behaviour, we will show later in our experiments that our proposed control laws are still able to seek the source succesfully and show that they are robust against such nonlinearities. This is due to the monotonicity property of the sensors (as shown in the figure) that do not alter the extremum point of the potential function. %  {\color{blue} Bayu is up to here} thus validating our choice of design parameters and material selection for the sensor.

We note that the advantages of using such a sensor design and fabrication workflow include simplicity of operation, facile and `cleanroom-free’ fabrication, rapid prototyping (typical fabrication times $\sim$ $2-3$ hours for a batch of multiple sensors) during iterative design, and high sensitivity to airflow ($\sim$  $5$ $k\Omega/ms^{-1}$) due to the combination of a flexible substrate and high-gauge factor graphene sensing elements.

\begin{figure}[htbp]
	\centering
			\centering		
			\includegraphics[width=0.4\textwidth]{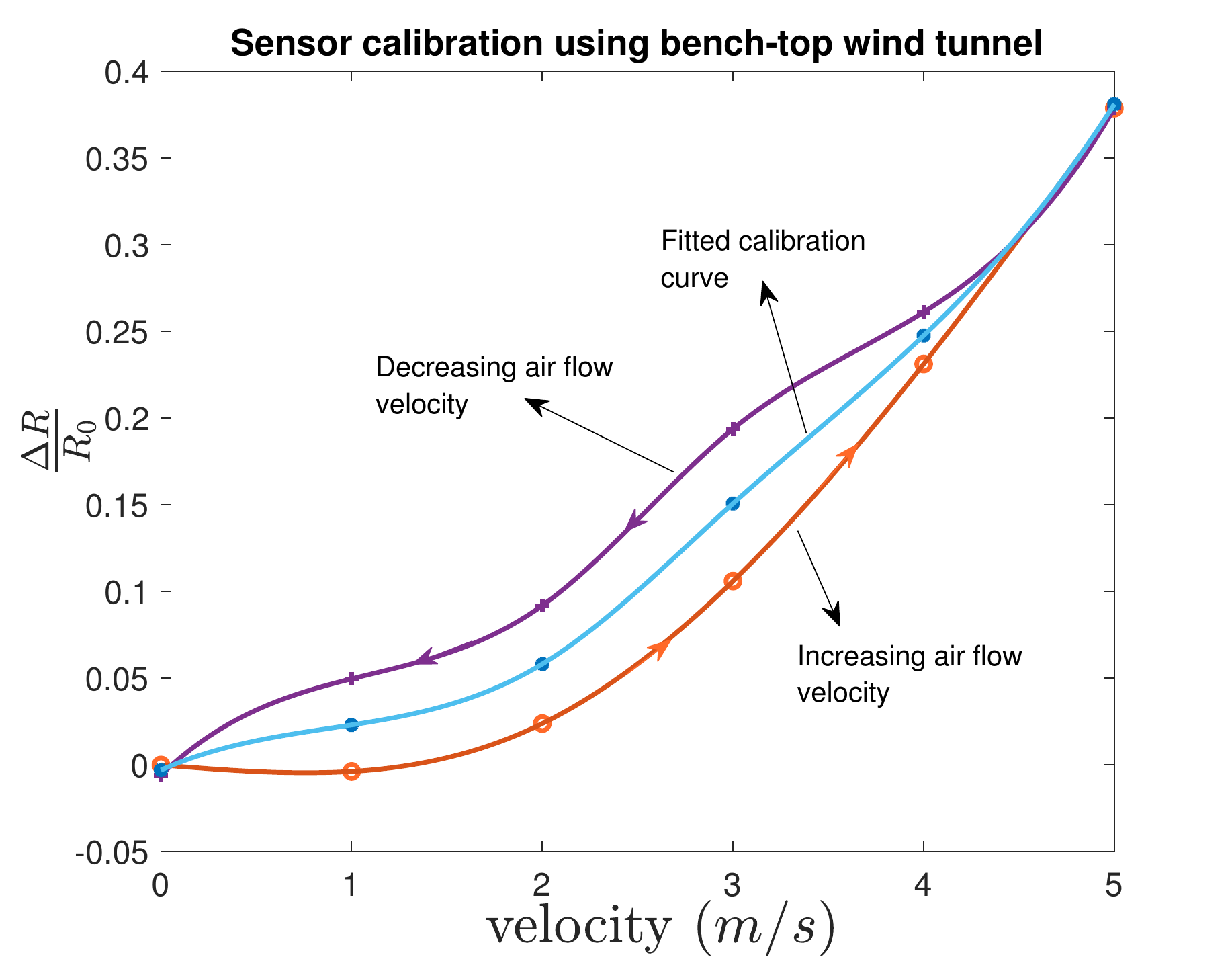}

	\centering
	\caption{Calibration result of a 3D-printed flexible piezoresistive flow sensor using a benchtop wind tunnel. The orange and purple lines show the sensor's response to increasing and decreasing air flow, respectively, where the wind velocity is plotted against the resistance change $\left(\frac{\Delta R}{R_0}\right)$ in the sensors with a cyclic wind load and with nominal resistance $R_0$.
	The calibration curve is shown in blue which is computed based on the average of the cyclic load and gives an approximation of the anhysteresis curve. }
	\label{sensor calibration}
\end{figure}

\subsection{Measurement of local potential gradient}\label{sec:sensor_reading}
In this subsection, we describe the approximation of local potential gradient based on the directional air flow measurement from four sensors placed along the longitudinal and lateral axis (see also Figure \ref{Robot Model with four airflow sensors}).
We assume that for any given airflow direction, it will interact dominantly with, at least, a pair of sensors which are dependent on the wind direction. For instance, in Figure \ref{Robot Model with four airflow sensors}, the sensor pair $1$ and $2$ provide a reliable measurement on the wind flow. Based on this assumption and using the robot-base frame as shown in Figure \ref{Robot Model with four airflow sensors}, we need only to determine the maximum values of the sensor measurements in the longitudinal $z_{1r}$ and lateral $z_{2r}$ axes for obtaining the vector of airflow velocity, e.g.
\begin{align}\label{eq:sensor1}
  S_{z_{1r}} = S_{\mathrm{argmax}_{i \in \{1,3\}}{|S_{i}|}} \\
\label{eq:sensor2}
S_{z_{2r}} = S_{\mathrm{argmax}_{i \in \{2,4\}}{|S_{i}|}}
\end{align}
where $S_{z_{1r}}$ and $S_{z_{2r}}$ are projection of the airflow velocity in the longitudinal and in the lateral direction, respectively, and $S_1, S_2, S_3$ and $S_4$ are the approximated wind flow strength  normal to the flexible cantilever in each sensors. The airflow velocity $v$ in the local coordinate frame is thus given by \[
v=\bbm{S_{z_{1r}}& S_{z_{2r}}}^T.
\]
Correspondingly, if the maximum value comes from sensor 3 or 4, the airflow velocities are considered to be a negative value, as they point into the negative direction of the $z_{1r}$ and $z_{2r}$ axes, respectively. In this paper, we assume that the total airflow strength $S_r$ is related to the projected airflow strength onto a pair of sensors by a trigonometric ratio
\begin{equation}\label{eq:Sr}
S^2_{z_{1r}}+S^2_{z_{2r}}=S^2_r.
\end{equation}

Based on the airflow velocity $v$, the local potential gradient is approximated by
\begin{equation}
\triangledown J(z_1,z_2)=\begin{bmatrix}
J_{z_1}\\
J_{z_2}
\end{bmatrix}\\=\frac{\begin{bmatrix}
S_{z_{1r}} & S_{z_{2r}}
\end{bmatrix}^T}{\begin{Vmatrix}
S_{z_{1r}}&S_{z_{2r}}
\end{Vmatrix}^T}\Delta J
\end{equation}
\begin{equation}
\triangledown ^{\perp} J(z_1,z_2)=\begin{bmatrix}
J_{z_2}\\
-J_{z_1}
\end{bmatrix},
\end{equation}
where $\Delta J$ is the magnitude of the gradient, approximated based on the use of dirty derivative using the current and past magnitude of $S_r$ as in \eqref{eq:Sr} as follows
\[
\Delta J(t_k) = \frac{S_r(t_k)-S_r(t_k-1)}{\left\|\bbm{z_1(t_k)-z_1(t_{k-1})\\z_2(t_k)-z_2(t_{k-1})}\right\|}
\]
where $t_k$ denotes the current discrete-time.

\section{Simulation Results}

In this section, we provide simulation results to validate our proposed control laws to solve the source seeking problem. Throughout the section, we consider
a stationary source located at $(0,0)$ where the wind field is given either by a quadratic function $J(z_1,z_2)=-z^2_1-z^2_2$ or by a non-quadratic one $J(z_1,z_2)=-z^2_1-(z^2_2-z^3_1)^2$,or $J(z_1,z_2)=-z^2_1-(z_2-z^2_1)^2$.

\subsection{Projected gradient-ascent control law}
For numerical validation of the projected gradient-ascent control law in \eqref{eq:control_laws}, we evaluate a number of different values of parameters $k_1$ and $k_2$.
 For the first case, where we consider the  quadratic function $J(z_1,z_2)=-z^2_1-z^2_2$ as the wind field, we consider four randomly chosen initial positions $\sbm{z_{10}\\z_{20} \\ \theta_0}$ of the mobile robot:
\[
\bbm{z_{10}\\z_{20} \\ \theta_0} \in \left\{\bbm{4\\3\\30^o}, \bbm{-3\\3\\45^o}, \bbm{-2\\-4\\60^o}, \bbm{3\\-2\\90^o}\right\}.
\]
and the resulting trajectories of robots using the control law \eqref{eq:control_laws} are shown in Figure \ref{Trajectories of robot from four different initial positions} which is plotted using the global frame, for clarity. The simulation results confirm the theoretical results in Proposition \ref{prop_1}, where the extremum point of $J$, which is $(0,0)$, is (globally) attractive.

%%%%%%%%%%%%%%%%%%%%%%%%%%%%%%%REVISE-BB-Simulation%%%%%%%%%%%%%%%%%%%%%%%%%%%%%%%%%%%%%%
\begin{figure}[htbp]
	\centering
	
	\subfigure[$ t=1000$]{
		\begin{minipage}[t]{0.3\linewidth}
			\centering			\includegraphics[width=1\textwidth]{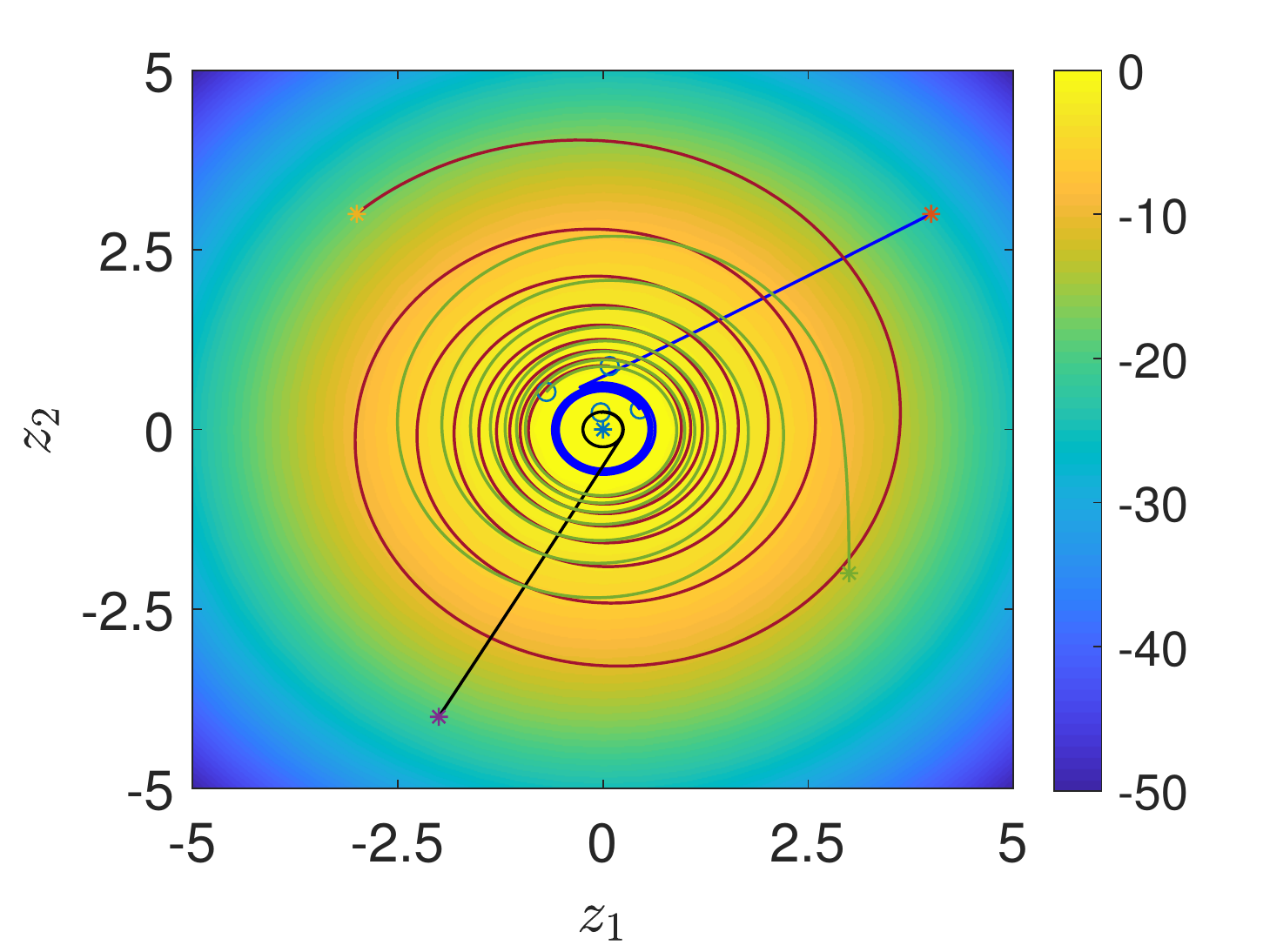}
			%\caption{fig1}
		\end{minipage}%
	}%
	\subfigure[ $t=100$]{
		\begin{minipage}[t]{0.3\linewidth}
			\centering			\includegraphics[width=1\textwidth]{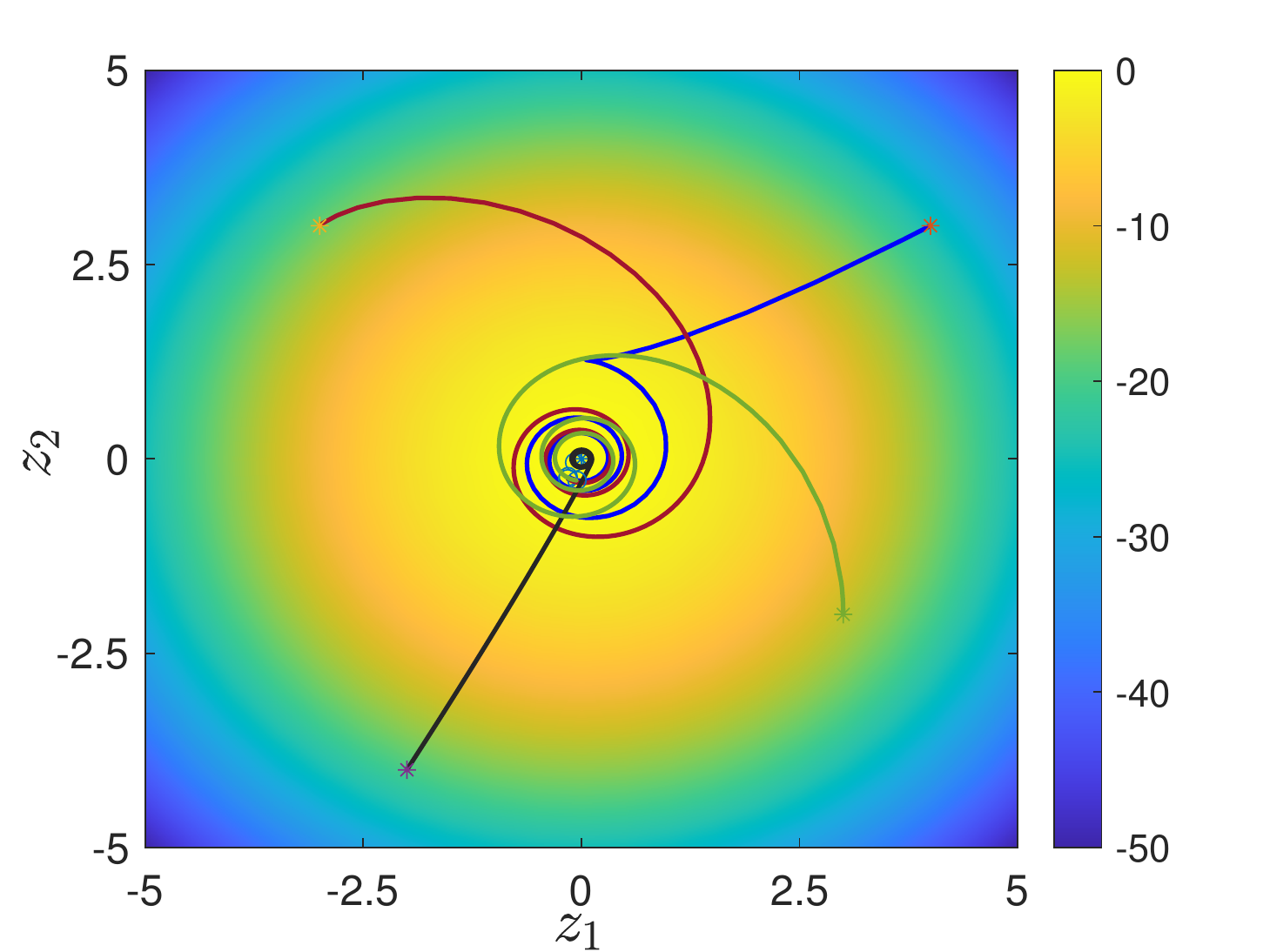}
			%\caption{fig1}
		\end{minipage}%
	}%
	\subfigure[ $t=50$]{
		\begin{minipage}[t]{0.3\linewidth}
			\centering			\includegraphics[width=1\textwidth]{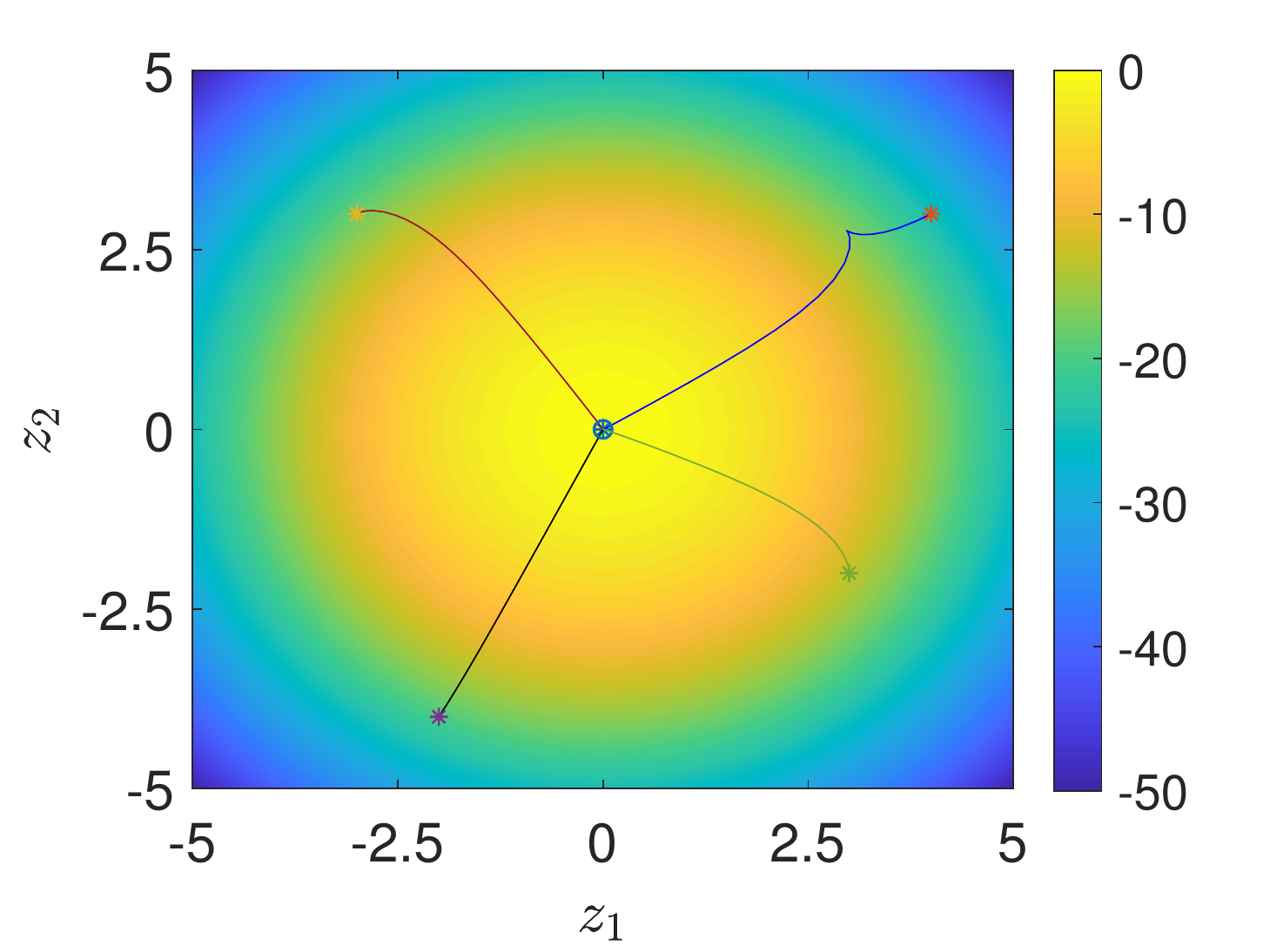}
			%\caption{fig1}
		\end{minipage}%
	}%
	\centering
	\caption{Simulation results of the closed-loop system using the projected gradient-ascent control law \eqref{eq:control_laws} from four randomly chosen initial conditions, where the potential function $J$ is given by a quadratic function $J(z_1,z_2)=-z^2_1-z^2_2$, and the longitudinal velocity gain $k_1$ and angular velocity gain $k_2$ are set to be: (a). $k_1=1, k_2=1, t=1000$; (b). $k_1=1, k_2=10, t=100$; (c). $k_1=0.1, k_2=10, t=50$. The indicated time $t$ beneath each figure gives the total running time of the robot in approaching the source.}
	\label{Trajectories of robot from four different initial positions}
\end{figure}

%%%%%%%%%%%%%%%%%%%%%%%%%%%%%%%%%%%%%%%%%%%%%%%%%%%%%%%%%%%%%%%%%%%

%%%%%%%%%%%%%%%%%%%%%%%%%%%%%%% BOXPLOT %%%%%%%%%%%%%%%%%%%%%%%%%%%%%%%%%%%%%%

We denote $T_s$ as the time to settle from a given initial condition to within $20\%$ of the source location.  The robot is initialized at 100 randomly chosen initial positions, the simulation results in Figure \ref{boxplot of k} shows the effect of increasing longitudinal velocity gain $k_1$ or angular velocity gain $k_2$ on the $T_s$ of robot as another parameter is fixed in quadratic map. As shown in this figure, the robot achieves a faster source seeking motion when small $k_1$ and large $k_2$ are used.

\begin{figure}[htbp]
	\centering

	\subfigure[Fixed $k_2=10 $]{
		\begin{minipage}[t]{0.5\linewidth}
			\centering			\includegraphics[width=1\textwidth]{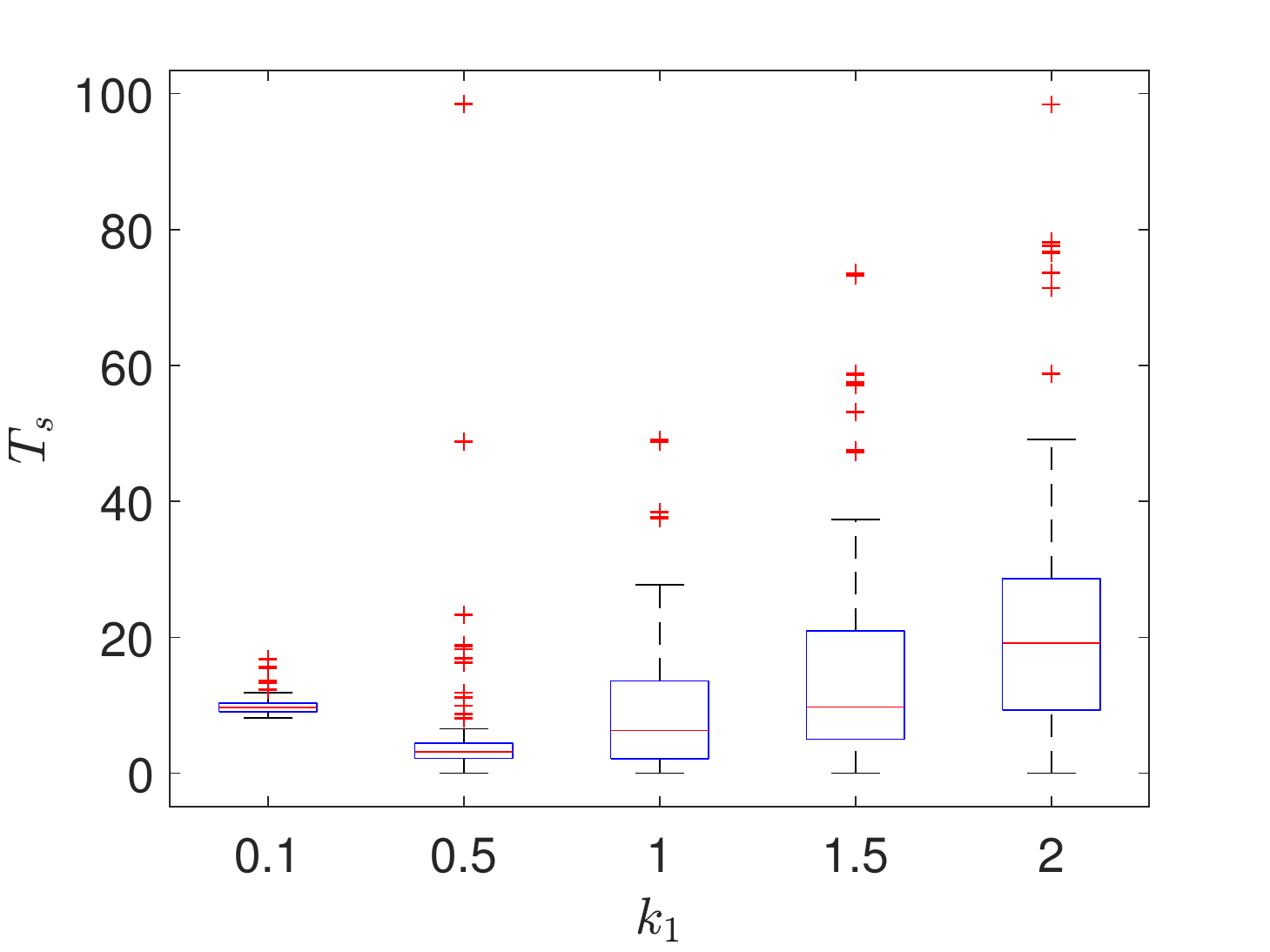}
			%\caption{fig1}
		\end{minipage}%
	}%
	\subfigure[Fixed $k_1=0.5$]{
		\begin{minipage}[t]{0.5\linewidth}
			\centering			\includegraphics[width=1\textwidth]{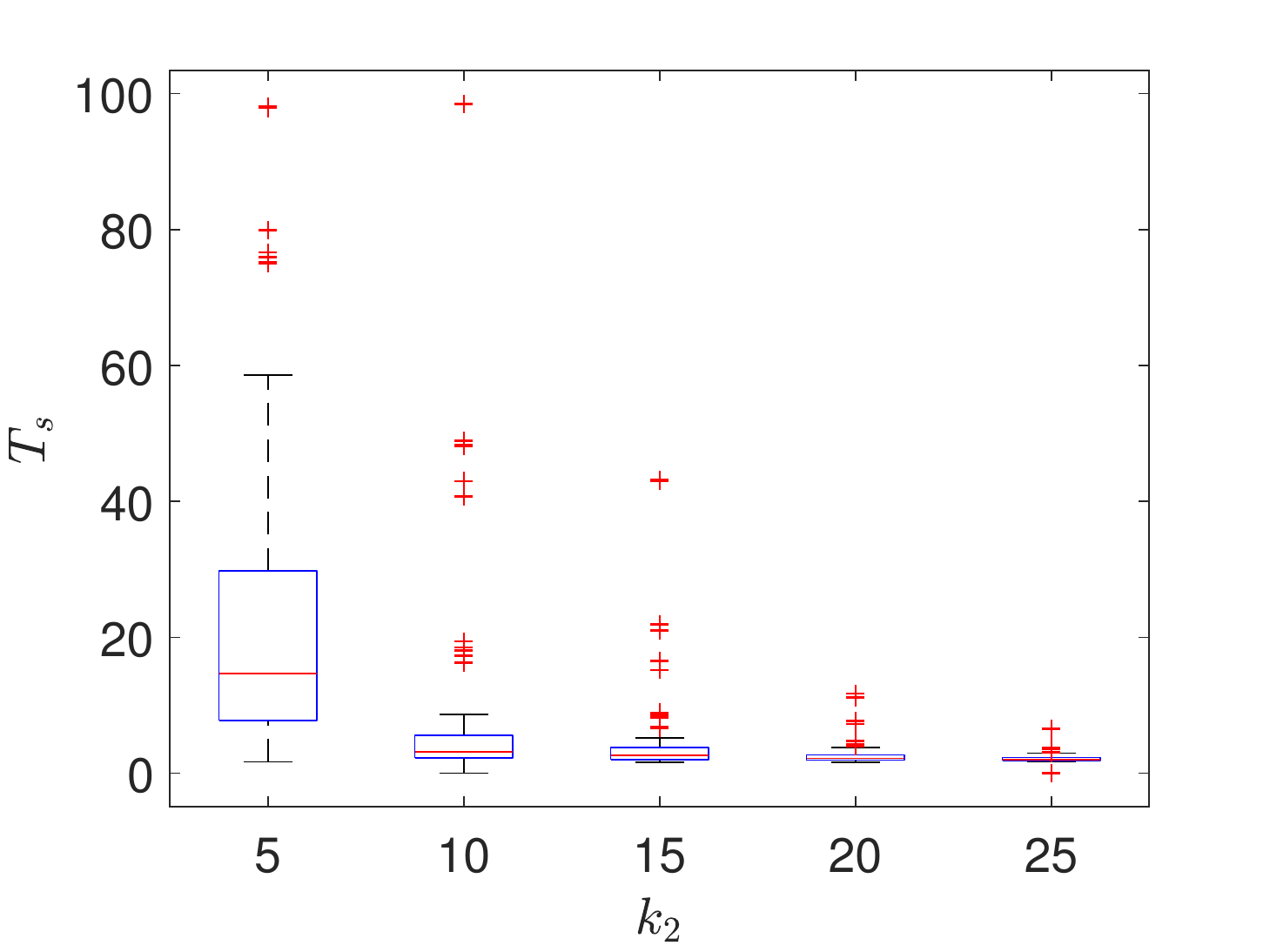}
			%\caption{fig1}
		\end{minipage}%
	}%
	\centering
	\caption{The boxplot of $T_s$ w.r.t the increasing (a). $k_1$, and (b). $k_2$ in the quadratic map, using the projected gradient-ascent control law. $T_s$ denotes the time for robot approaching the final $20\%$ distance from the source, each group of robot is initialized at 100 randomly chosen initial positions.}
	\label{boxplot of k}
\end{figure}
%%%%%%%%%%%%%%%%%%%%%%%%%%%%%%%%%%%%%%%%%%%%%%%%%%%%%%%%%%%%%%%%%%%%%%%%%%%%

Similar results are also obtained for the non-quadratic potential function  $J(z_1,z_2)=-z^2_1-(z^2_2-z^3_1)^2$. Figure \ref{four trajectories in non-quadratic } shows the trajectories of the closed-loop systems when they are randomly initialized at
\[
\bbm{z_{10}\\z_{20} \\ \theta_0} \in \left\{\bbm{0.9\\0.6\\30^o}, \bbm{-0.7\\0.6\\45^o}, \bbm{-0.5\\-0.9\\60^o}, \bbm{0.8\\-0.5\\90^o}\right\}.
\]
As the given non-quadratic function $J$ satisfies the hypotheses in Proposition \ref{prop_1}, namely, strictly concave and twice-differentiable with a global maximum at $(0,0)$, all trajectories converge to the origin.

%%%%%%%%%%%%%%%%%%%% NON-BB%%%%%%%%%%%%%%%%%%%%%%%%%%%%%%%%%%%%%%%%%%%%%%

\begin{figure}[htbp]
	\centering
	
	\subfigure[$t=100$]{
		\begin{minipage}[t]{0.3\linewidth}
			\centering			\includegraphics[width=1\textwidth]{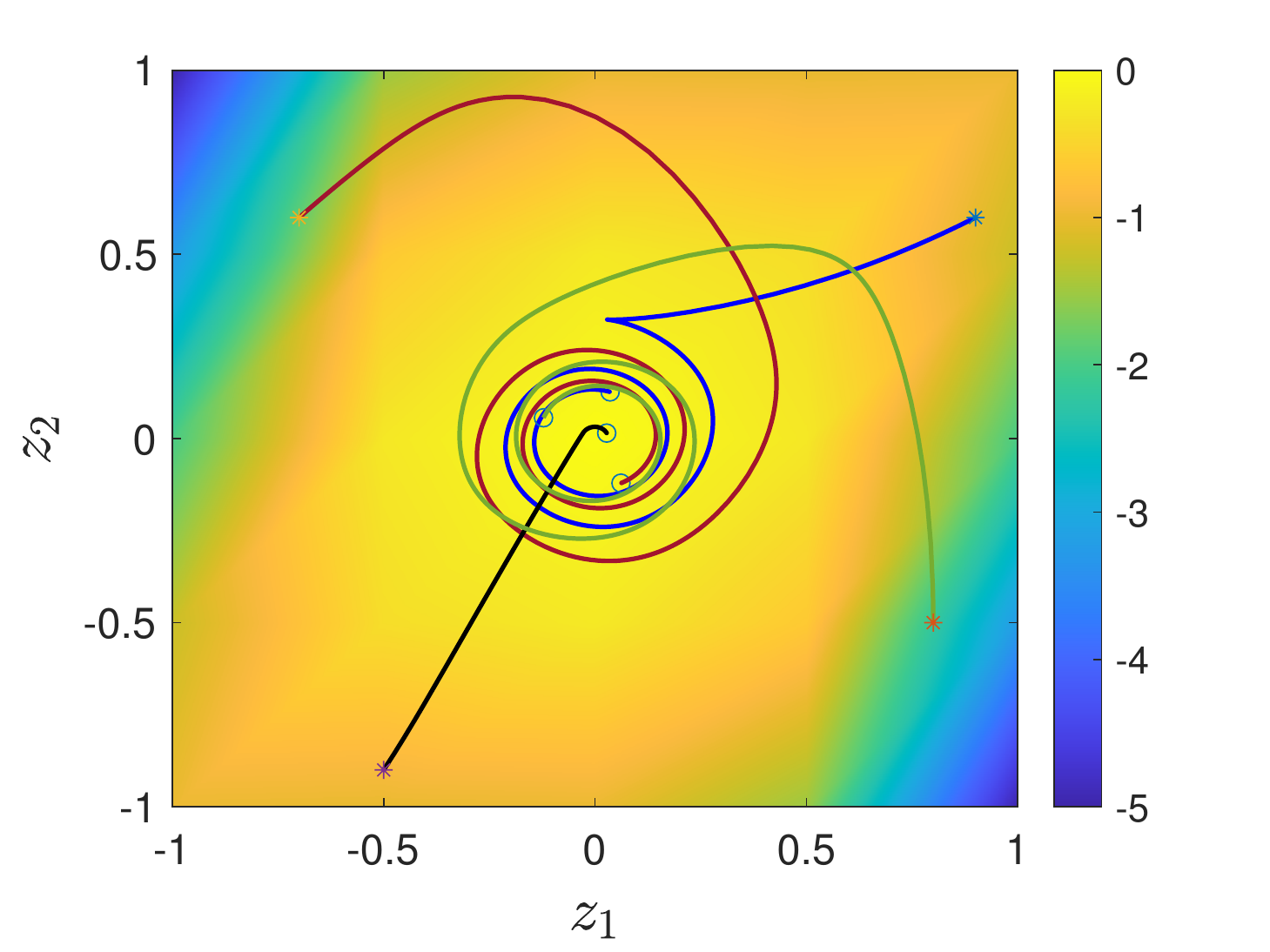}
			%\caption{fig1}
		\end{minipage}%
	}%
	\subfigure[ $ t=50$]{
		\begin{minipage}[t]{0.3\linewidth}
			\centering			\includegraphics[width=1\textwidth]{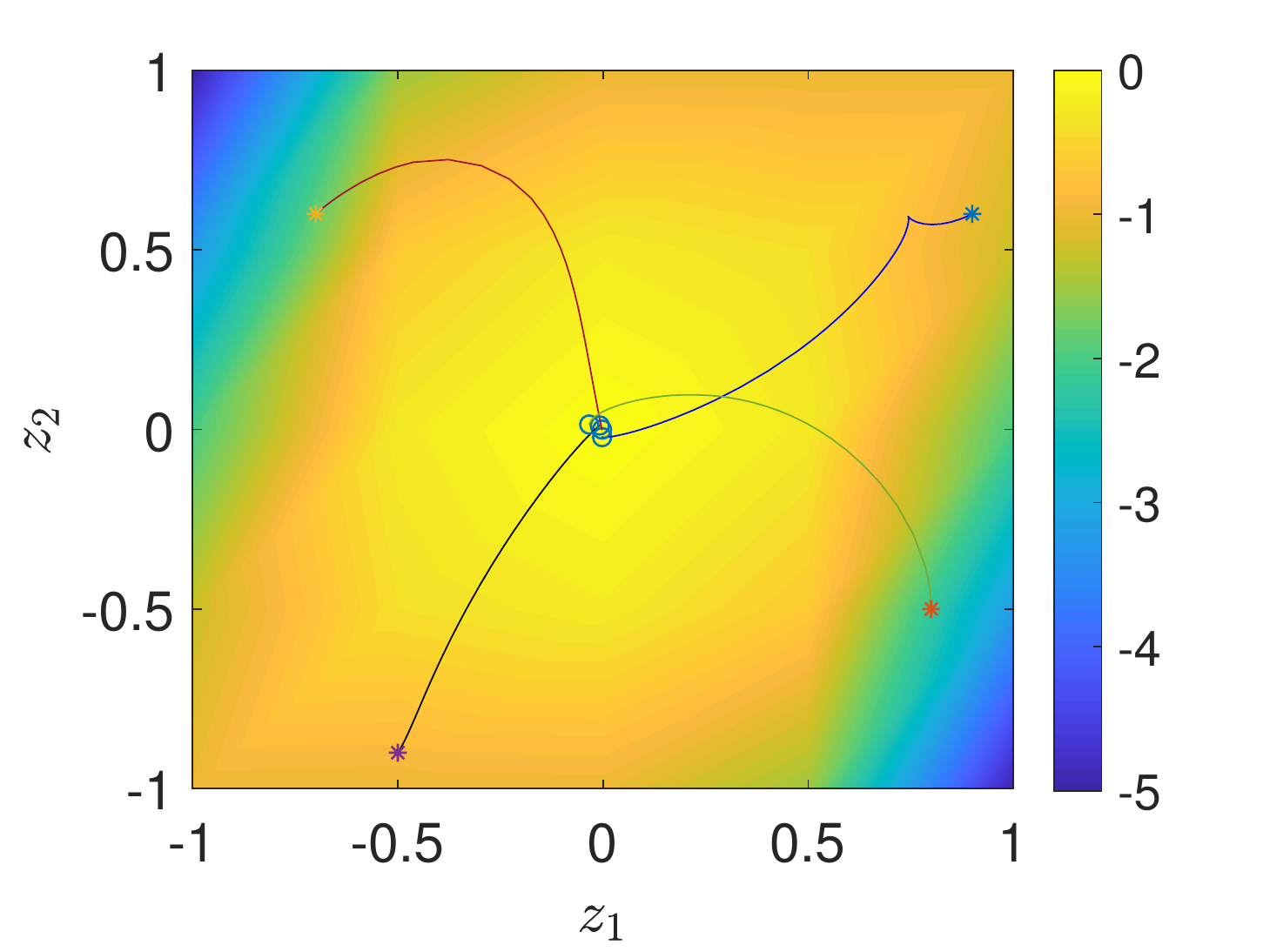}
			%\caption{fig1}
		\end{minipage}%
	}%
	\subfigure[$t=500$]{
		\begin{minipage}[t]{0.3\linewidth}
			\centering			\includegraphics[width=1\textwidth]{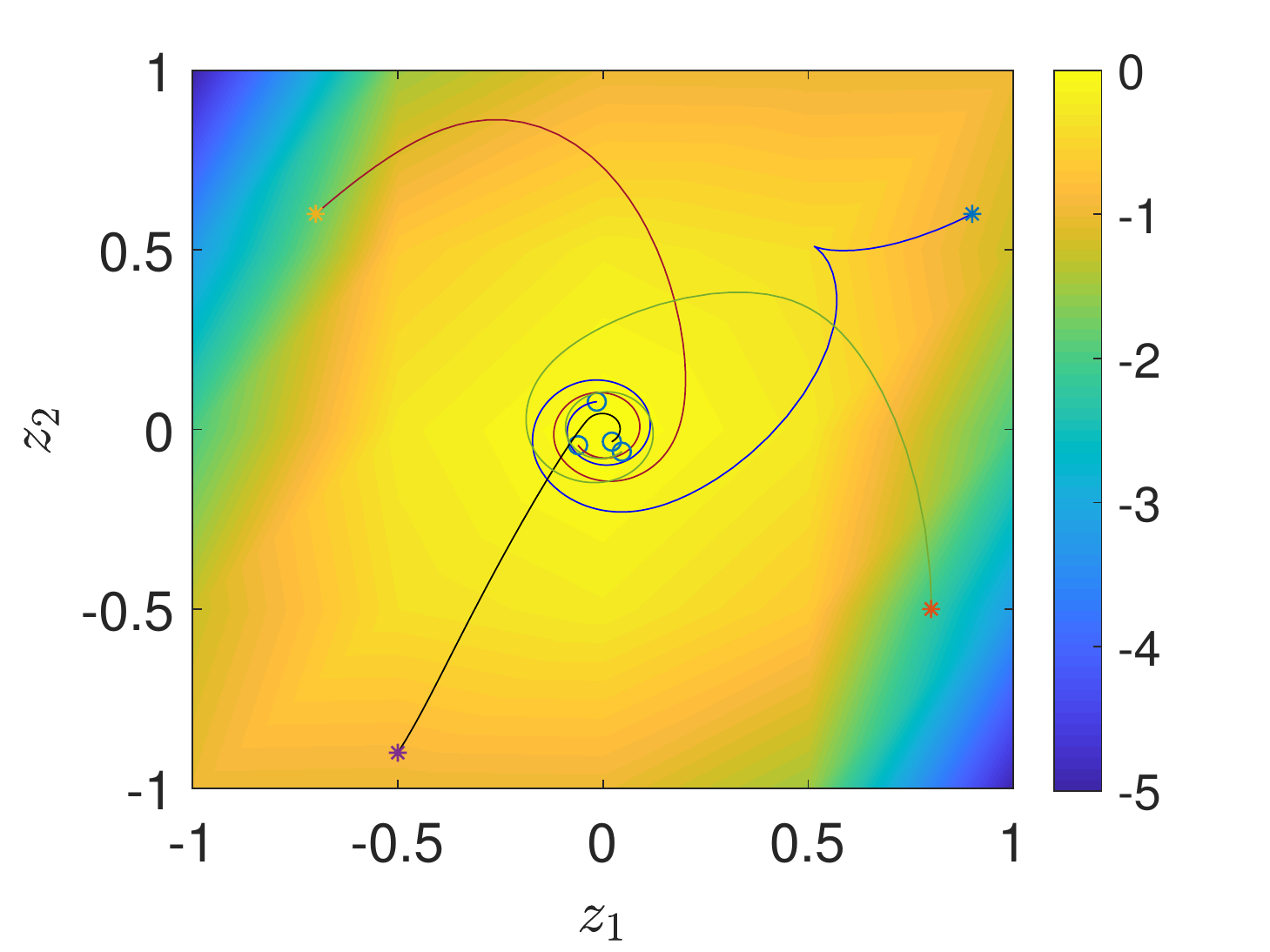}
			%\caption{fig1}
		\end{minipage}%
	}%
	\centering
	\caption{ Simulation results of the closed-loop system using the projected gradient-ascent control law \eqref{eq:control_laws}, where the potential function $J$ is given by a non-quadratic function$J(z_1,z_2)=-z^2_1-(z^2_2-z^3_1)^2$, and the longitudinal velocity gain $k_1$ and angular velocity gain $k_2$ are set to be: (a). $k_1=1, k_2=20$; (b). $k_1=0.1, k_2=20$; (c). $k_1=0.1, k_2=5 $. The indicated time $t$ beneath each figure gives the total running time of the robot in approaching the source.}
	\label{four trajectories in non-quadratic }
\end{figure}

As Figure \ref{four trajectories in non-quadratic } indicated, the effect of gain $k_1$ and $k_2$ is in accordance with the previous observation from Figure \ref{boxplot of k}.

\subsection{ESC-based projected gradient-ascent control law}
In this subsection, we will numerically validate the ESC-based controller that is studied in Proposition \ref{prop_2}. Similar as before, we evaluate its efficacy in dealing with both quadratic as well as non-quadratic potential function. The latter is relevant since we have proven in Proposition \ref{prop_2} only for the quadratic case.

On the other hand, when we consider the quadratic potential function as before, Figure \ref{Trajectories of robot in quadratic map}(a) and (b) show the trajectories of the closed-loop mobile robot with the ESC-based control law in \eqref{eq:ESC_control_laws} where the initial conditions are randomly set at
\[
\bbm{z_{10}\\z_{20} \\ \theta_0} \in \left\{\bbm{-7\\6\\90^o}, \bbm{6\\8\\30^o}\right\},
\]
and the control parameters are given by $a=0.2, h=3,C_{z1}= 0.5, C_{z2}=0.5,\omega_0=10 rad/s, k_1=1, k_2=20$. The sub-figures show that the ESC-based control law solves the source-seeking problem as expected from Proposition \ref{prop_2} for quadratic potential function. On the other hand, when we take the non-quadratic potential function  $J(z_1,z_2)=-z^2_1-(z_2-z^2_1)^2$, Figure \ref{Trajectories of robot in quadratic map}(c) and (d) show that the ESC-based control law with initial conditions set randomly at
\[
\bbm{z_{10}\\z_{20} \\ \theta_0} \in \left\{\bbm{-1\\2\\-60^o}, \bbm{1.5\\2.5\\-90^o}\right\},
\]
and with the control parameters $a=0.2,h=2,C_{z1}= 0.1, C_{z2}=0.1,\omega_0=10 rad/s, k_1=1, k_2=20$, is still able to seek the source reliably.

\begin{figure}[htbp]
	\centering
	
	\subfigure[Initial position $\protect\sbm{-7 \protect\\6 \protect\\90^o}$]{
		\begin{minipage}[t]{0.5\linewidth}
			\centering			\includegraphics[width=1\textwidth]{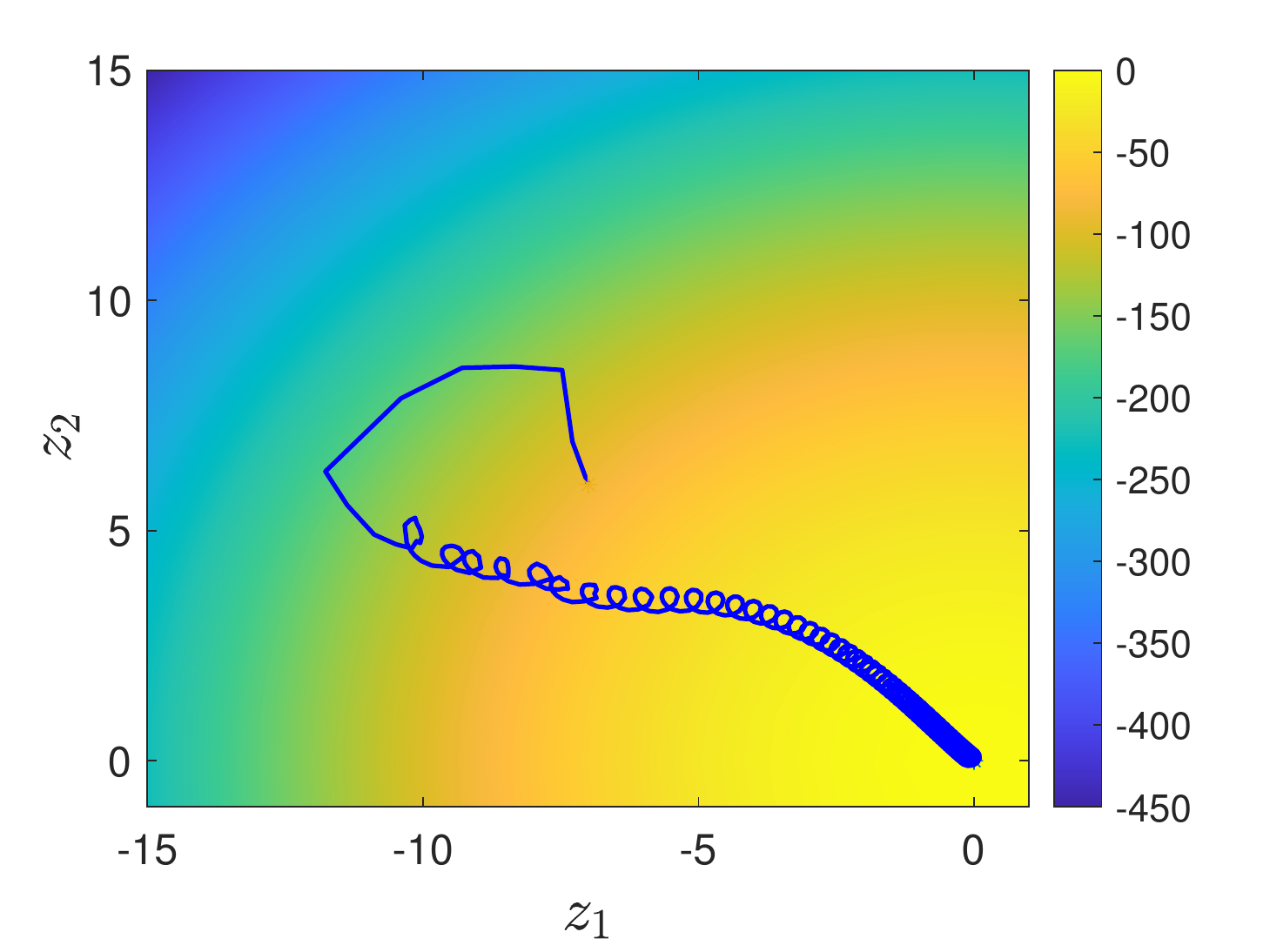}
			%\caption{fig1}
		\end{minipage}%
	}%
	\subfigure[Initial position $\protect\sbm{6 \protect\\8 \protect\\30^o}$]{
		\begin{minipage}[t]{0.5\linewidth}
			\centering
		\includegraphics[width=1\textwidth]{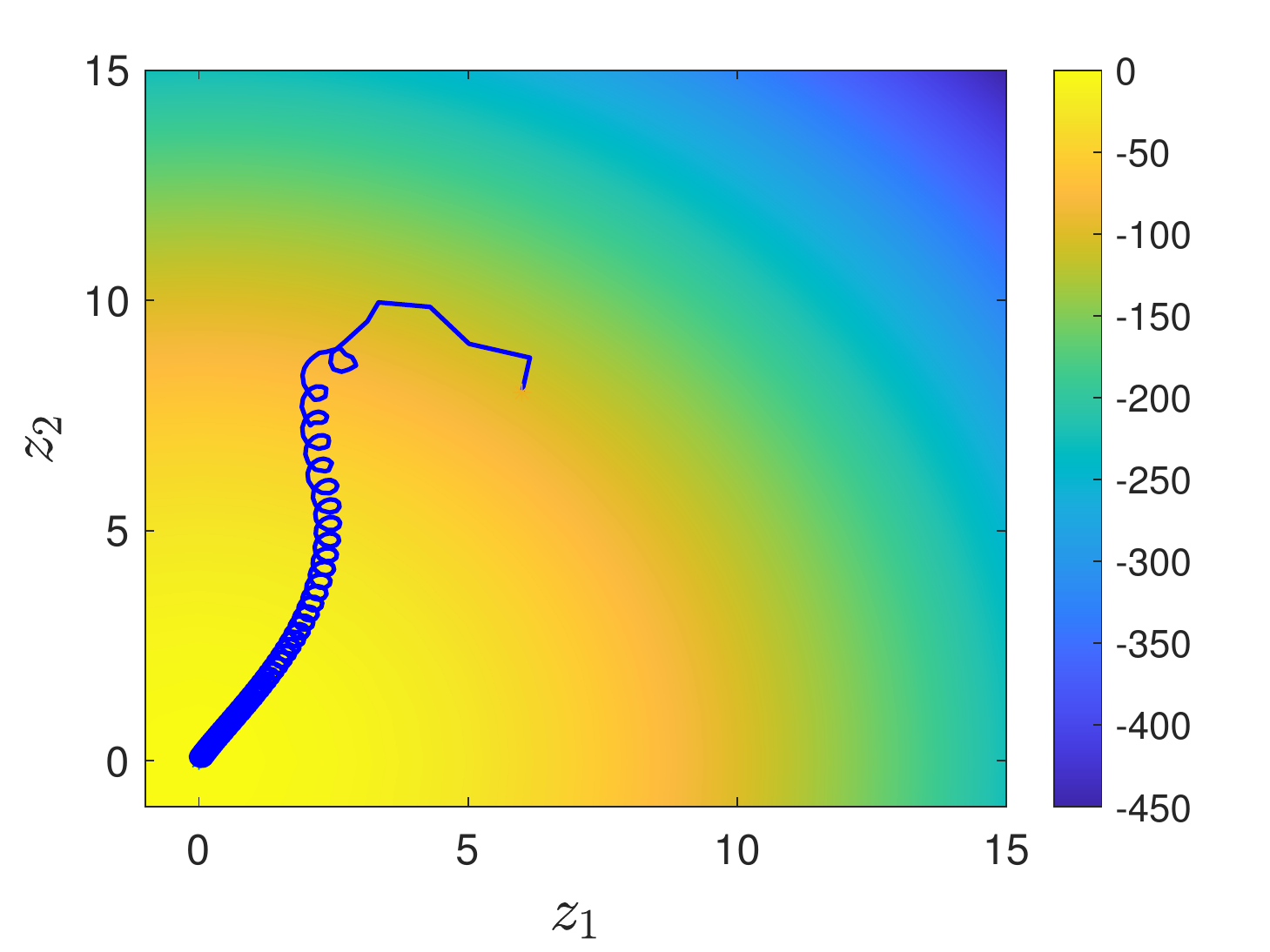}
			%\caption{fig2}
		\end{minipage}%
	}%
	
	\subfigure[Initial position $\protect\sbm{-1 \protect\\2 \protect\\-60^o}$]{
		\begin{minipage}[t]{0.5\linewidth}
			\centering
			\includegraphics[width=1\textwidth]{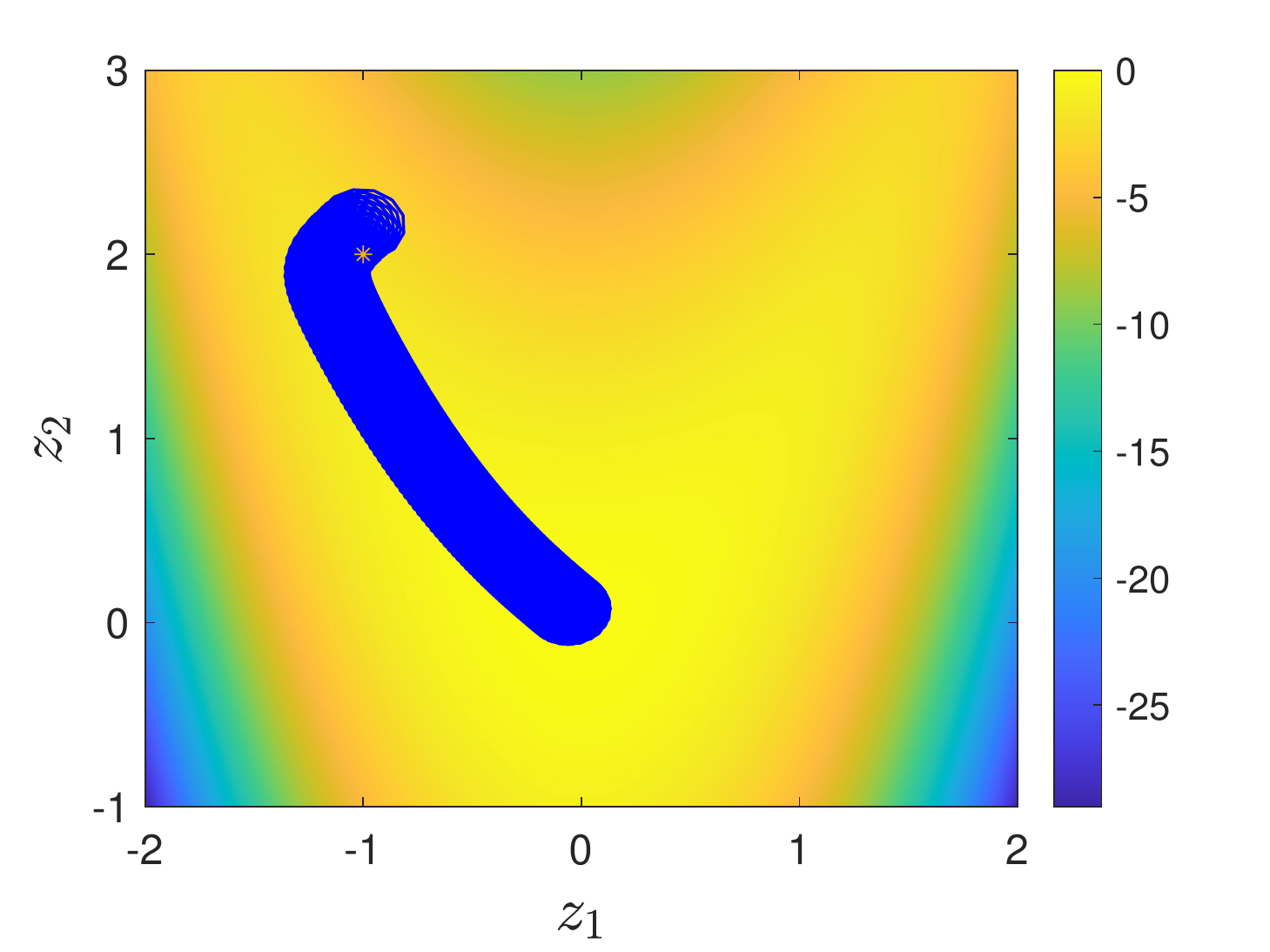}
		\end{minipage}%
	}%
	\subfigure[Initial position $\protect\sbm{1.5 \protect\\2.5\protect\\-90^o}$]{
		\begin{minipage}[t]{0.5\linewidth}
			\centering
			\includegraphics[width=1\textwidth]{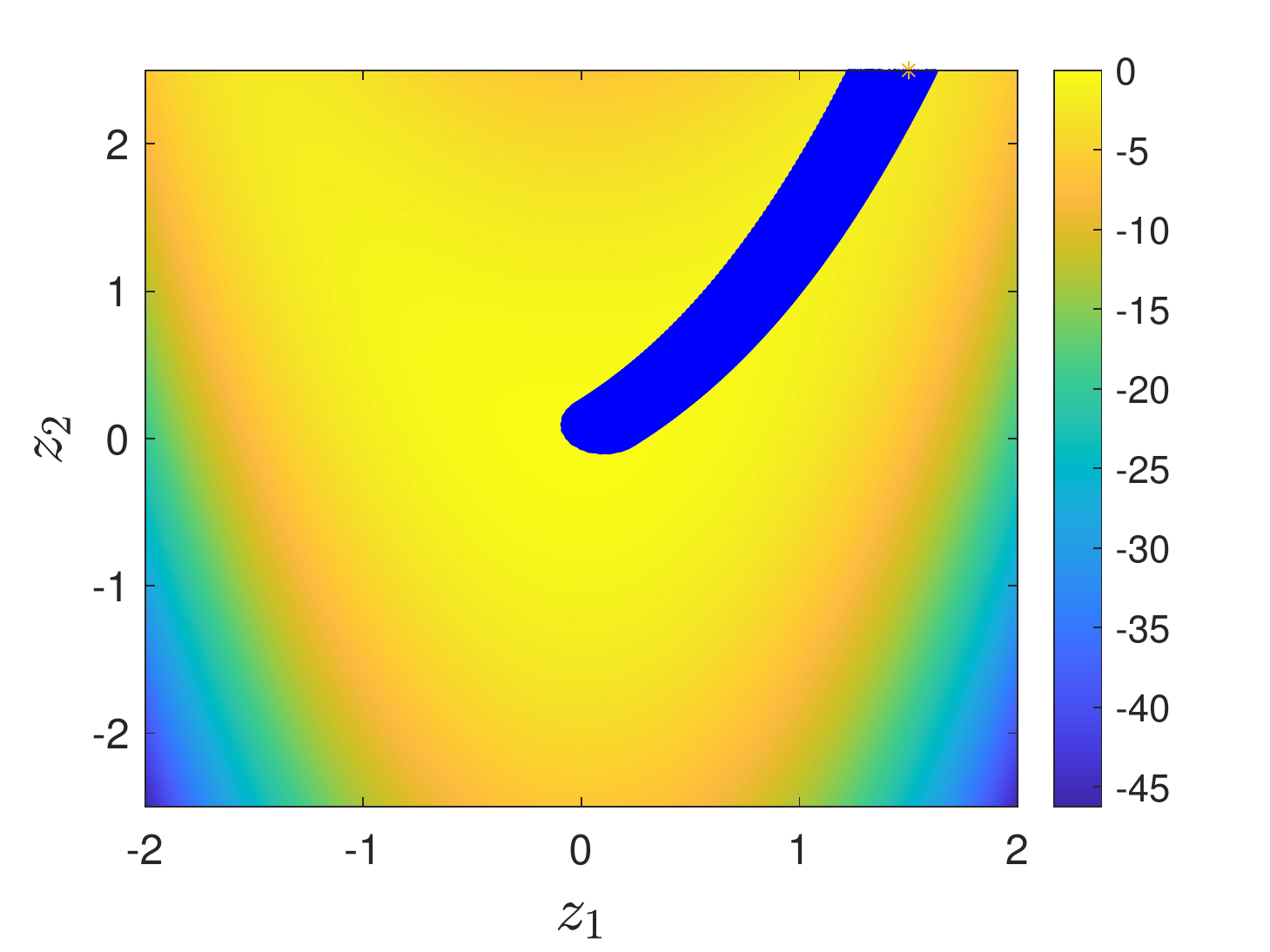}
		\end{minipage}%
	}%

	\centering
	\caption{Simulation results of the closed-loop system using the ESC-based projected gradient-ascent control law as in Proposition \ref{prop_2}. The case of quadratic potential function is shown in (a) and (b) and that of non-quadratic one ( $J(z_1,z_2)=-z^2_1-(z_2-z^2_1)^2$) is shown in (c) and (d) with the associated random initial conditions. }
	\label{Trajectories of robot in quadratic map}
\end{figure}

%%%%%%%%%%%%%%%%%%%%%%%%%%%%%%%%% w-q %%%%%%%%%%%%%%%%%%%%%%%%%%%%%%%%%%%%
Using the same set of initial state, Figure \ref{four w} and \ref{p_t_ESC_w_z} show the robot trajectories based on various dither signal $\omega_0$ in a quadratic map. Subsequently, we performed a Monte Carlo simulation to validate the performance of the controller using %For extra effect validation, the results with
100 random initial positions. The resulting analysis is presented in the boxplot in Figure \ref{box_w}. In this figure, $T_s$ represents the time for the robot to reach the final $20\%$ of the distance between the source and its initial position. We can conclude from this analysis that  the larger the frequency $\omega_0$, the more time the robot takes to reach the source.
A similar observation on the effect of dither frequency $\omega_0$ to the settling time in a non-quadratic map is shown in Figure \ref{four-w-non} and \ref{p_t_ESC_w_z_n}.

\begin{figure}[htbp]
	\centering
	
	\subfigure[ $\omega_0=3 rad/s$, $t=20$]{
		\begin{minipage}[t]{0.5\linewidth}
			\centering			\includegraphics[width=1\textwidth]{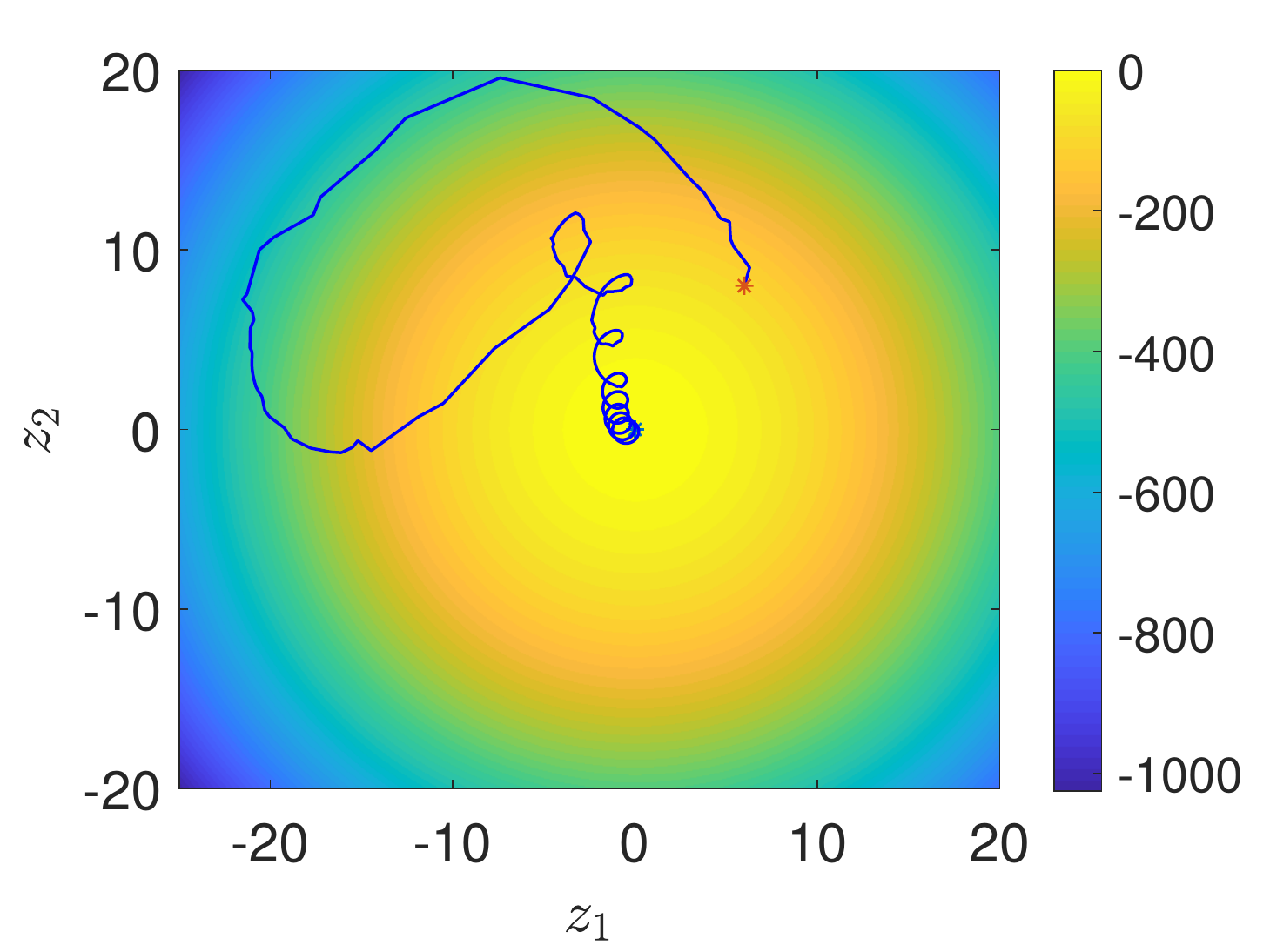}
			%\caption{fig1}
		\end{minipage}%
	}%
	\subfigure[ $\omega_0=10$ rad/s, $t=50$]{
		\begin{minipage}[t]{0.5\linewidth}
			\centering			\includegraphics[width=1\textwidth]{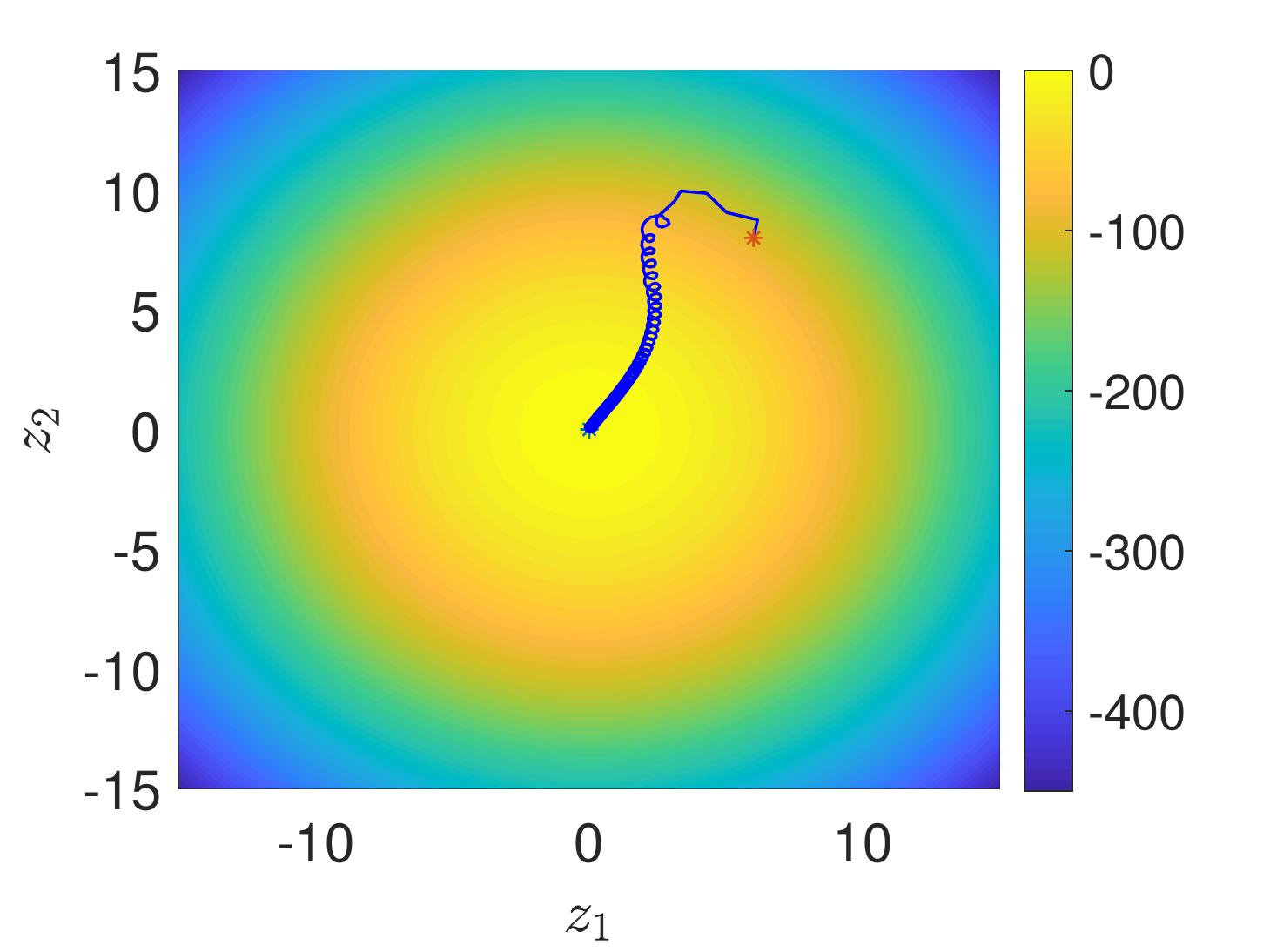}
			%\caption{fig1}
		\end{minipage}%
	}%
	
	\subfigure[ $\omega_0=60 rad/s$, $t=350$]{
		\begin{minipage}[t]{0.5\linewidth}
			\centering			\includegraphics[width=1\textwidth]{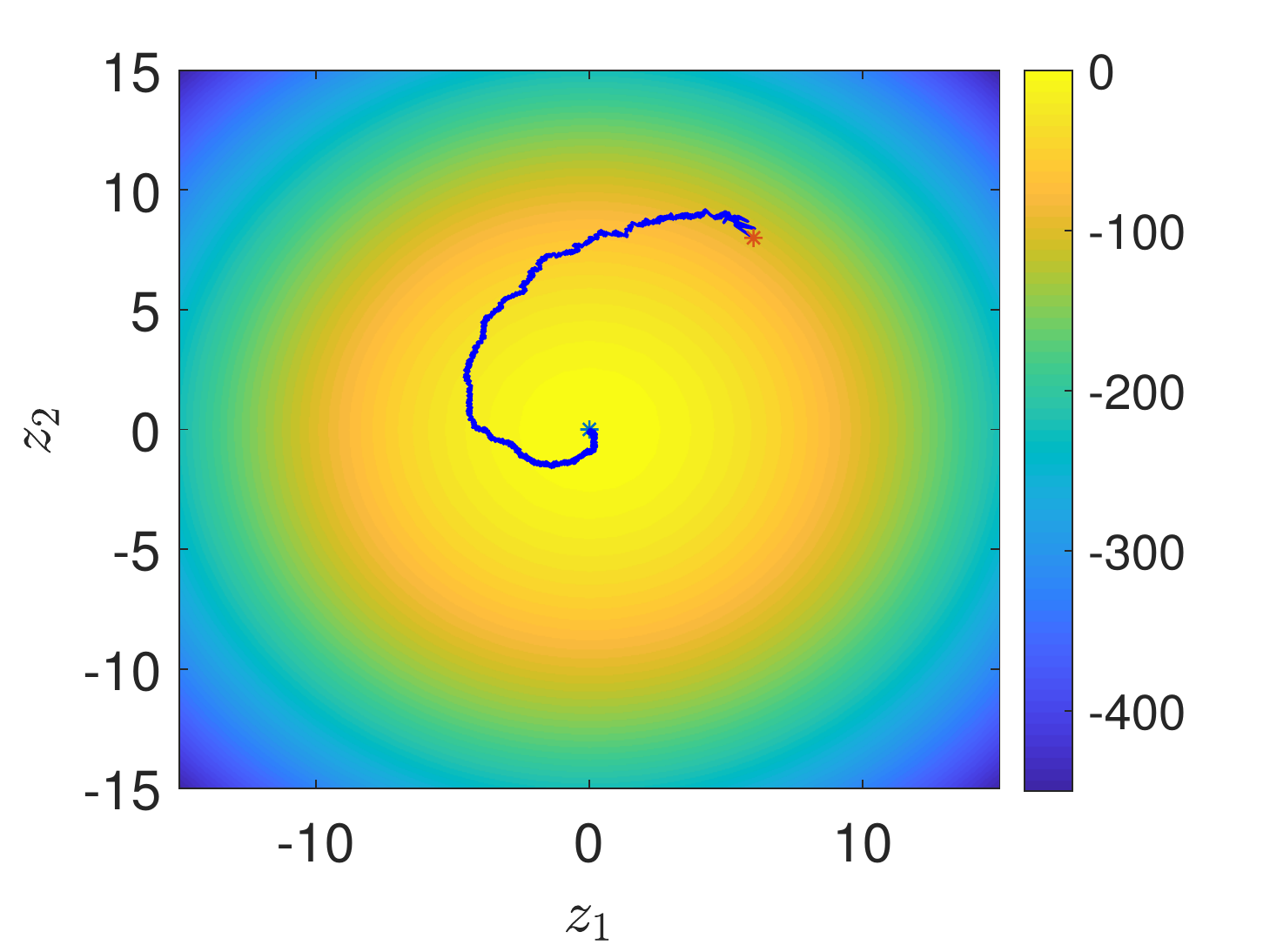}
			%\caption{fig1}
		\end{minipage}%
	}%
	\subfigure[ $\omega_0=100 rad/s$, $t=3500$ ]{
		\begin{minipage}[t]{0.5\linewidth}
			\centering
		\includegraphics[width=1\textwidth]{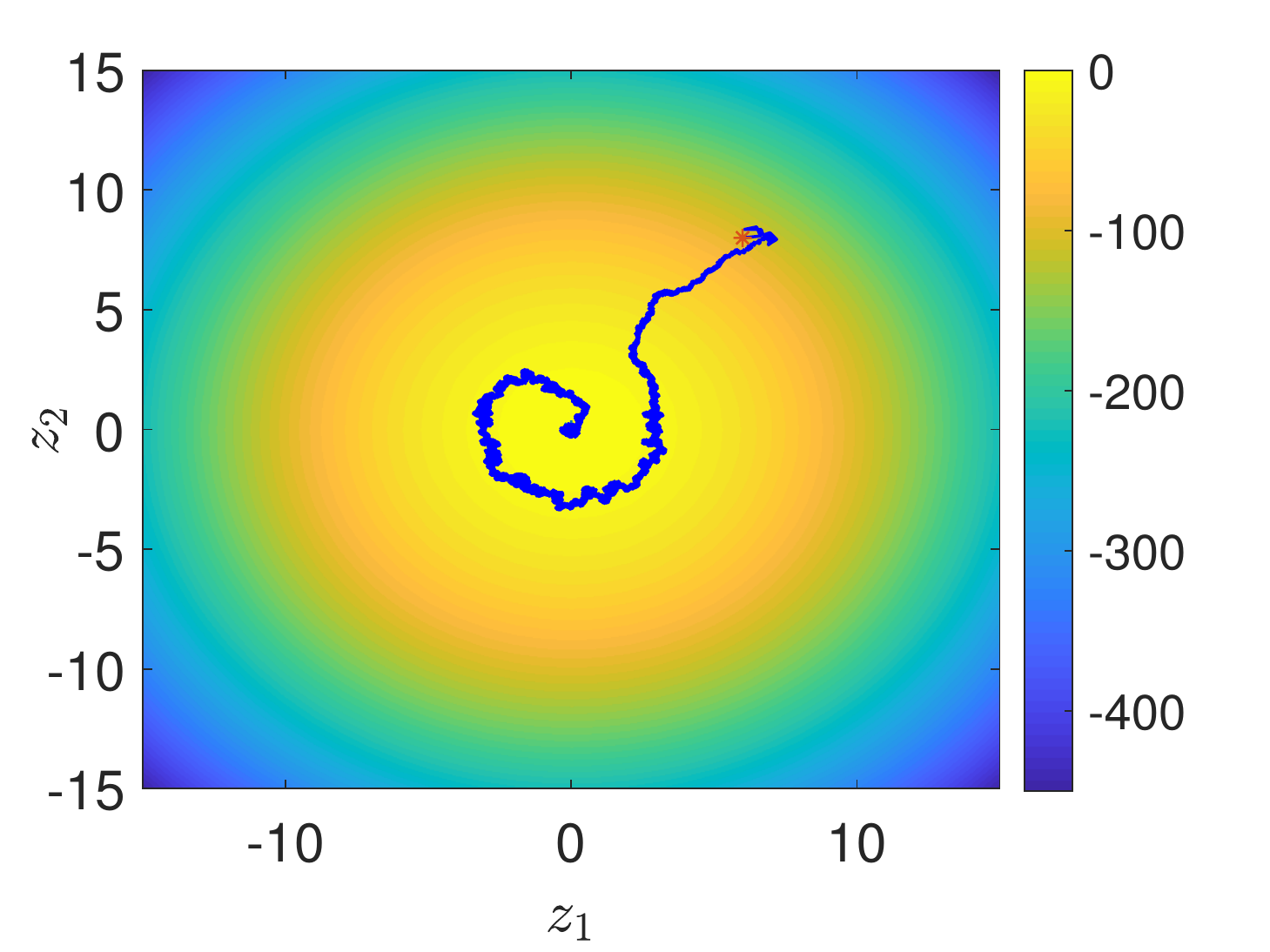}
			%\caption{fig2}
		\end{minipage}%
	}%

	\centering
	\caption{Simulation results of the closed-loop system using ESC-based projected gradient-ascent control law and various dither signals in the quadratic map ( $J(z_1,z_2)=-z^2_1-z^2_2$, with fixed $k_1=1$, $k_2=20$). The higher $\omega_0$ results in the slower overall robot motion and lower amplitude oscillations.  }
	\label{four w}
\end{figure}

\begin{figure}[htbp]
	\centering
	\subfigure[ Robot State $z_1$]{
		\begin{minipage}[t]{0.5\linewidth}
			\centering			\includegraphics[width=1\textwidth]{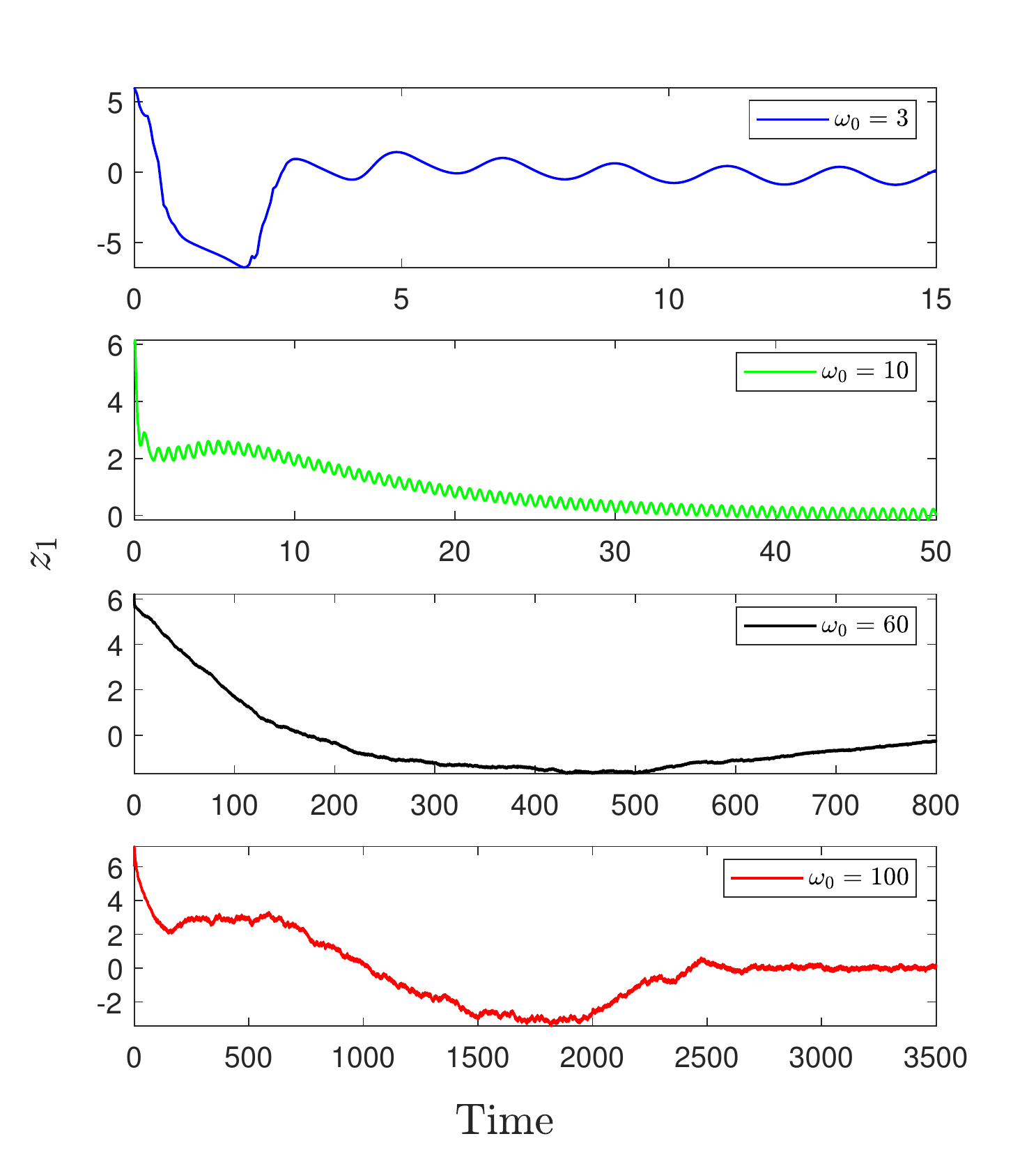}
			%\caption{fig1}
		\end{minipage}%
	}%
	\subfigure[ Robot State $z_2$ ]{
		\begin{minipage}[t]{0.5\linewidth}
			\centering
		\includegraphics[width=1\textwidth]{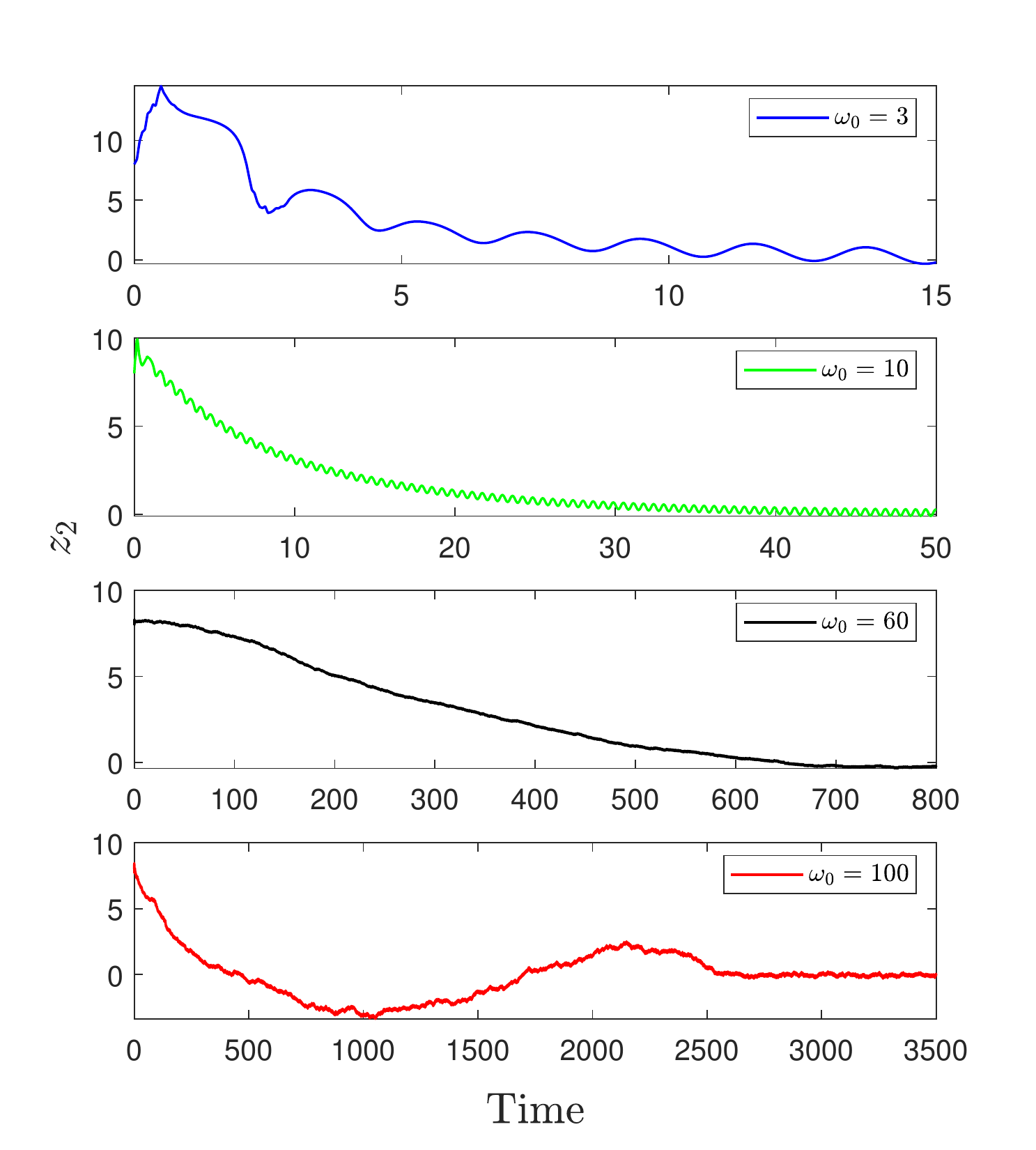}
			%\caption{fig2}
		\end{minipage}%
	}%
	\centering
	\caption{The plot of robot position as a function of time using ESC-based projected gradient-ascent control law with different values of $\omega_0$ in quadratic map. Different time scale is used for different $\omega_0$ which is due to the different settling time. %The position curves are drawn on different time plots, which clearly indicates the increasing $\omega_0$ leads to a more time-consuming convergence.
	}
	\label{p_t_ESC_w_z}
\end{figure}

\begin{figure}[htbp]
	\centering
    \includegraphics[width=0.4\textwidth]{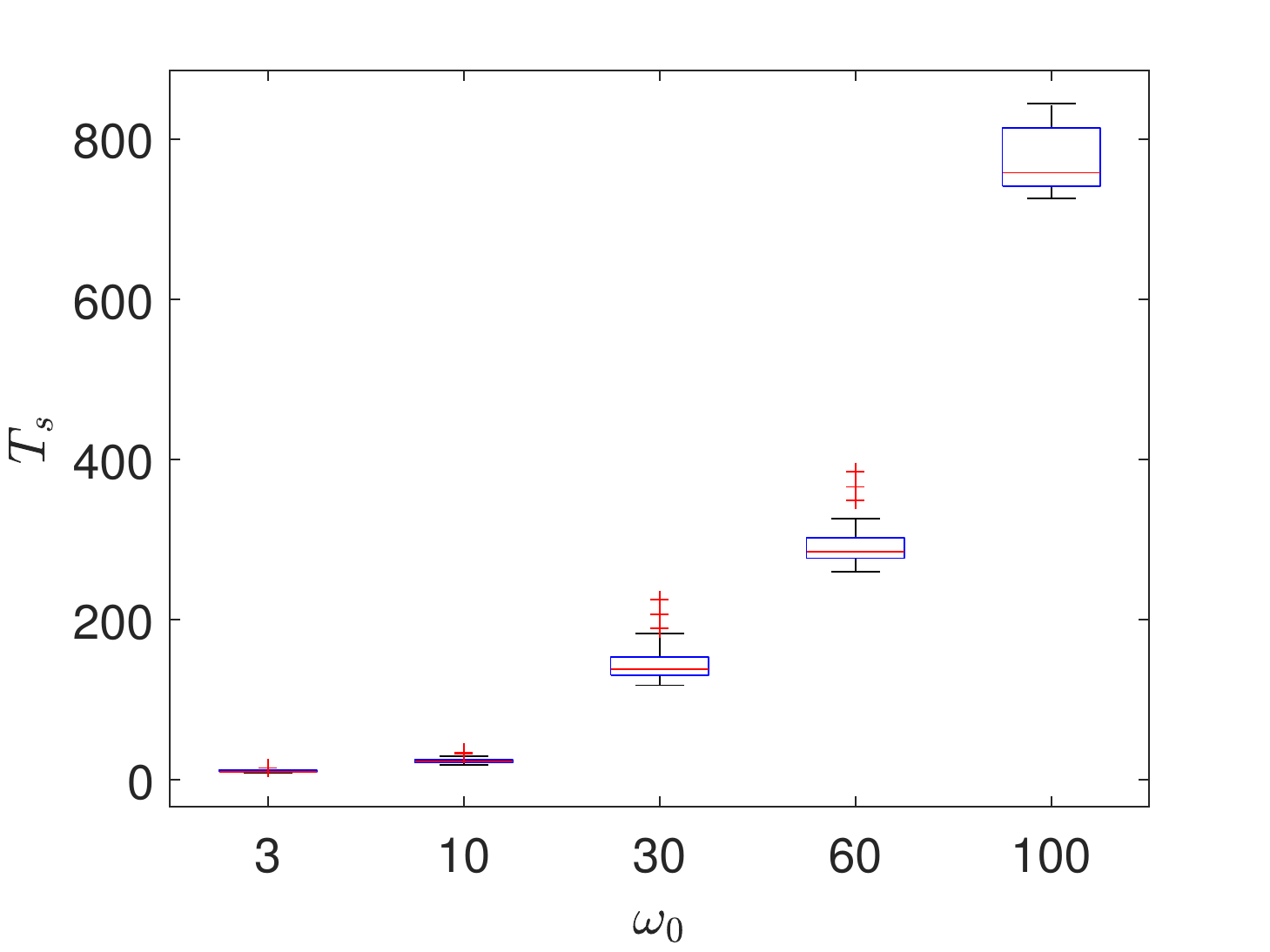}

	\centering
	\caption{The boxplot of settling time $T_s$ of the robot in a quadratic map  using the ESC-based projected gradient-ascent control law with five different $\omega_0$ and with fixed $k_1=1, k_2=20, a=2$. %$T_s$ is defined as the time to settle from a given initial condition to within $20\%$ of the source location.
	For each $\omega_0$, the robot is initialized at 100 random initial positions.}
	\label{box_w}
\end{figure}

%%%%%%%%%%%%%%%%%%%%%%%%%%%%%%%%%%%%%%%%%%%%%%%%%%%%%%%%%%%%%%%%%%%%%%%

\begin{figure}[htbp]
	\centering
	
	\subfigure[ $\omega_0=2 rad/s$, $t=50$]{
		\begin{minipage}[t]{0.5\linewidth}
			\centering
			\includegraphics[width=1\textwidth]{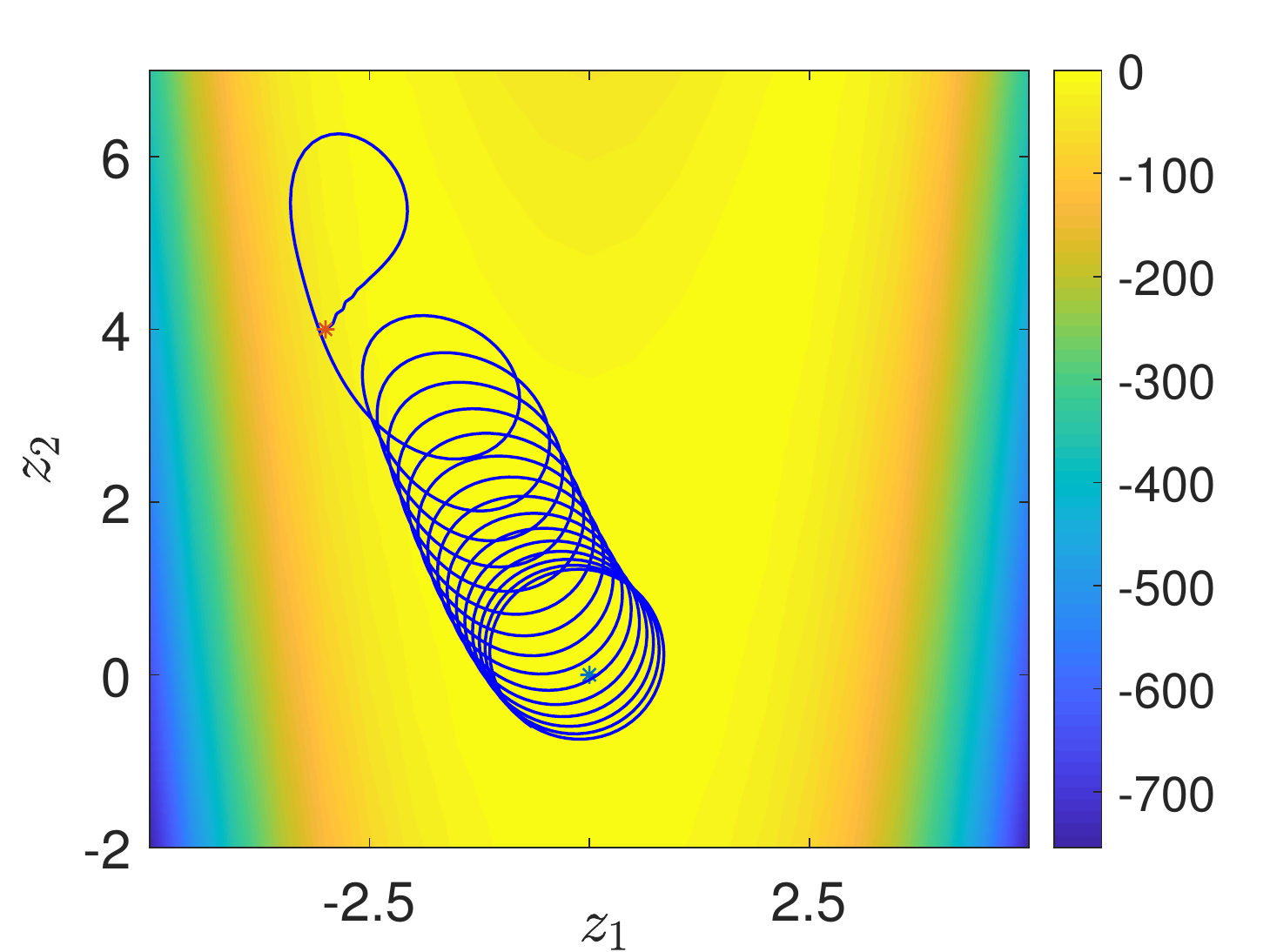}
		\end{minipage}%
	}%
	\subfigure[ $\omega_0=10 rad/s$, $t=350$]{
		\begin{minipage}[t]{0.5\linewidth}
			\centering
			\includegraphics[width=1\textwidth]{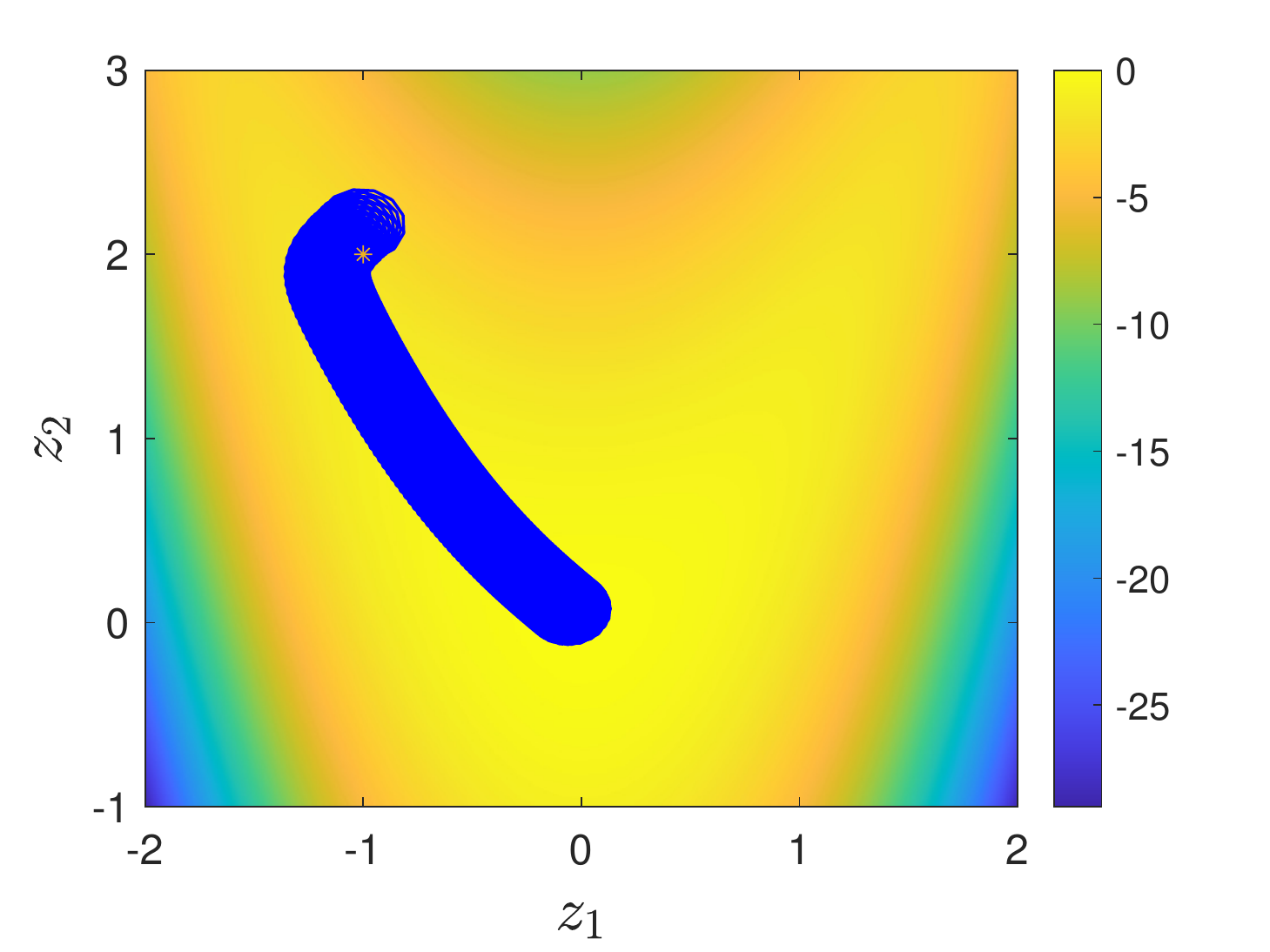}
		\end{minipage}%
	}%
	
		\subfigure[ $\omega_0=50 rad/s$, $t=1500$ ]{
		\begin{minipage}[t]{0.5\linewidth}
			\centering
			\includegraphics[width=1\textwidth]{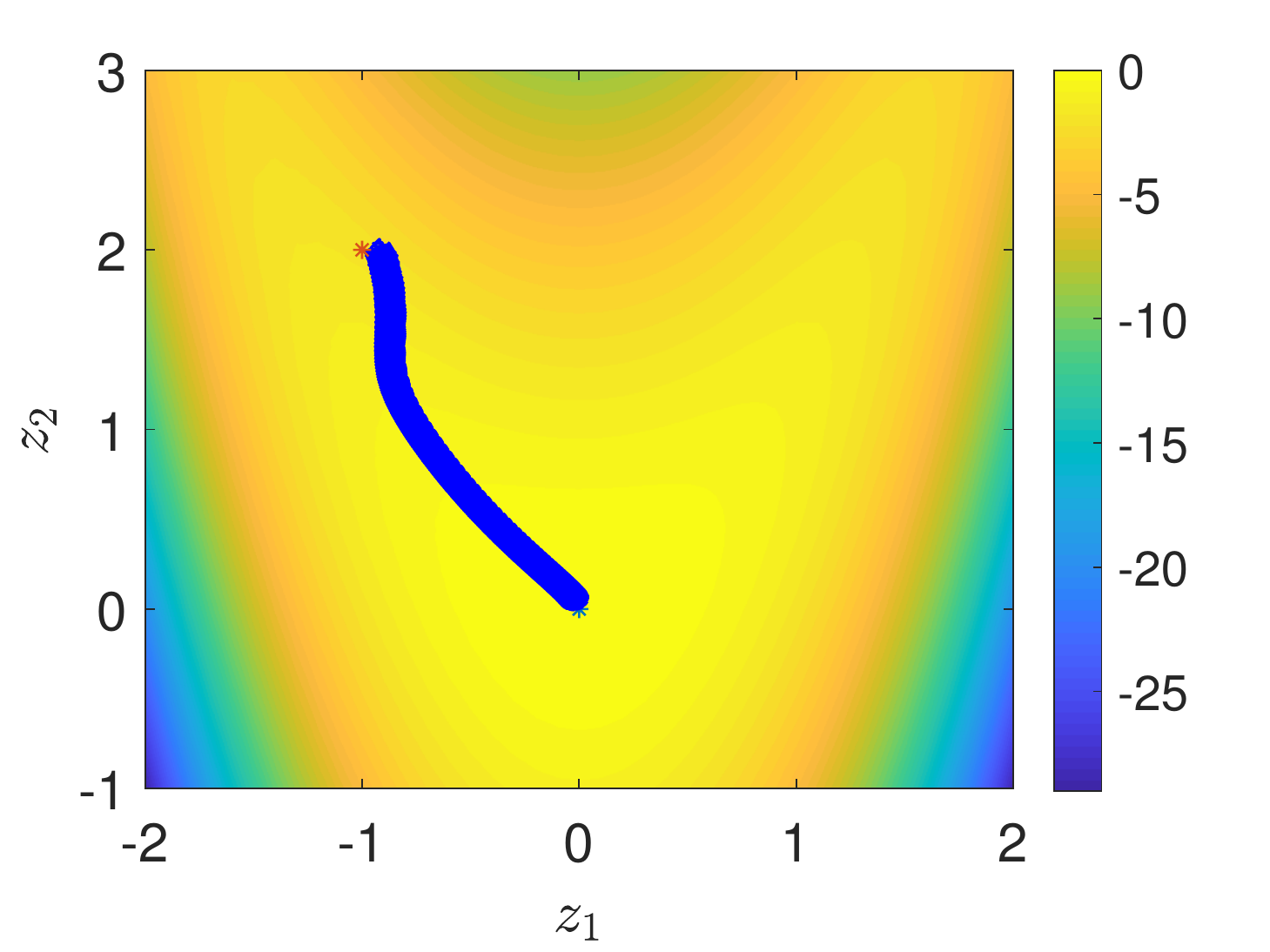}
		\end{minipage}%
	}%
	\subfigure[ $\omega_0=80 rad/s$, $t=3000$ ]{
		\begin{minipage}[t]{0.5\linewidth}
			\centering
			\includegraphics[width=1\textwidth]{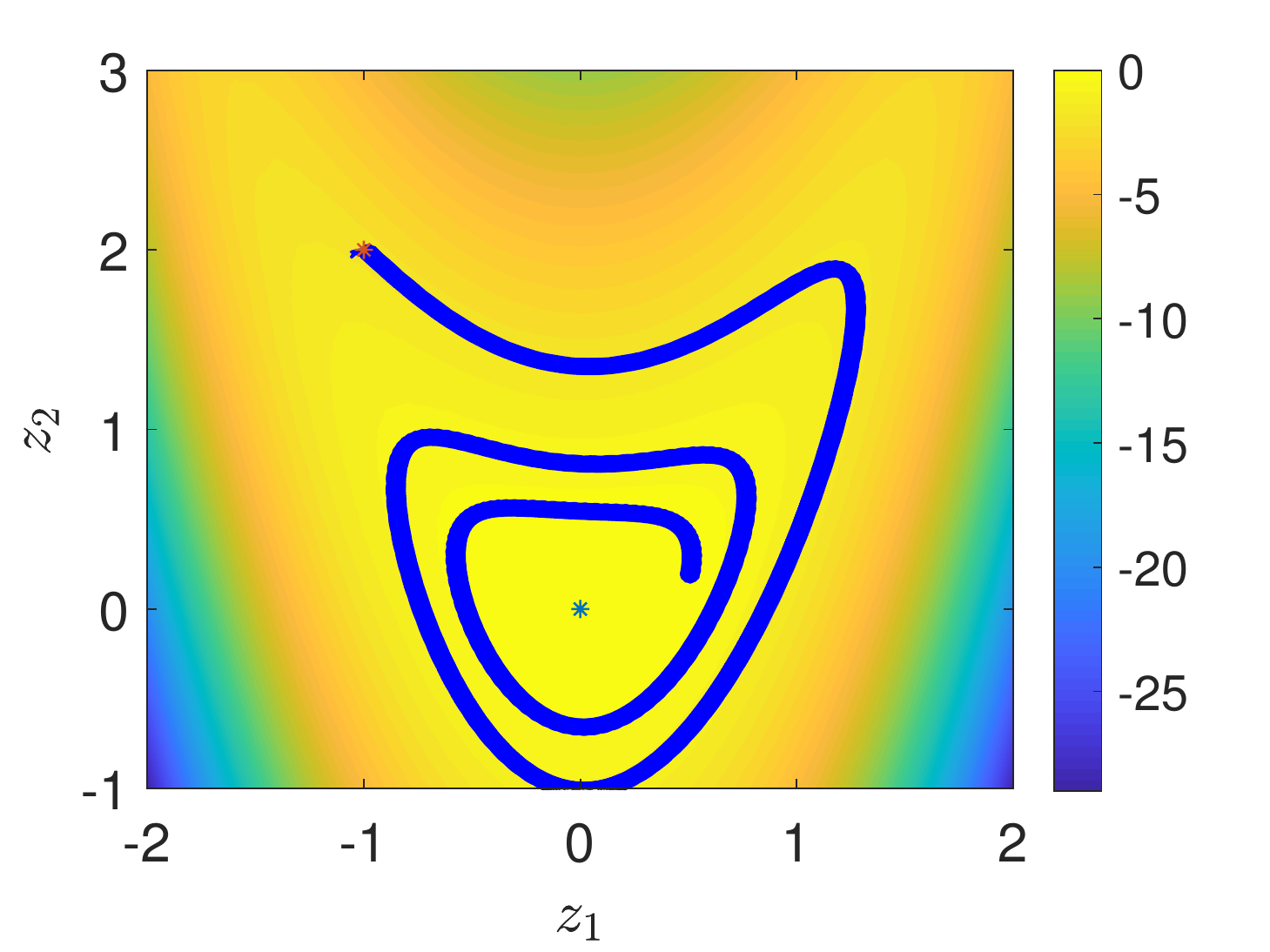}
		\end{minipage}%
	}%
	
	\centering
	\caption{ Simulation results of the closed-loop system based on various dither signals in the non-quadratic map ($k_1=1$, $k_2=20$, $J(z_1,z_2)=-z^2_1-(z_2-z^2_1)^2$), using the ESC-based projected gradient-ascent control law. All the initial state of robot are set to $[-1,2,-60^o]$.  }
	\label{four-w-non}
\end{figure}

\begin{figure}[htbp]
	\centering
	\subfigure[ robot state $z_1$]{
		\begin{minipage}[t]{0.5\linewidth}
			\centering			\includegraphics[width=1\textwidth]{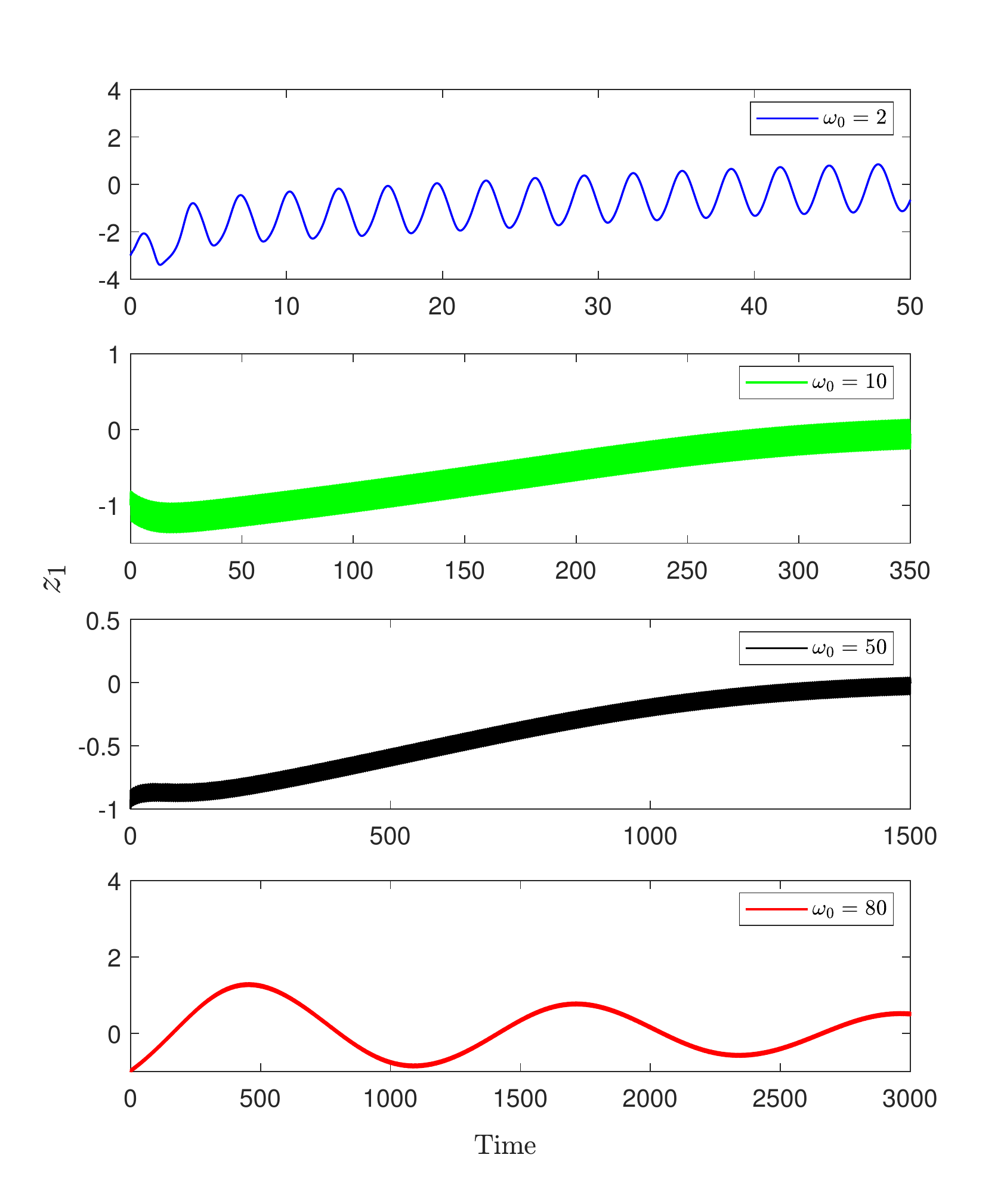}
			%\caption{fig1}
		\end{minipage}%
	}%
	\subfigure[ robot state $z_2$ ]{
		\begin{minipage}[t]{0.5\linewidth}
			\centering
		\includegraphics[width=1\textwidth]{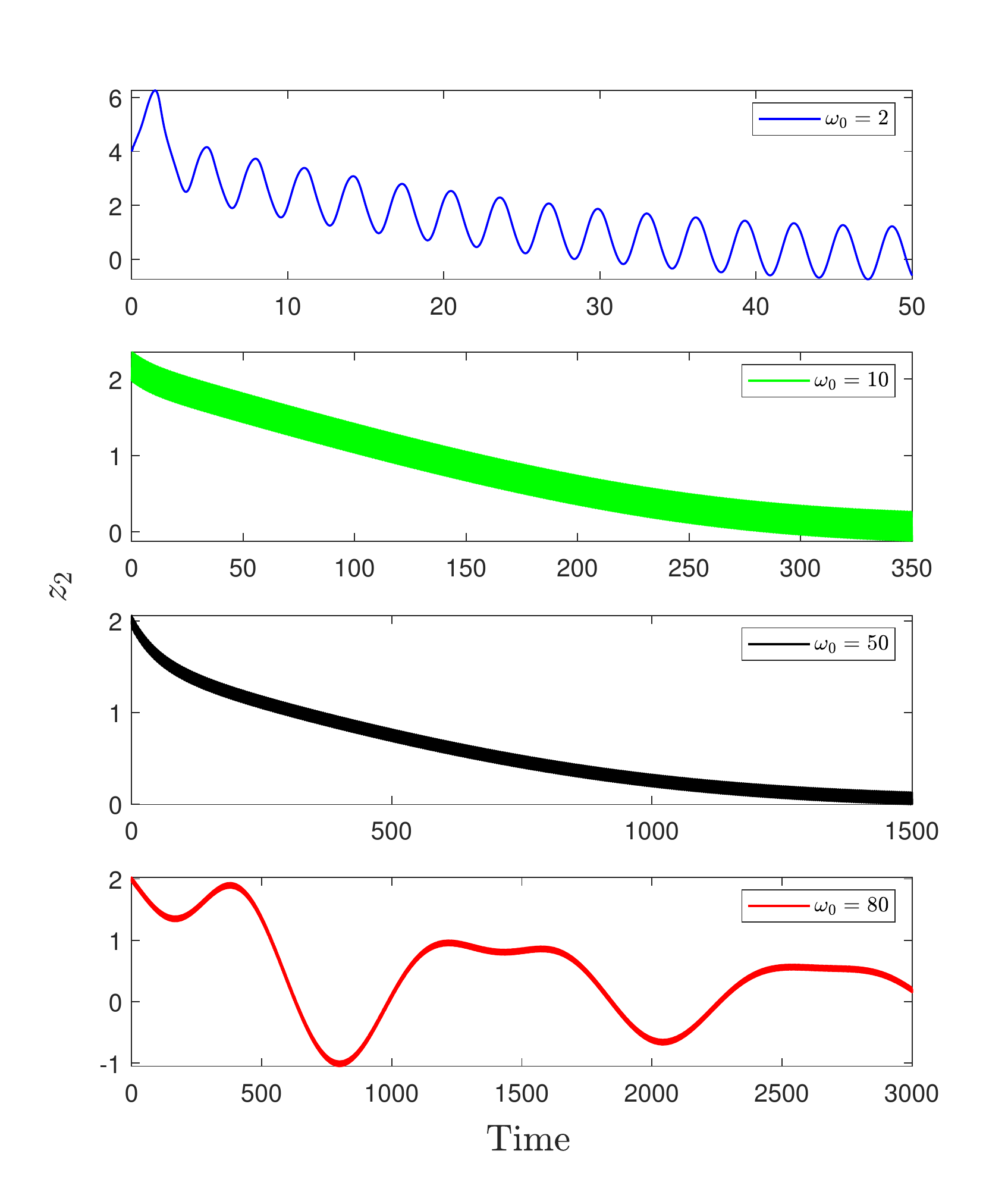}
			%\caption{fig2}
		\end{minipage}%
	}%
	\centering
	\caption{The plot of robot position as a function of time where ESC-based projected gradient-ascent control law is used with different values of $\omega_0$ in non-quadratic map. %It takes more time for robot approaching the source with a higher dither signal $\omega_0$.
	The initial state of robot is set to be $[-1,2,-60^o]$. Different time scale is used for different $\omega_0$ which is due to the different settling time.  }
	\label{p_t_ESC_w_z_n}
\end{figure}
%%%%%%%%%%%%%%%%%%%%%%%%%%%%%%%%%%%%%%%%%%%%%%%%%%%%%%%%%%%%%%%%%%%%%%%%%%%%%

\section{Experimental setup and results}
\subsection{Experimental Setup}

For all experiments, we used a Nexus$^\text{\textregistered}$ mobile robot which is equipped with four 100mm Mecanum$^\text{\textregistered}$ wheels that are driven by  Faulhaber$^\text{\textregistered}$ $12V$ motors with optical encoders and controlled by Arduino$^\text{\textregistered}$ 328 Controller and Arduino$^\text{\textregistered}$ IO Expansion.
In this configuration, the mobile robot is omnidirectional.
The parameters of the robot are shown in Table \ref{table}. As shown in Figure \ref{Four Wheels Nexus Robot}, we realized the unicycle robot dynamics by transforming both the longitudinal and angular velocity of unicycle robot into the individual velocity of each wheel
Correspondingly, for realizing the unicycle dynamics as in \eqref{eq:unicycle_model}, we set
the longitudinal velocity $u_{z_{1,r}}=u$, lateral velocity $u_{z_{2,r}}=0$ and  angular velocity $\omega_z=\omega$, which can be computed in real-time based on the encoder for each wheel.

The four sensors that were designed and built as presented in Section \ref{sec:3D-sensor} were all mounted on the four sides of the mobile robot. A quad-channel analog differential input shield for Arduino (see also Figure \ref{Arduino Board}) was used as a front-end for the 10-bit Arduino ADC inputs. By connecting the Wheatstone bridge with the analog differential inputs, the difference resistance of airflow sensors can be calculated to obtain the airflow velocity as described in subsection \ref{sec:3D-sensor}B.
\begin{figure}[htbp]
	\centering
	
	\subfigure{
		\begin{minipage}[t]{0.5\linewidth}
			\centering
			\includegraphics[width=120pt,height=100pt]{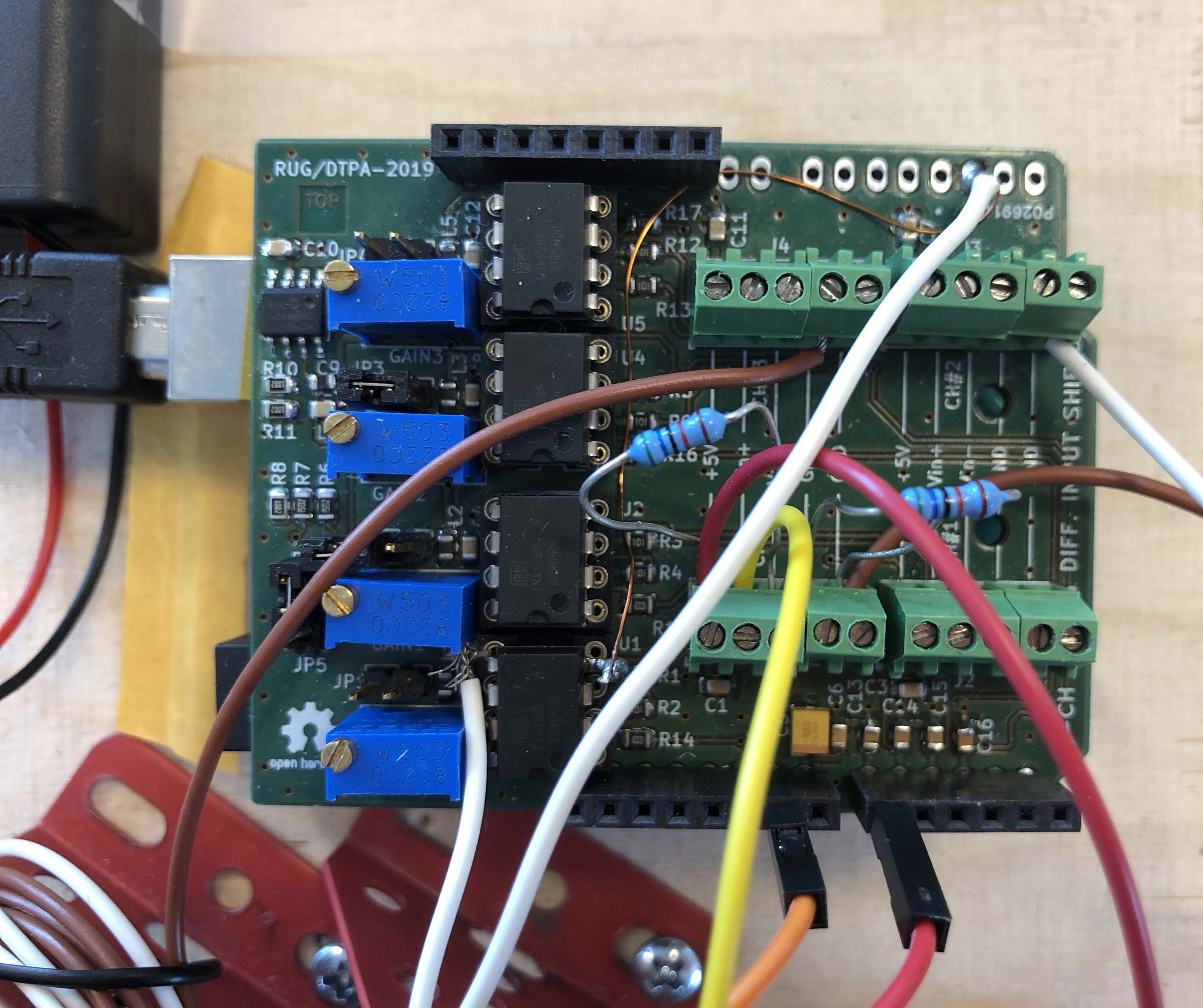}
			%\caption{fig1}
		\end{minipage}%
	}%
	\subfigure{
		\begin{minipage}[t]{0.5\linewidth}
			\centering
			\includegraphics[width=120pt,height=100pt]{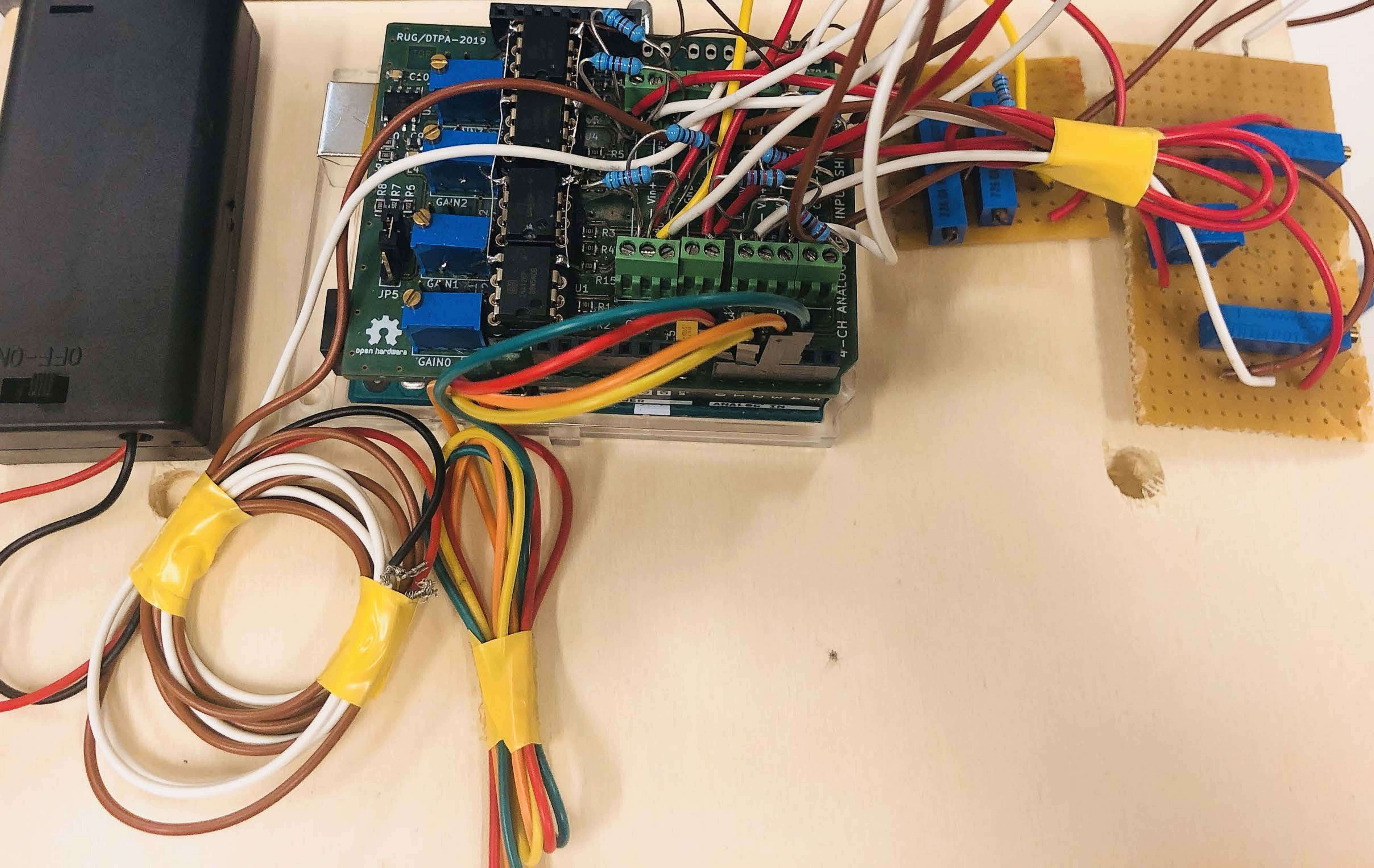}
			%\caption{fig2}
		\end{minipage}%
	}%
	\centering
	\caption{The quad analog differential input shield for Arduino board that is used to process the flow sensor signals.}
	\label{Arduino Board}
\end{figure}

\begin{table}[!t]
\caption{Parameters of the Mobile Robot}
 \label{table}
 \begin{tabular}{lllllll}
\toprule
 Parameters & Symbol &Value (m)\\
\midrule
Wheel Radius &$r$ & 0.05 \\
 \hline
Distance between Left and Right wheel &$d_{W}$ &0.3   \\
 \hline
Distance between Front and Rear wheel &$d_{L}$ &0.3    \\
  \bottomrule
\end{tabular}
\end{table}

\begin{figure}[htbp]
	\centering
	
	\subfigure[Front view]{
		\begin{minipage}[t]{0.5\linewidth}
			\centering
			\includegraphics[width=1\textwidth]{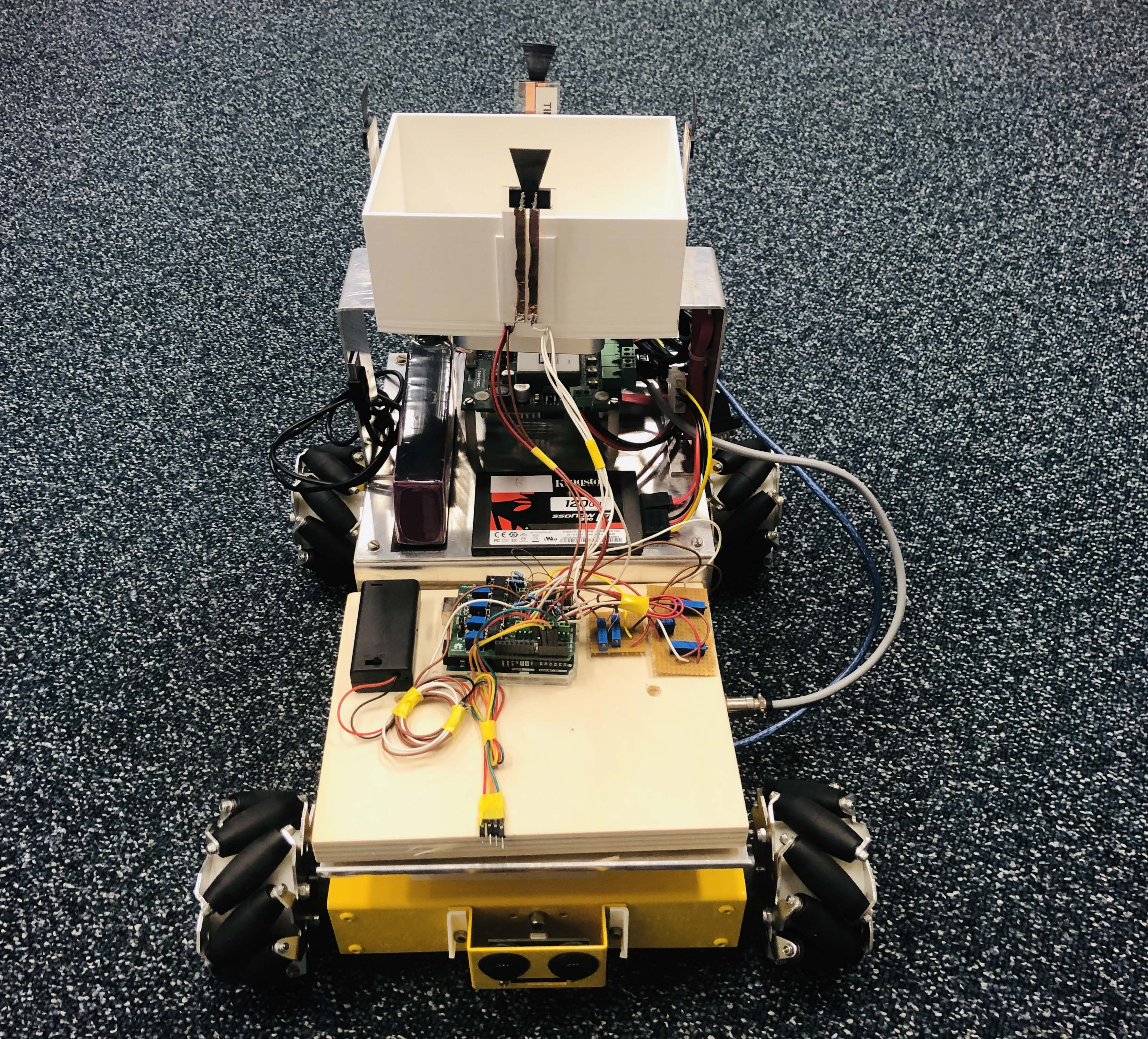}
				\end{minipage}%
	}%
	\subfigure[Side view]{
		\begin{minipage}[t]{0.5\linewidth}
			\centering
			\includegraphics[width=1\textwidth]{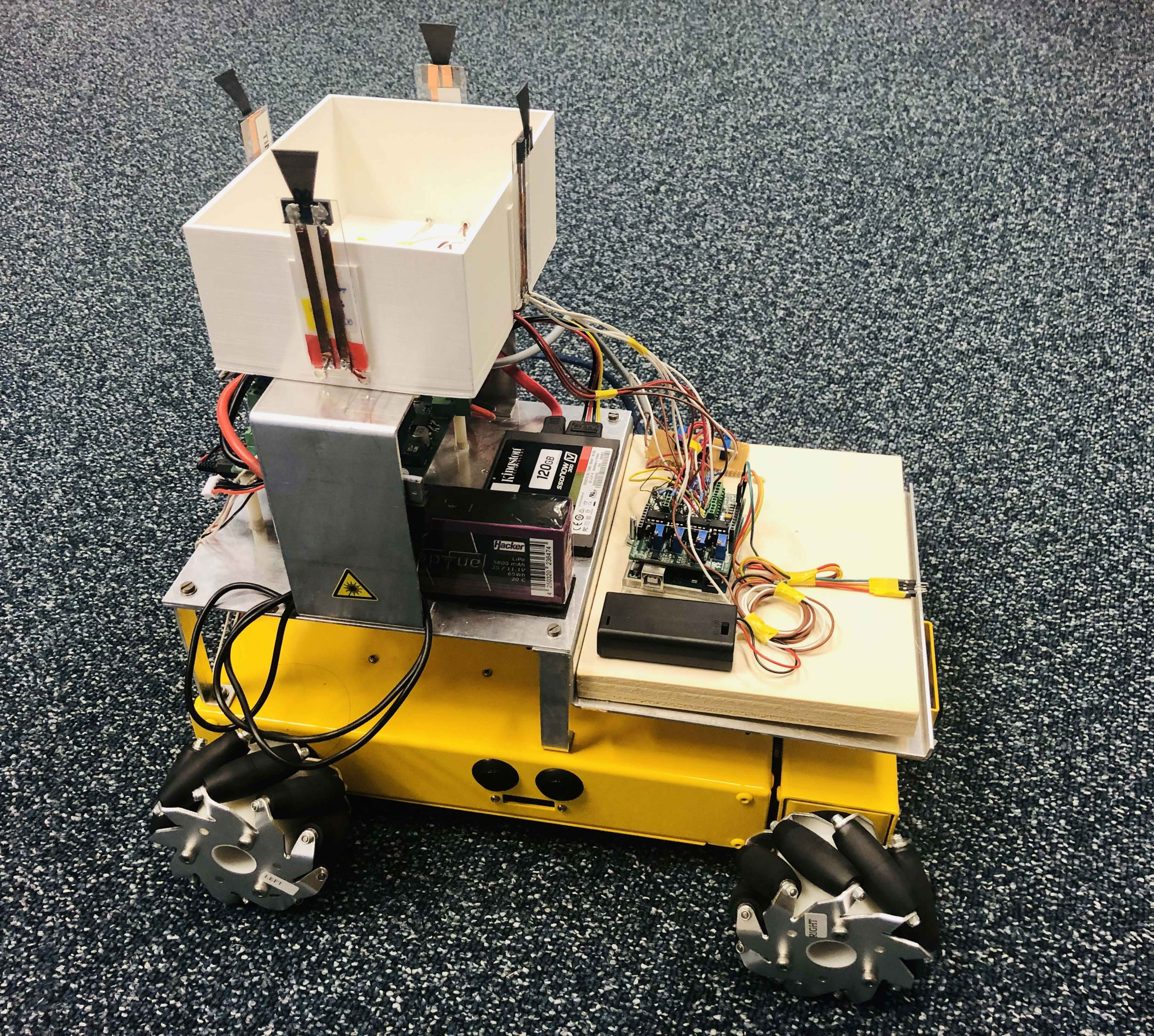}
				\end{minipage}%
	}%

	\subfigure[3D-printed flexible airflow sensors on the four sides]{
		\begin{minipage}[t]{0.5\linewidth}
			\centering
			\includegraphics[width=1\textwidth]{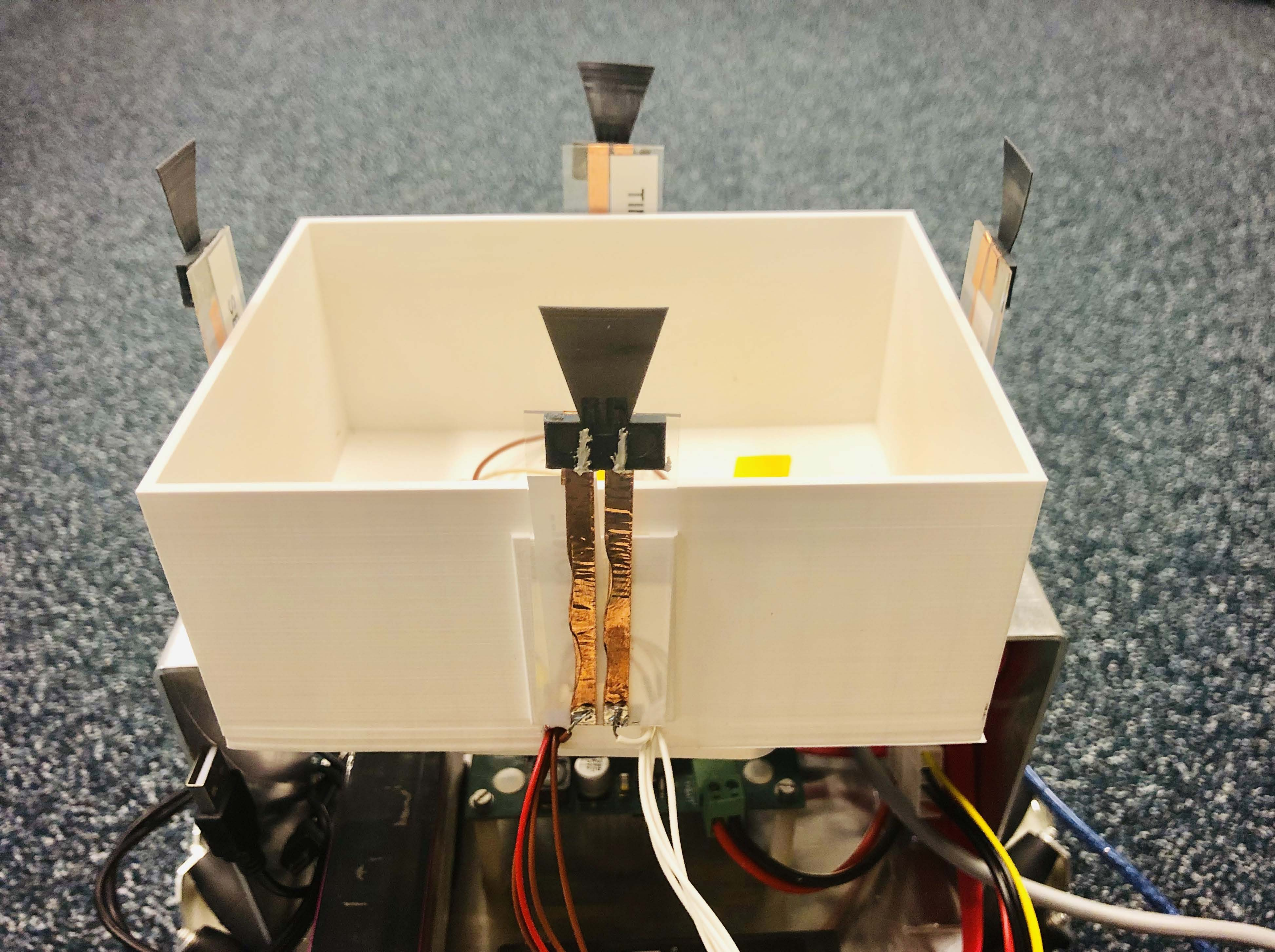}
				\end{minipage}%
	}%
	\subfigure[Robot kinematic model]{
		\begin{minipage}[t]{0.5\linewidth}
			\centering
			\includegraphics[width=1\textwidth]{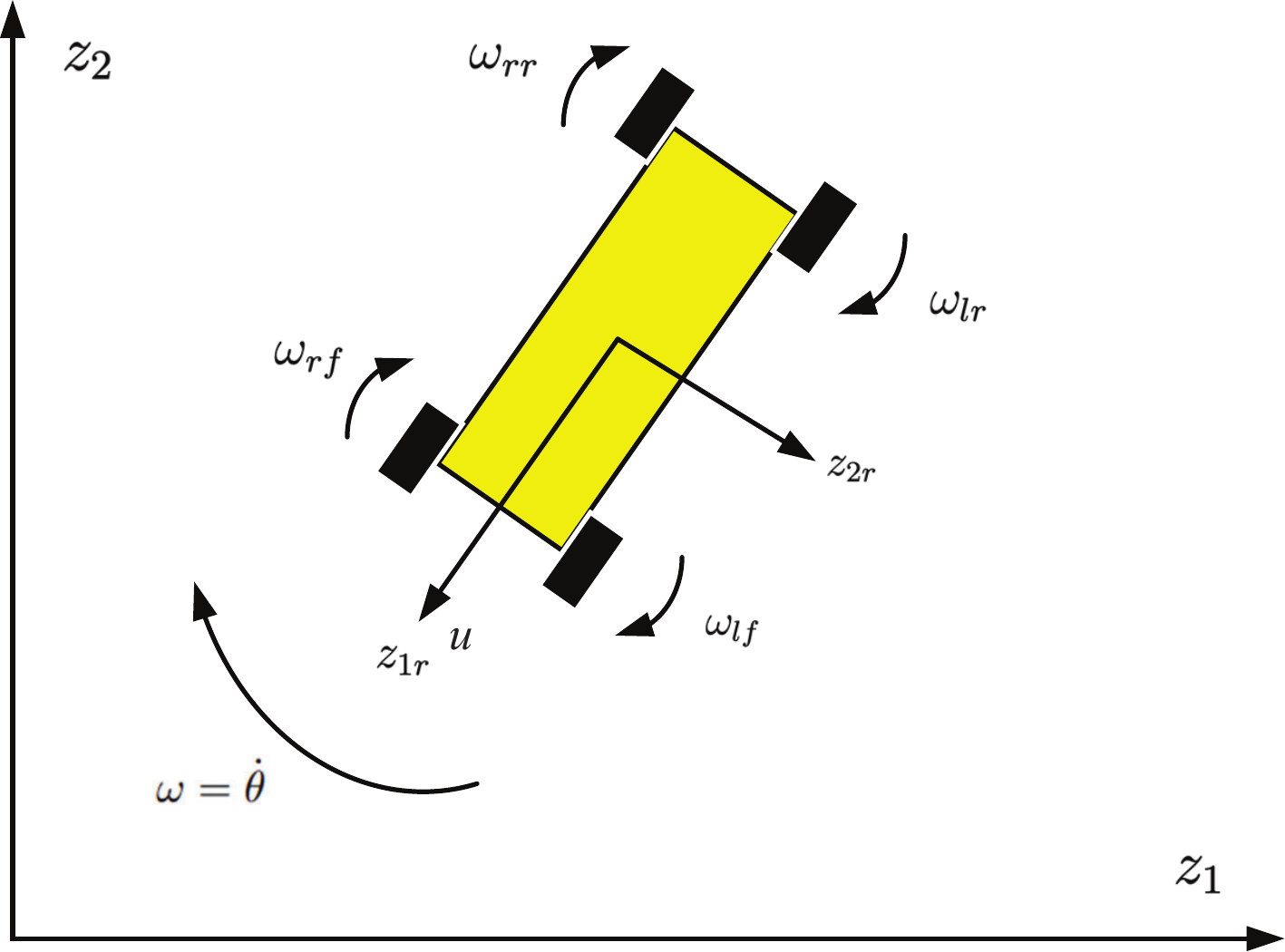}
					\end{minipage}%
	}%
	\centering
	\caption{The experimental unicycle robot setup based on Nexus$^\text{\textregistered}$ mobile robot equipped with 3D-printed flexible flow sensors, Arduino$^\text{\textregistered}$-based microcontroller and 4WD Mecanum$^\text{\textregistered}$ wheels.}
	\label{Four Wheels Nexus Robot}
\end{figure}

The closed-loop system was implemented using Robot Operating System (ROS) middleware run in Linux (Ubuntu 16.04).

\begin{figure}[htbp]
	\centering
	\subfigure{
		\begin{minipage}[t]{0.5\linewidth}
			\centering
			\includegraphics[width=1\textwidth]{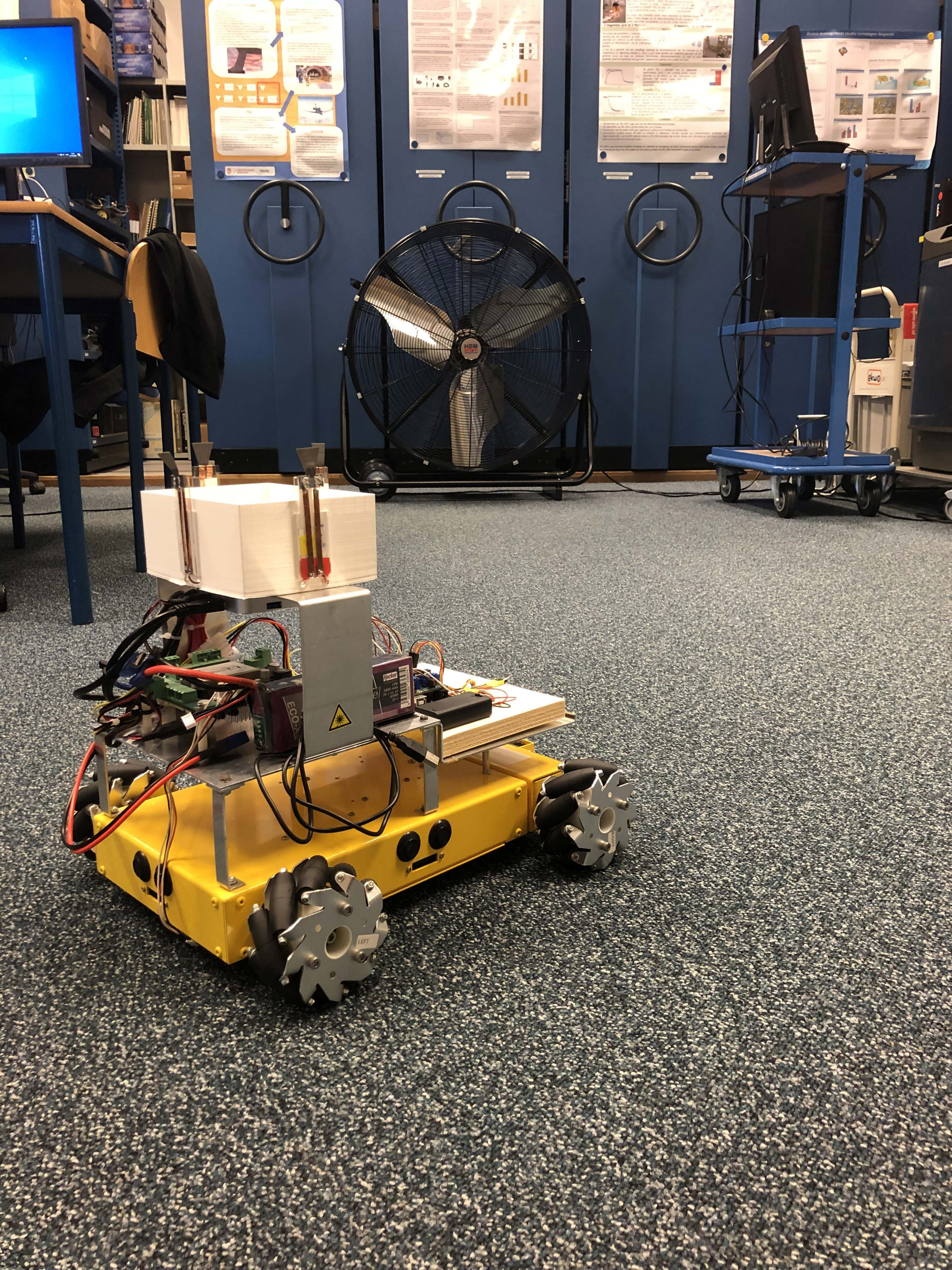}
					\end{minipage}%
	}%
	\subfigure{
		\begin{minipage}[t]{0.5\linewidth}
			\centering
			\includegraphics[width=1\textwidth]{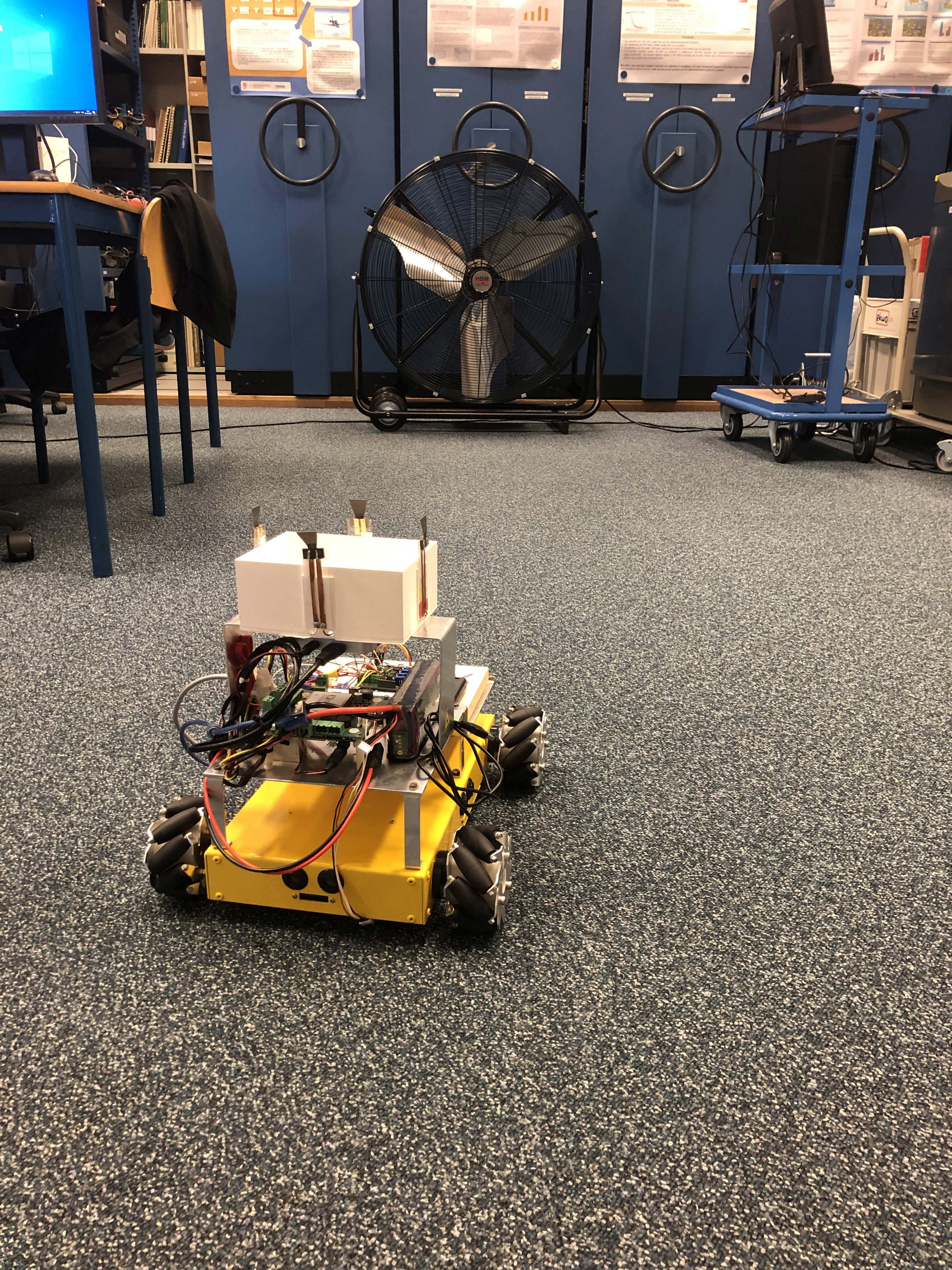}
				\end{minipage}%
	}%
	
\centering
    \caption{Photos of the lab environment where the experiments were conducted. Both the mobile robot and the industrial fan are visible in the pictures.}
	\label{fig:lab}
\end{figure}

Prior to conducting live experiments, we firstly gathered the information of the potential field in our lab environment when a commercial industrial fan (HBM 36INCH drum fan) was used to generate the wind (see also Figure \ref{fig:lab} on the lab environment where the robot and the fan are visible). We used a commercial anemometer (Airflow Meter PCE-423) to measure the wind speed field at a fixed grid of points and we fitted the data with polynomial functions by least square fitting method. The fitted field can be used to validate numerically the proposed control laws before they were evaluated experimentally. The following fourth-order polynomial function was obtained
\begin{equation}\label{eq:fitted_poly}
    v(z_1,z_2)=68.54R^4-102.80R^3+36.13R^2+6.41R-0.34,
\end{equation}
where $v$ is the wind strength, $R=\frac{r_f}{d}$ with $d$ being the distance between the fan and the position $(z_1,z_2)$, and $r_f=0.45m$ is the radius of the fan blade. In our setup, we considered only an active area where $d \geq 0.5m$. The polynomial function above is based on the modeling of flow field of a ceiling fan as reported in \cite{Azim}. Figure \ref{Airflow Model} shows the fitted polynomial vs measured data points and Figure \ref{Airflow Map of the Environment} shows the resulting airflow field speed map and the corresponding gradient distribution.

\begin{figure}[htbp]
	\centering
	\subfigure{
		\begin{minipage}[t]{0.5\linewidth}
			\centering
			\includegraphics[width=1\textwidth]{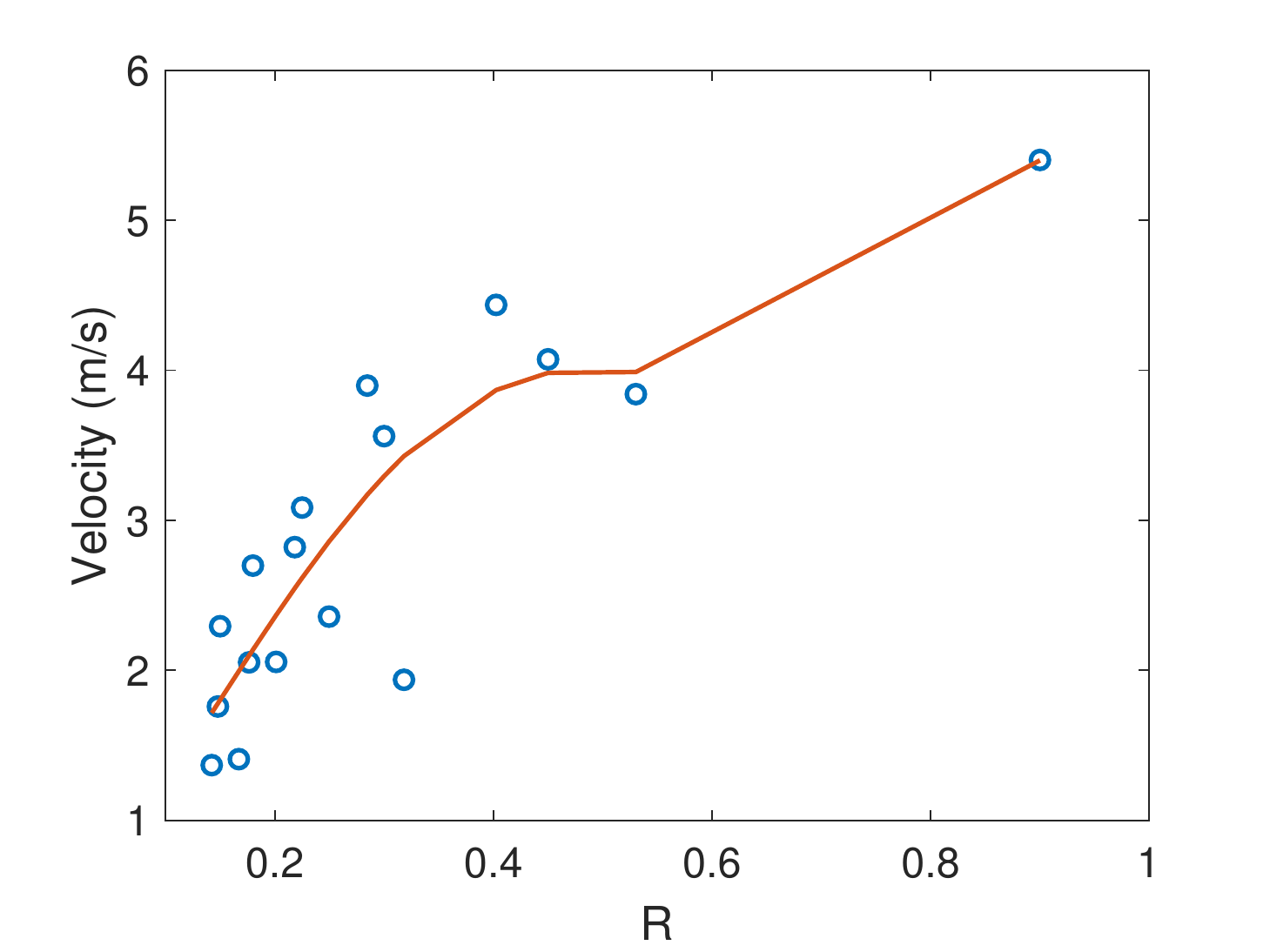}
			%\caption{fig1}
		\end{minipage}%
	}%
	\subfigure{
		\begin{minipage}[t]{0.5\linewidth}
			\centering
			\includegraphics[width=1\textwidth]{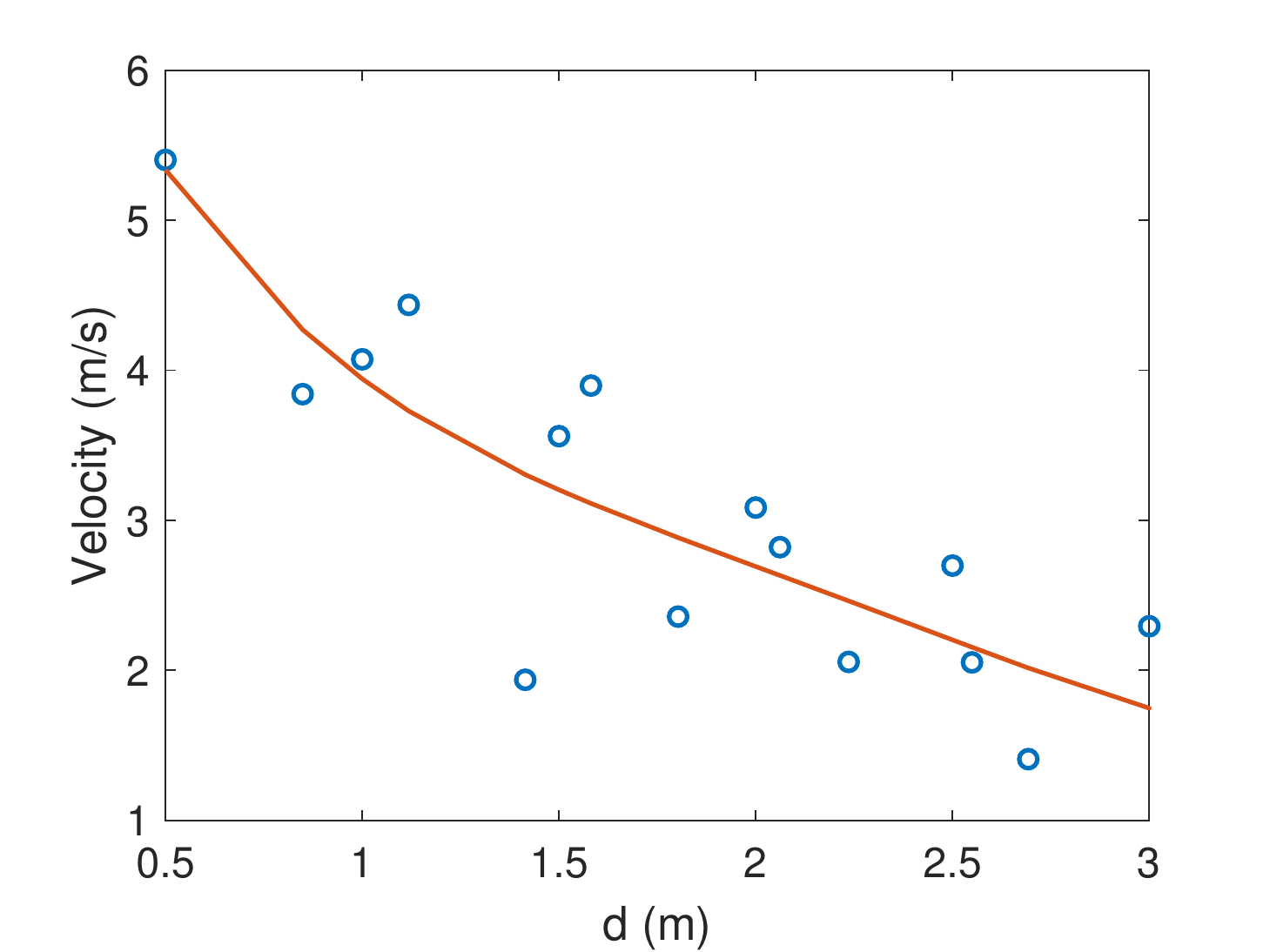}
			%\caption{fig1}
		\end{minipage}%
	}%

	\caption{The plot of measured wind speed at various distances $d$ (in blue circle) and the fitted fourth-order polynomial as in \eqref{eq:fitted_poly} as a function of $R=\frac{r_f}{d}$ with $r_f$ be the fan blade radius (in solid red line).}
	\label{Airflow Model}
\end{figure}

\begin{figure}[htbp]
	\centering
	
\begin{tabular}{cc}
	\subfigure{
		\begin{minipage}[t]{0.5\linewidth}
			\centering
			\includegraphics[width=1\textwidth]{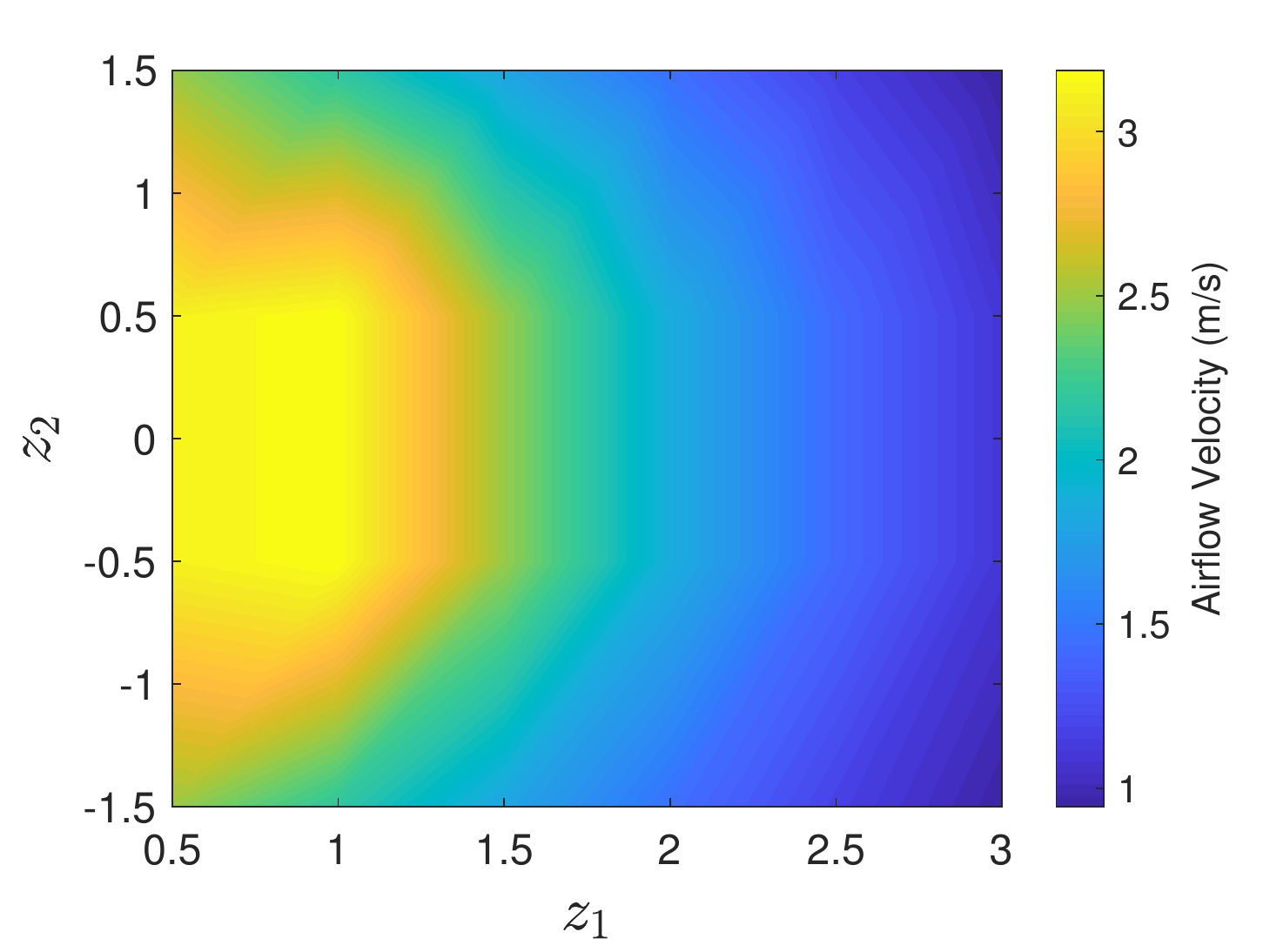}
		\end{minipage}}
    \subfigure{
        \begin{minipage}[t]{0.5\linewidth}
			\centering
			\includegraphics[width=1\textwidth]{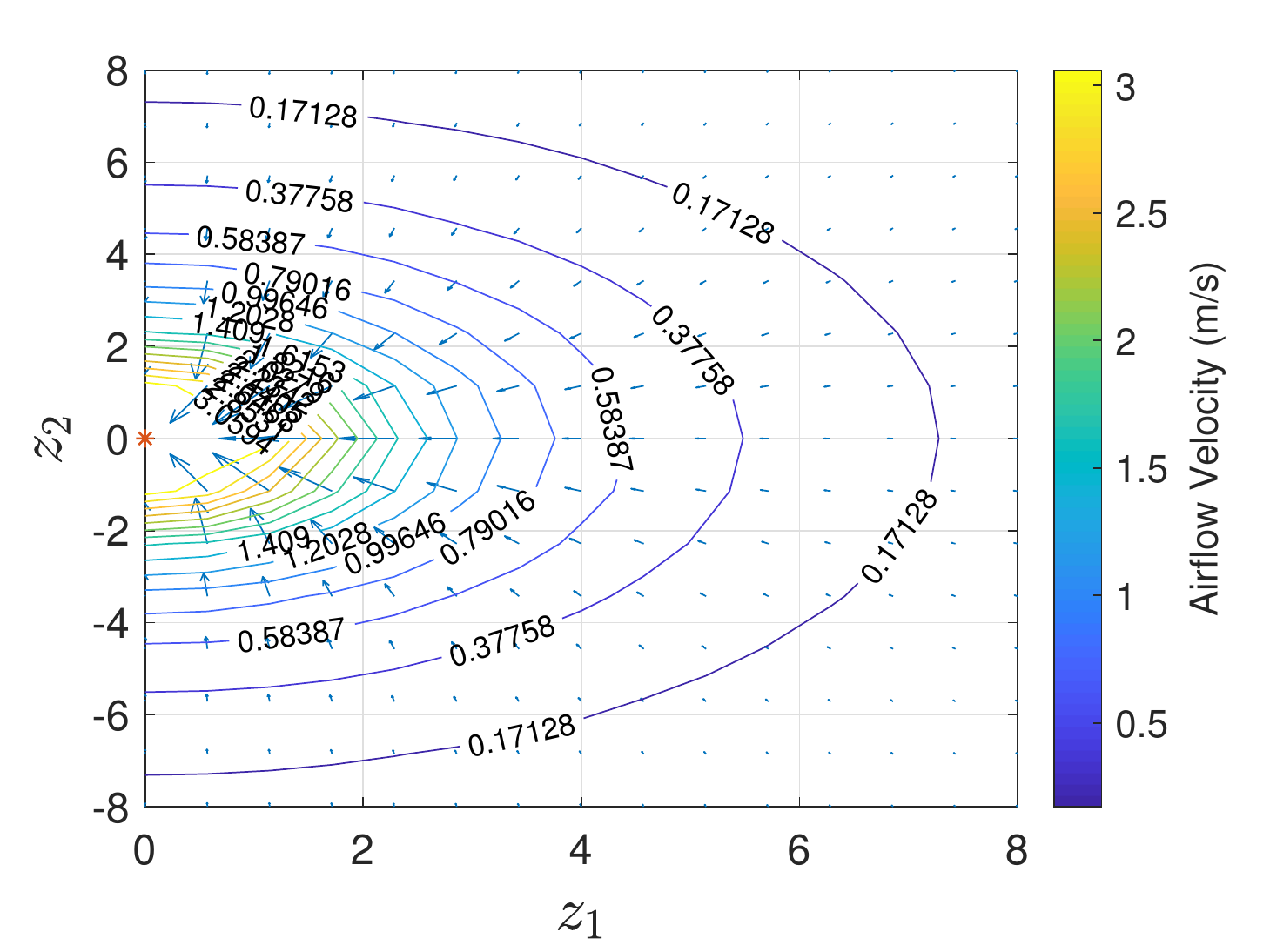}
		\end{minipage}}
	\end{tabular}
\centering
    \caption{The plot of air flow field based on the fitted fourth-order polynomial as in \eqref{eq:fitted_poly}. The location of the fan is at $(0,0)$.}
	\label{Airflow Map of the Environment}
\end{figure}

For processing the ADC input signal from each of the four flexible sensors, we used linear fitting to the calibration curves that have been obtained before in Section \ref{sec:3D-sensor} in order to get the estimated wind flow strength from the measured resistance change in the Wheatstone bridge. The linear fit simplified the signal processing and it would not affect the performance of source-seeking algorithm as it did not change the extremum location and did not alter the concave property of the underlying potential function. As an example, for one of the sensors, the nominal resistance was $47.6k\Omega$ and the linear estimation of the air flow strength $S_1$ used in \eqref{eq:sensor1}-\eqref{eq:sensor2} is given by
\begin{equation}
    S_1= 23.80 \frac{\Delta R_{1}}{R_{1,0}},
\end{equation}
with the nominal resistance $R_{1,0}=47600 \Omega$. The resistance changes $\Delta R_i$, $i=1,2,3,4$, were obtained by converting the measured voltage across the piezoresistive sensors using the 10-bit ADC of the Arduino and the resistance changes were subsequently computed by solving the relation of resistances in each of the four different Wheatstone bridges.

\subsection{Experimental Results}

As described before in Subsection \ref{sec:sensor_reading}, at any given time, only two of the sensors reading will be used to estimate the gradient. For each experiment, the outputs of four channels 10-bit ADC were always calibrated to the mid-voltage of the ADC so that the resistance changes due to the tension or to the compression of the cantilever can properly be measured.
For both proposed control laws, we set the gains of the projected gradient-ascent control part the same, and they were given by
$k_1=1$ and $k_2=20$. The accompanying video of the experimental results is publicly accessible at  \href{https://youtu.be/y3OoRu5GX3M}{https://youtu.be/y3OoRu5GX3M}.

\subsubsection{Projected gradient-ascent control law}
Figure \ref{fig:Experiment_1} shows the trajectories of the unicycle robot from three different initial conditions. The trajectories were overlaid on top of the air flow field map as identified before in Figure \ref{Airflow Map of the Environment} where the fan was located at the origin. As the mobile robot uses only local sensor systems and was not equipped with a local/global positioning system, the overlaying was done by using the recorded on-board odometry measurements.
As shown in the figure, the resulting trajectories were affected significantly by the noisy measurement in the sensor systems that was attributed mainly to the turbulent flow on the cantilever as well as the body vibrations coming from the motion. Despite this, the proposed control law is able to seek the source robustly against such disturbances.

\begin{figure}[htbp]
	\centering
	
	\subfigure[Experiment 1]{
		\begin{minipage}[t]{0.5\linewidth}
			\centering				\includegraphics[width=1\textwidth]{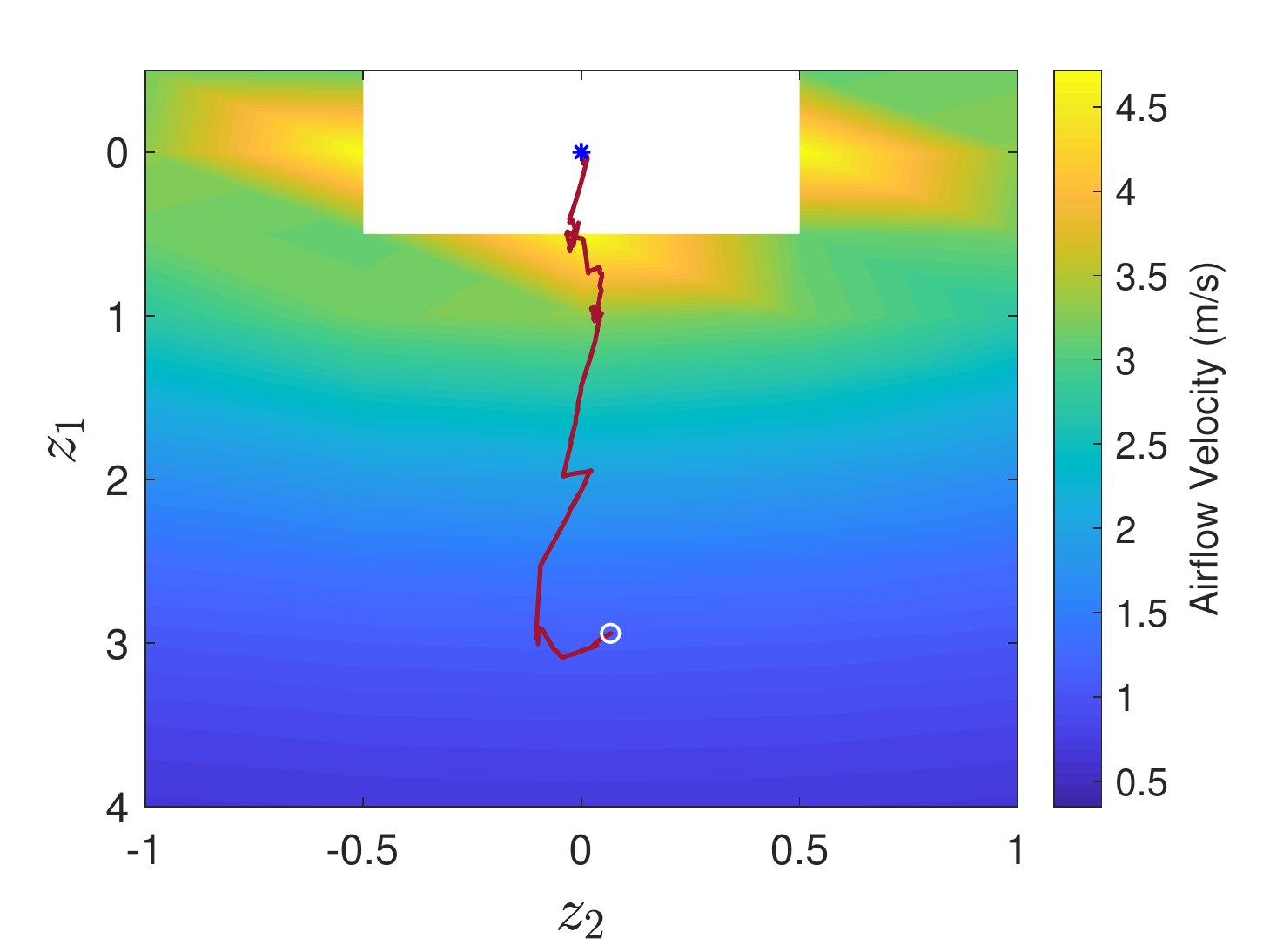}
			%\caption{fig1}
		\end{minipage}%
	}%
	\subfigure[Experiment 2]{
		\begin{minipage}[t]{0.5\linewidth}
			\centering
		\includegraphics[width=1\textwidth]{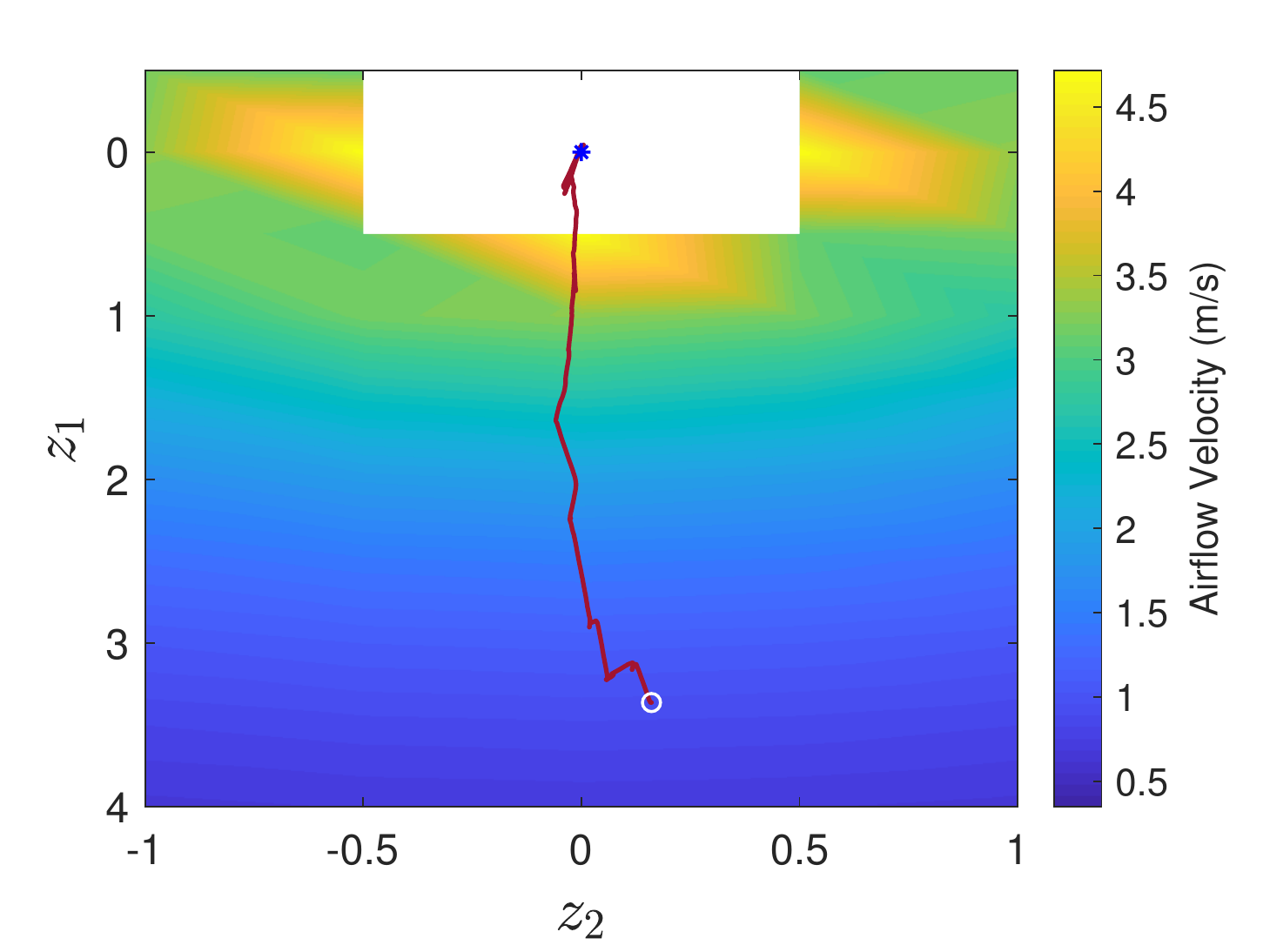}
			%\caption{fig2}
		\end{minipage}%
	}%
	
	\subfigure[Experiment 3]{
		\begin{minipage}[t]{0.5\linewidth}
			\centering
			\includegraphics[width=1\textwidth]{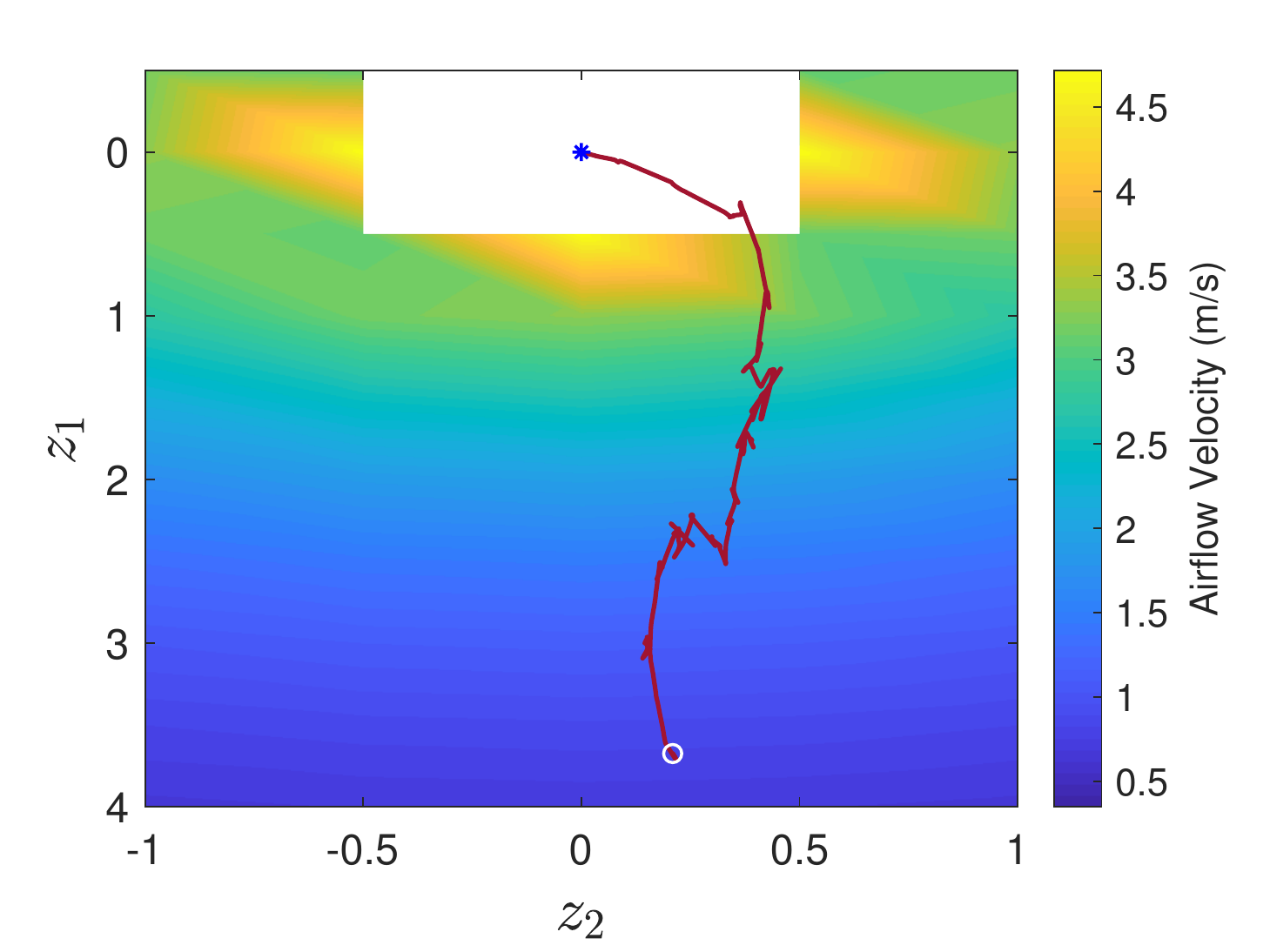}
		\end{minipage}%
	}%
	
	\centering
	\caption{Experimental results of the closed-loop system with the projected gradient-ascent control law. The plots are the resulting robot trajectories started from three different initial positions.}
	\label{fig:Experiment_1}
\end{figure}

\subsubsection{ESC-based projected gradient-ascent control law}

As described before, the ESC-based control law is designed as a redundant controller in dealing with partial sensors' failure. %Since the Extremum Seeking Controller is set as complementary one in case of the sensors' failure.
Therefore, in this experiment, we mounted only two sensors instead of all four sensors. % equipped in this experiment.
The parameters for high-pass filter were set as $a=0.8$ and $h=0.5$ while the dither signal constants are given by $C_{z1}=0.2$ and $C_{z2}=0.2$.

Figure \ref{fig:Experiment_2} shows the resulting trajectories of the robot with the ESC-based control law from different initial conditions. In contrast to the simulation results in the previous section, the robot performed larger rotational motion than the ones from the simulation which can be attributed to the use of low frequency dither signal $\omega_0$. Figure \ref{box_exp} depicts the corresponding time for robot approaching the final $20\%$ distance of the fan. As it shows, the higher $\omega_0$ leads to a slower overall source seeking motion as well as low amplitude oscillations.

In the experiments using Nexus robot, implementing the dither frequency higher than $2 rad/s$ was not possible due to the hardware and physical limitation of the wheels. Nevertheless, the experiments that were conducted have shown that the proposed ESC-based control law is still able to steer the robot towards the source. This demonstrates the robustness of the algorithm to the sensor noise.

\begin{figure}[htbp]
	\centering
	
	\subfigure[$\omega_0=0.2 rad/s$]{
		\begin{minipage}[t]{0.5\linewidth}
			\centering				\includegraphics[width=1\textwidth]{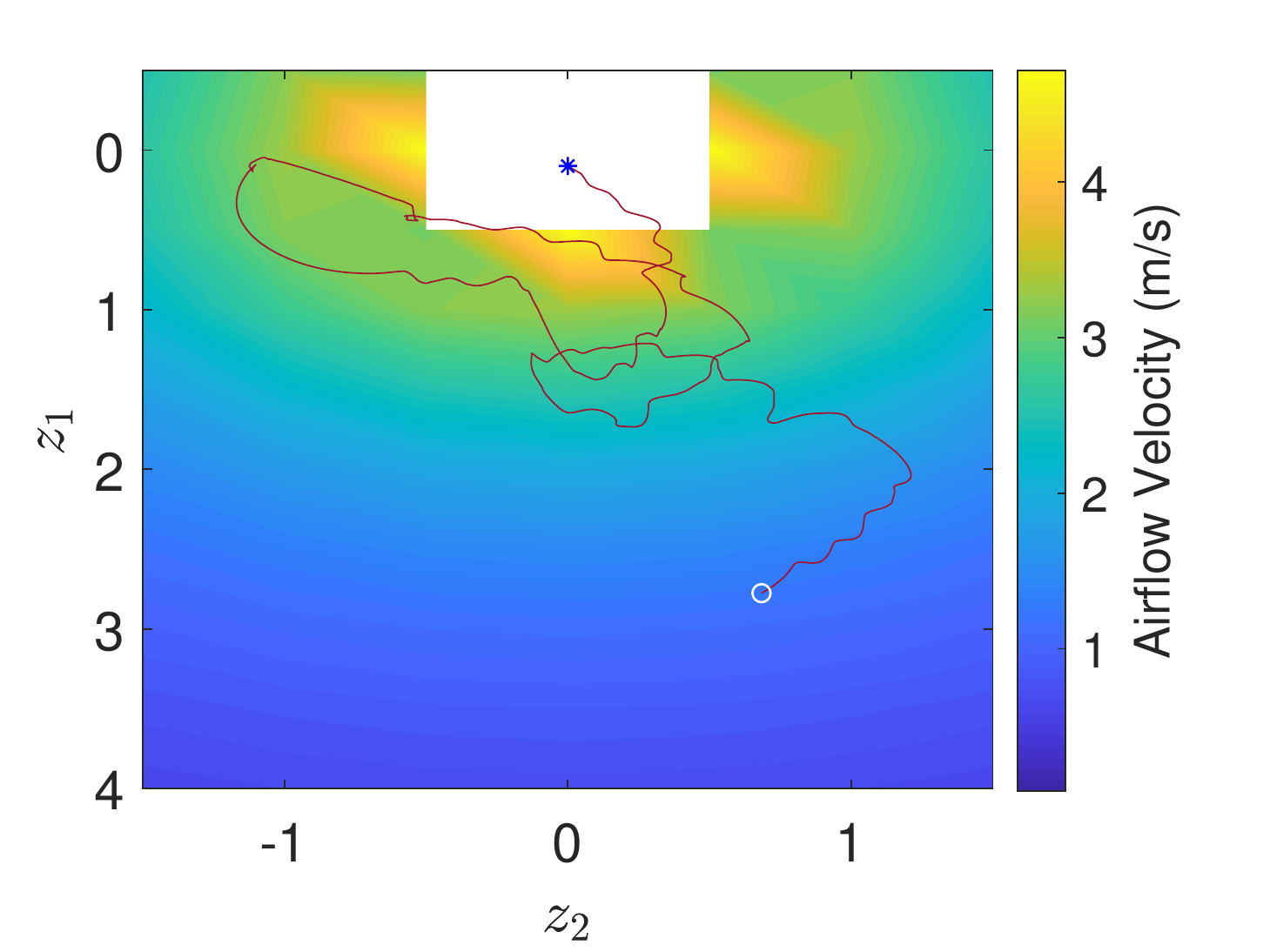}
				\end{minipage}%
	}%
	\subfigure[$\omega_0=0.2 rad/s$]{
		\begin{minipage}[t]{0.5\linewidth}
			\centering
		\includegraphics[width=1\textwidth]{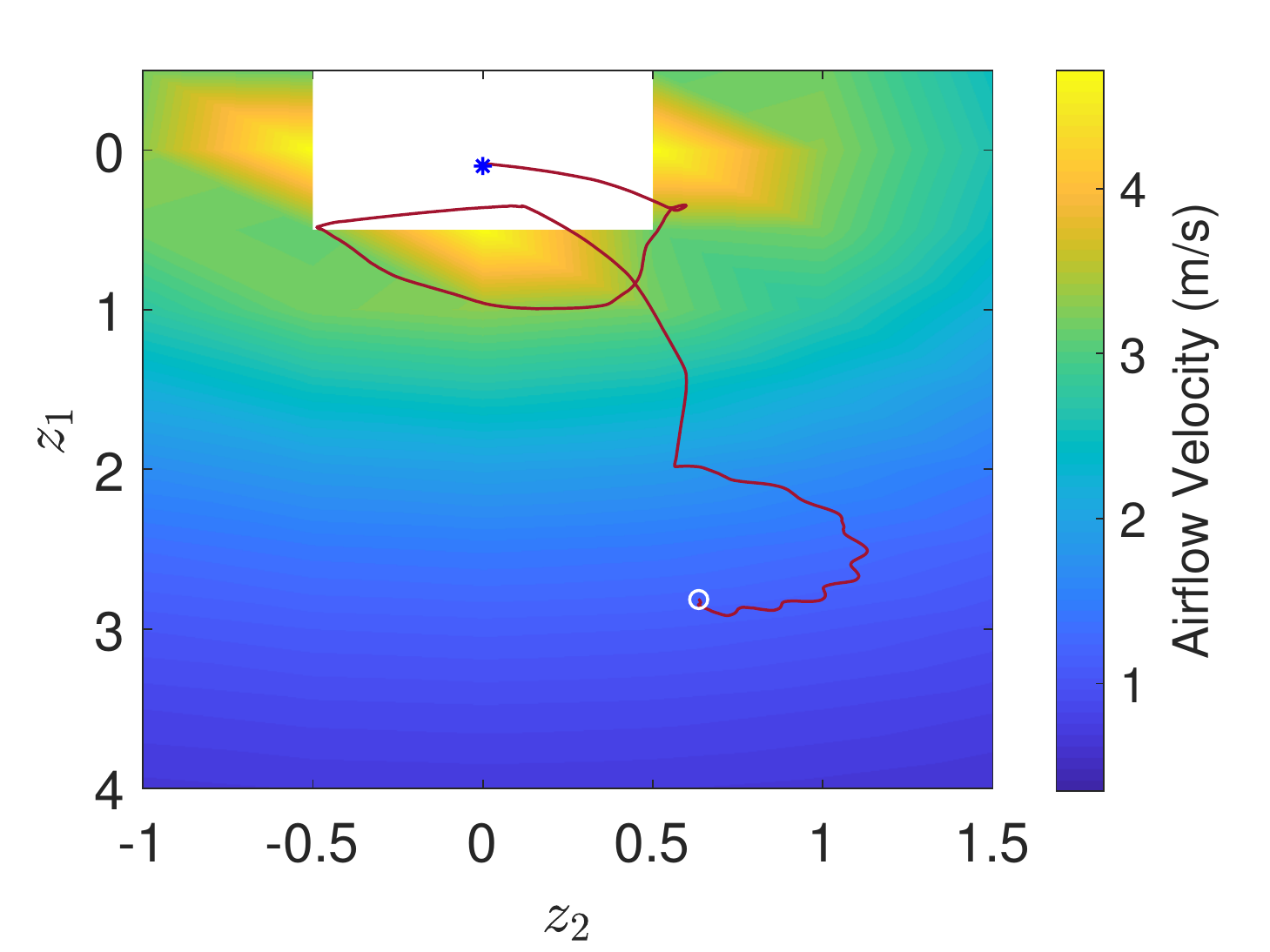}
				\end{minipage}%
	}%
	
		\subfigure[$\omega_0=0.4 rad/s$]{
		\begin{minipage}[t]{0.5\linewidth}
			\centering
		\includegraphics[width=1\textwidth]{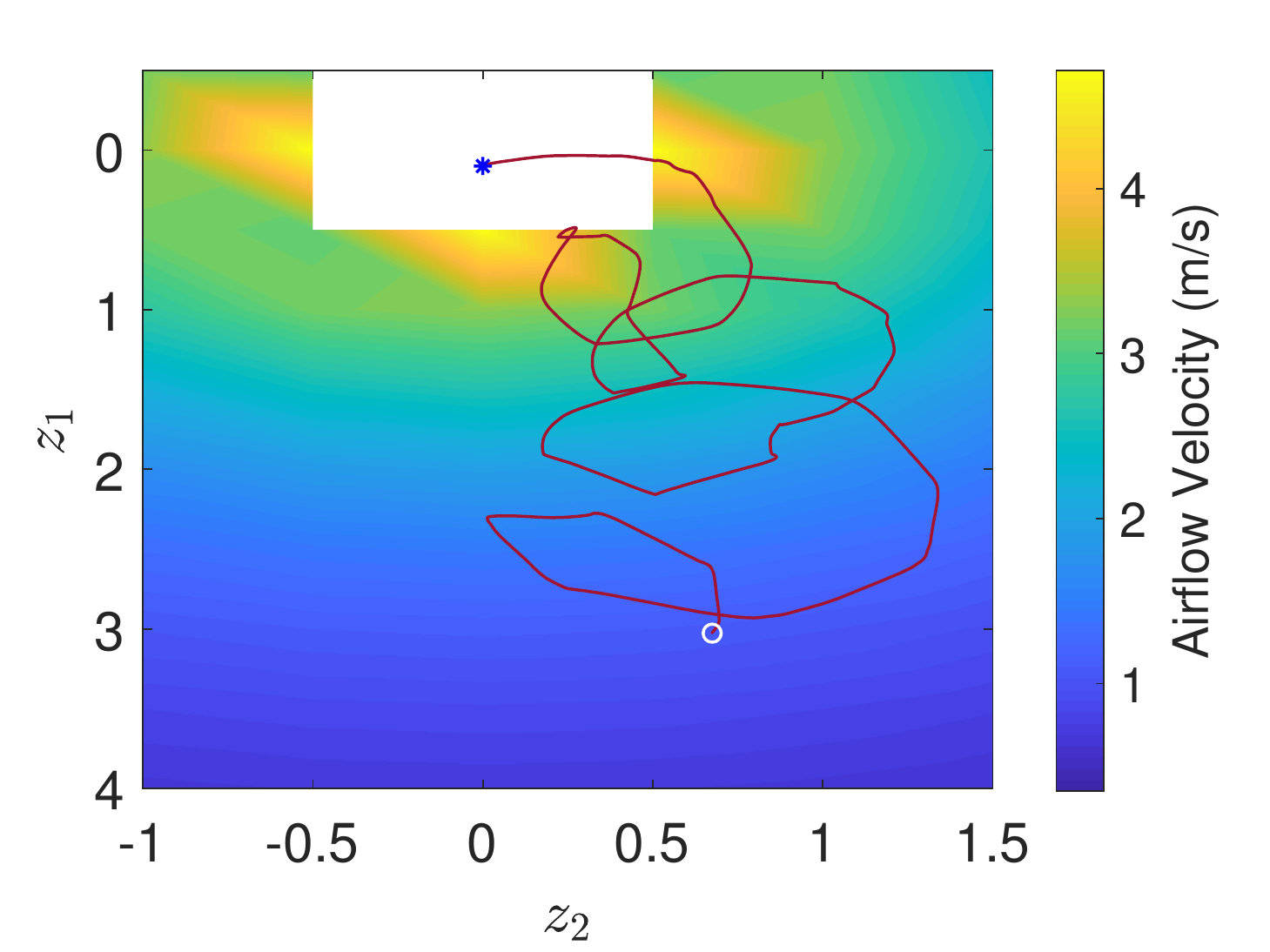}
				\end{minipage}%
	}%
	\subfigure[$\omega_0=0.6 rad/s$ (Experiment 1)]{
		\begin{minipage}[t]{0.5\linewidth}
			\centering				\includegraphics[width=1\textwidth]{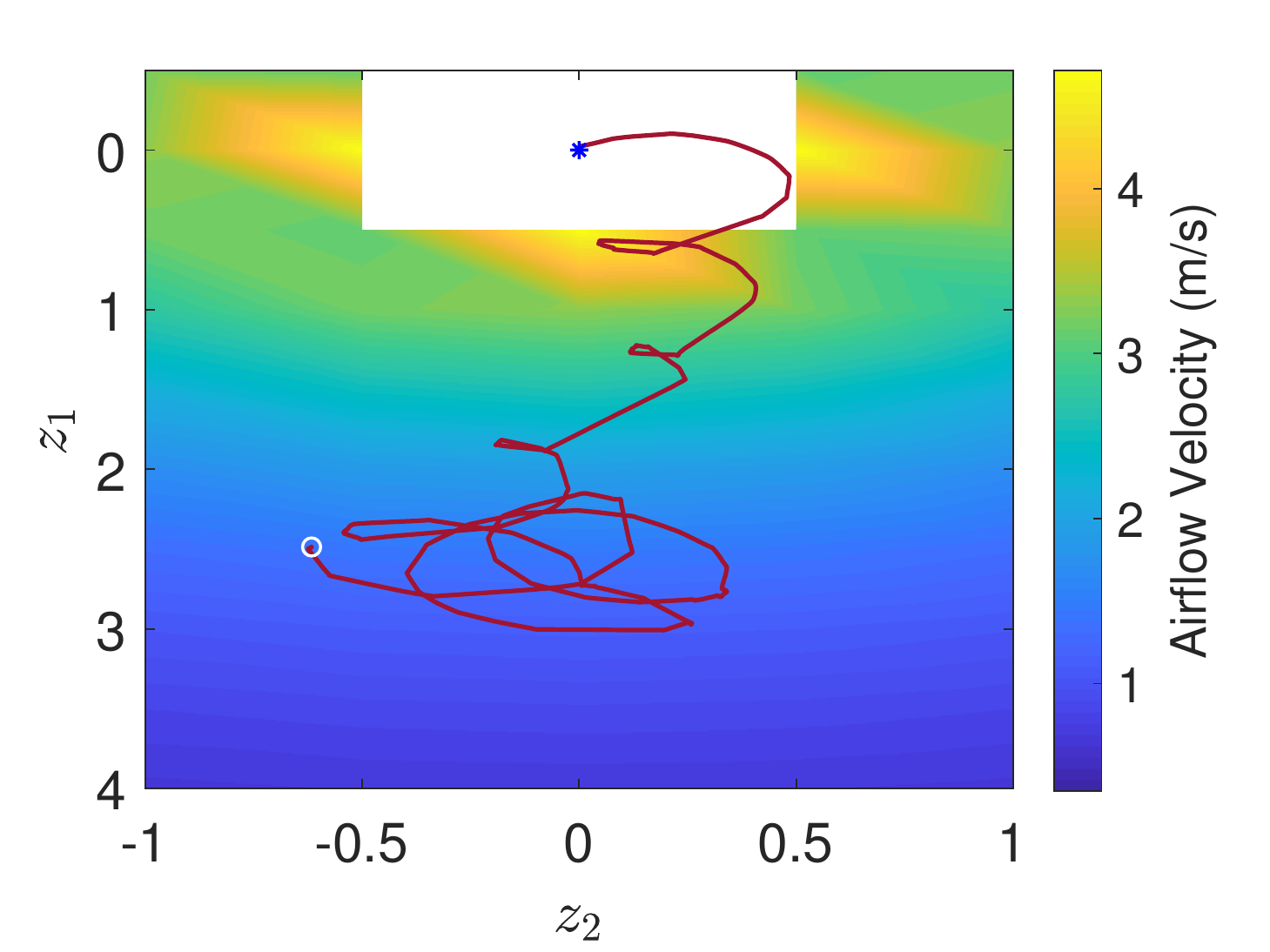}
				\end{minipage}%
	}%
	
	\subfigure[$\omega_0=0.6 rad/s$ (Experiment 2)]{
		\begin{minipage}[t]{0.5\linewidth}
			\centering
		\includegraphics[width=1\textwidth]{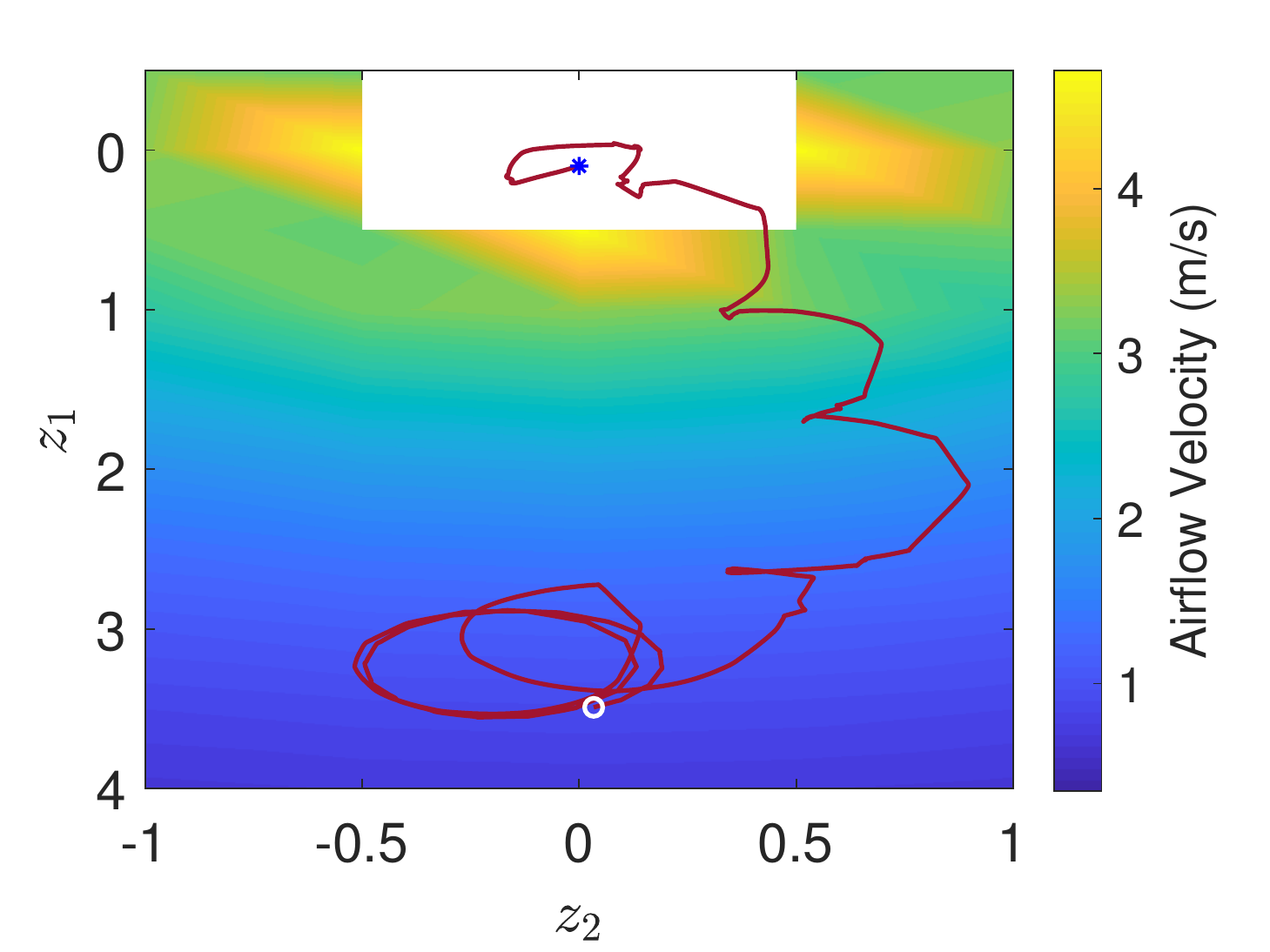}
				\end{minipage}%
	}%
	\subfigure[$\omega_0=1.5 rad/s$]{
		\begin{minipage}[t]{0.5\linewidth}
			\centering
		\includegraphics[width=1\textwidth]{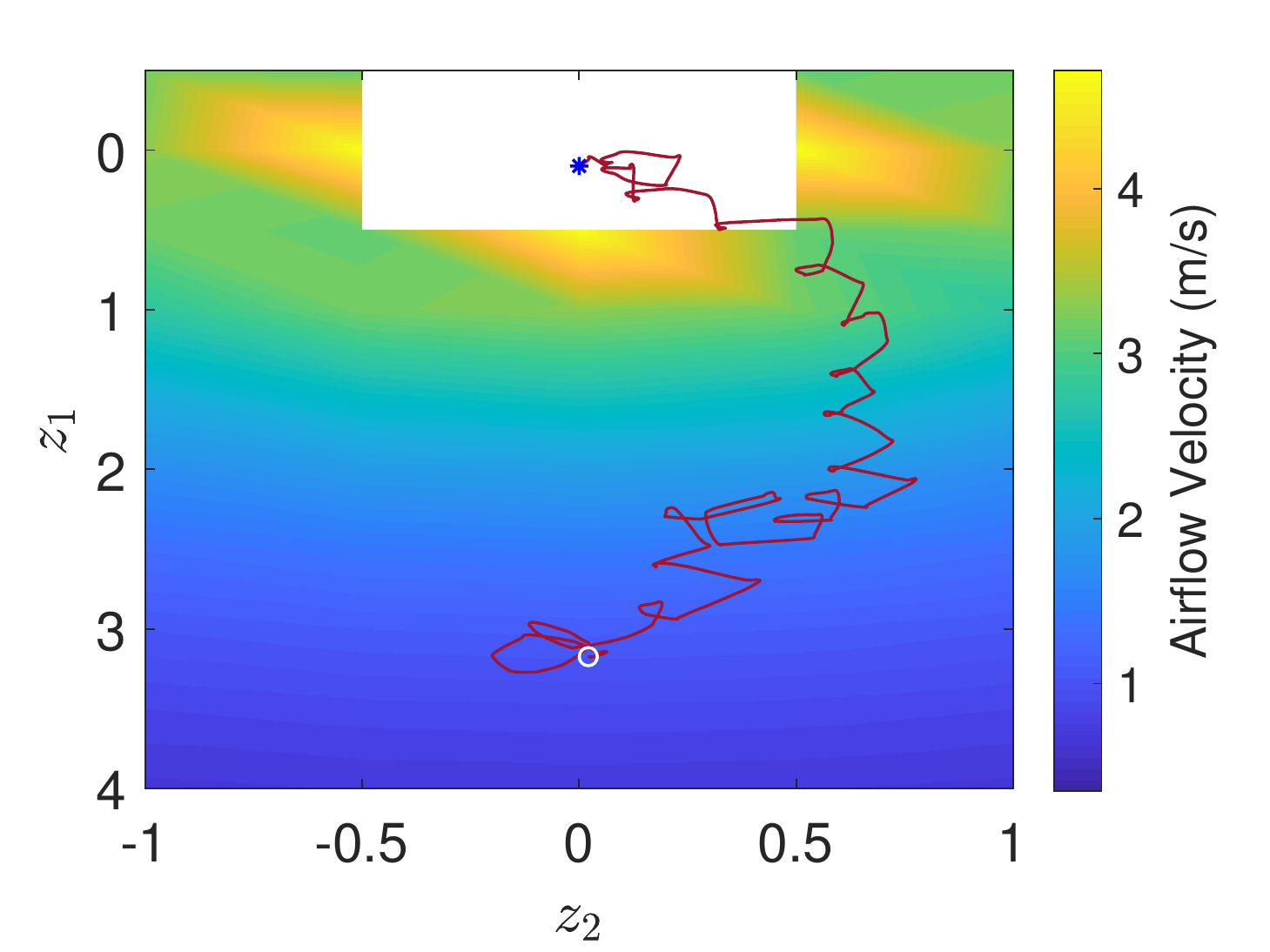}
				\end{minipage}%
	}%
	
	\centering
	\caption{Experimental results of the closed-loop system with the ESC-based projected gradient-ascent control law. The plots are the resulting robot trajectories started from different initial positions, while the dither signal was set to be $\omega_0=0.2, 0.4, 0.6, 1.5 rad/s$, respectively.}
	\label{fig:Experiment_2}
\end{figure}

\begin{figure}[htbp]
	\centering
	\includegraphics[width=0.4\textwidth]{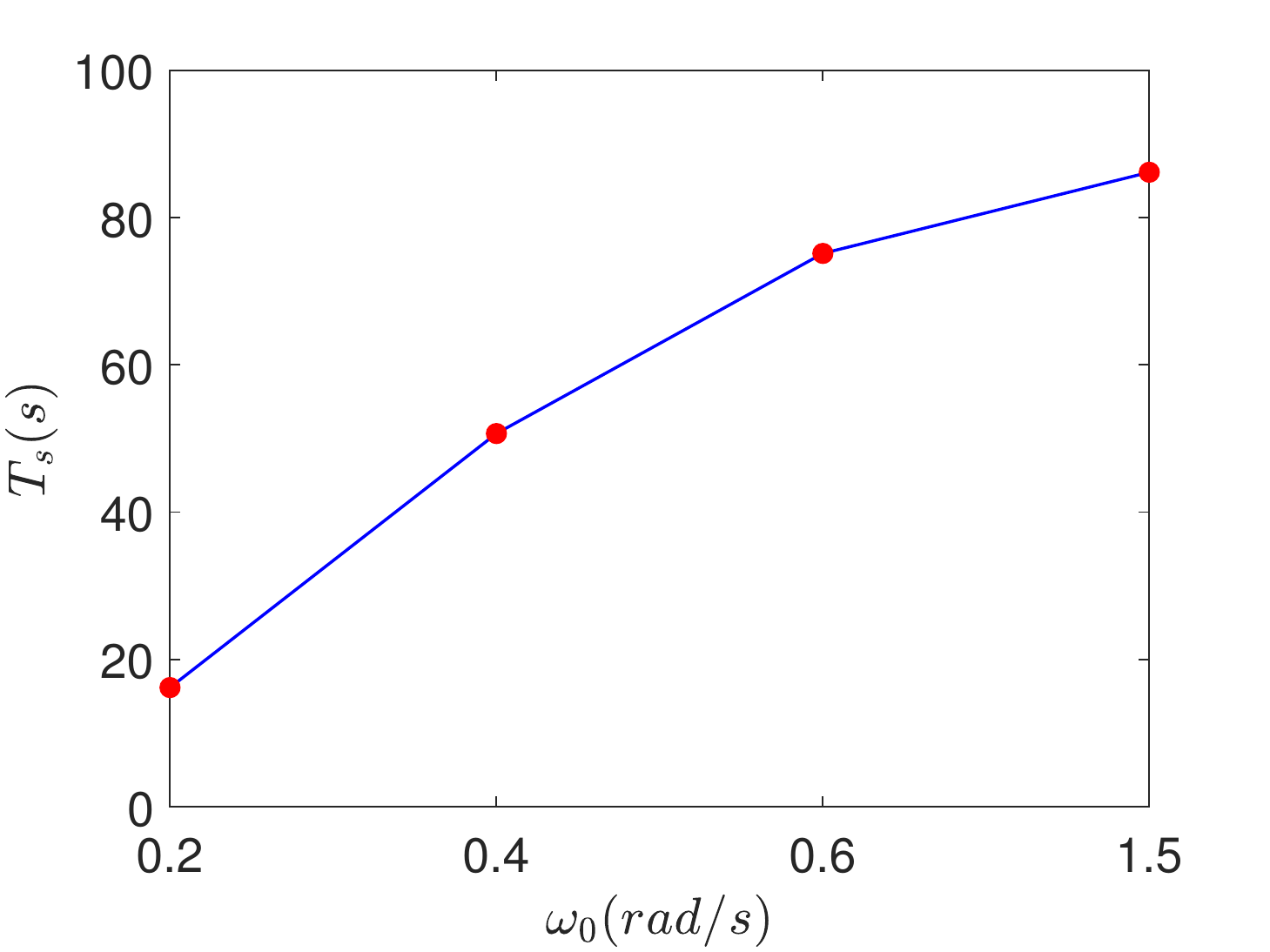}
	\caption{The result of experimental $T_s$ w.r.t the increasing $\omega_0=0.2,0.4,0.6,1.5 rad/s$, respectively. $T_s$ represents the time that robot takes to approach the final $20\%$ distance of the fan.}
	\label{box_exp}
\end{figure}

\section{Conclusions}
In this paper, we have presented two source seeking control laws for unicycle mobile robot that are based on a projected gradient-ascent method and on a combination thereof with the Extremum Seeking Control approach in order to deal with incomplete information due to sensor's failure. We evaluated the efficacy of the proposed control laws for seeking the airflow source where we designed and deployed 3D-printed graphene-based piezoresistive airflow sensors. Both simulation and experimental results showed that the proposed controllers and the novel sensor systems were able to locate the airflow source robustly in the presence of the hysteresis nonlinearity in the piezoresistive sensors and with respect to sensor noise that comes from the turbulent flow and body vibrations.

\section*{Acknowledgment}
The authors would like to acknowledge Bartje Alewijnse for the preliminary work on the source-seeking problem for unicycle, Simon Busman for setting up the Robot Operating Systems and embedded systems, Martin Stokroos for the  development of the differential analog input shield and the software interface for the Arduino-based omniwheel control electronics, and David Veninga for the the sensor calibration using the bench-top wind tunnel in Figure 4. We further thank Simon Busman and Martin Stokroos for their comments on the manuscript.

\section*{Appendix}
 \label{Appendix}
\begin{proof}
For the proof of Proposition \ref{prop_2}, firstly, by denoting $g(t) = \delta(t) - g_\ell(t)$, $t\geq 0$, as the convolution kernel of the high-pass filter $\frac{s}{s+h}$ with $\delta$ be the Dirac function and $g_\ell(t)=he^{-ht}$, we define the following error variable
\begin{equation}
e(t)=(g_\ell*J)(t)-J^*,
\end{equation}
where $*$ denotes the convolution operator. If $\Delta(t)$ denotes the output of the ``washout'' filter (as shown in Figure \ref{Model of Extremum Seeking Control}) then it follows that
\begin{align}
\nonumber \Delta(t) & =(g*J)(t)=J(t)-(g_\ell * J)(t)=J(t)-J^*-e(t)\\
\label{eq:delta}&=-c_{1}(z_1(t)-z_1^*)^2-c_{2}(z_2(t)-z_2^*)^2-e(t).
\end{align}

Let us define the following shifted variables
\begin{equation}
\begin{bmatrix}
\tilde{z}_1(t) \\
\tilde{z}_2(t)  \\
\tilde{\theta}(t)
\end{bmatrix}=\begin{bmatrix}
z_1(t)-z^*_1-a\sin(\omega_0 t)    \\
z_2(t)-z^*_2+a\cos(\omega_0 t)     \\
\theta-\theta^*
\end{bmatrix}
\end{equation}
where $\theta^* \in [0,2\pi)$ will be any stationary heading angle
when the robot is at the maximum position. Following a similar proof to that of Proposition \ref{prop_1}, we can define

\begin{equation*}
%\vec{v}=
\begin{bmatrix}
z_3 \\z_4
\end{bmatrix}=\begin{bmatrix}
\cos(\theta)\\
\sin(\theta)
\end{bmatrix}=\begin{bmatrix}
\cos(\tilde{\theta}+\theta^*)  \\
\sin(\tilde{\theta}+\theta^*),
\end{bmatrix}
\end{equation*}
so that the closed-loop system when we use the projected gradient-ascent control law is as in \eqref{system_eq_new}. In the ESC-based approach, we simply replace $\nabla J$ and $\nabla J^\perp$ with their approximated ones as in \eqref{eq:widehat_J}.

Let us now consider a new time scale given by %Then, in the time scale
$\tau=\omega_0 t$. It follows then that %, we have
\begin{equation}
\frac{\dd e}{\dd\tau }=-\frac{h}{\omega_0}\big(c_{1}(\tilde{z}_1+a\sin(\tau))^2+c_{2}(\tilde{z}_2-a\cos(\tau))^2+e\big),
\end{equation}
where we have used the relation that $\frac{\dd}{\dd t}(g_\ell * J)(t) = h(g*J)(t)$. Similarly, for $\tilde z_1$, $\tilde z_2$ and $\tilde \theta$, we have
\begin{align}
\nonumber \frac{\dd\tilde{z}_1}{\dd\tau }&=\frac{1}{\omega_0 }\frac{\dd\tilde{z}_1}{\dd t} =\frac{1}{\omega_0 }\frac{\dd(z_1-z_1^*-a\sin(\omega_0 t))}{\dd t}
\\
\label{eq:tilde_z1}&=\frac{1}{\omega_0 }(k_1 z_3(J_{z_1}z_3+J_{z_2}z_4)-a\omega_0\cos(\omega_0 t)),
\end{align}
\begin{align}
\nonumber \frac{\dd\tilde{z}_2}{\dd\tau }&=\frac{1}{\omega_0 }\frac{\dd\tilde{z}_2}{\dd t}=\frac{1}{\omega_0 }\frac{\dd(z_2-z_2^*+a\cos(\omega_0 t))}{\dd t}
\\
\label{eq:tilde_z2}&=\frac{1}{\omega_0 }(k_1 z_4(J_{z_1}z_3+J_{z_2}z_4)-a\omega_0\sin(\omega_0 t))
\end{align}
and
\begin{align}
\nonumber \frac{\dd\tilde{\theta}}{\dd\tau }&=\frac{1}{\omega_0 }\frac{\dd\tilde{\theta}}{\dd t}=\frac{1}{\omega_0 }\frac{\dd(\theta-\theta^*)}{\dd t}
\\
\label{eq:tilde_theta}&=-\frac{1}{\omega_0 }k_2(-J_{z_2}z_3+J_{z_1}z_4)
\end{align}

As we are using the dither signal, which has the period of $2k\pi$ for any positive integer $k$ (in the time scale $\tau$), we will analyze the closed-loop systems behavior by analyzing the associated averaged systems.
Using the time-scale $\tau$, we can analyze the averaged systems on the time interval $\lambda \in \left [ \tau ,\tau+2k\pi \right)$. Since the dither signal has a sufficiently high frequency such that the trajectories are approximately constant in this time interval, we can assume that $z_3\left(\frac{\lambda }{\omega _0}\right)\approx z_3\left(\frac{\tau }{\omega _0}\right)$, $z_4\left(\frac{\lambda }{\omega _0}\right)\approx z_4\left(\frac{\tau }{\omega _0}\right)$ and $\Delta \left(\frac{\lambda }{\omega _0}\right)\approx \Delta \left(\frac{\tau }{\omega _0}\right)$ for all  $\lambda \in \left [ \tau ,\tau+2k\pi \right ]$.
Computing the averaged equation to the $\tilde z_1$-system, we obtain
\begin{align*}
& \frac{\dd\tilde{z}_{1,\text{avg}}}{\dd\tau}(\tau) =\frac{1}{2k\pi}\int_{\tau}^{\tau+2k\pi}\frac{\dd\tilde{z}_1}{\dd \tau}(\lambda)\dd\lambda
\\
&=\frac{1}{\omega_0}\frac{1}{2k\pi}\int_{\tau}^{\tau+2k\pi}\left(\bluff k_1 z_3\left(\frac{\lambda}{\omega_0}\right) \left[\left(\bluff C_{z_1}\Delta\left(\frac{\lambda}{\omega_0}\right)\sin(\lambda)\right.\right. \right.
\\&\qquad+\left. \bluff a\omega_0 \cos(\lambda)\right) z_3\left(\frac{\lambda}{\omega_0}\right) + \left(\bluff a\omega_0 \sin(\lambda) \right.
\\&
\qquad
\left. \left.\bluff -C_{z_2}\Delta\left(\frac{\lambda}{\omega_0}\right) \cos(\lambda)\right)z_4\left(\frac{\lambda}{\omega_0}\right) \right]
\\&\qquad \left. \bluff -a\omega_0 \cos(\lambda)\right)\dd\lambda
\end{align*}
\begin{align}
\nonumber &\approx\frac{1}{\omega_0}\frac{1}{2k\pi}\int_{\tau}^{\tau+2k\pi}\left(\bluff k_1 z_3\left(\frac{\tau}{\omega_0}\right) \left[\left(\bluff C_{z_1}\Delta\left(\frac{\tau}{\omega_0}\right)\sin(\lambda)\right.\right. \right.
\\
\nonumber &\qquad+\left. \bluff a\omega_0 \cos(\lambda)\right) z_3\left(\frac{\tau}{\omega_0}\right) + \left(\bluff a\omega_0 \sin(\lambda) \right.
\\
\nonumber &
\qquad
\left. \left.\bluff -C_{z_2}\Delta\left(\frac{\tau}{\omega_0}\right) \cos(\lambda)\right)z_4\left(\frac{\tau}{\omega_0}\right) \right]
\\
\label{eq:z1_avg_dot}&\qquad \left. \bluff -a\omega_0 \cos(\lambda)\right)\dd\lambda
\end{align}

By expanding the first term above that involves $\Delta$ using \eqref{eq:delta}, we have \begin{align*}
&\int_\tau^{\tau+2k\pi} \Delta\left(\frac{\tau}{\omega_0}\right)\sin(\lambda) \dd\lambda \\
& = - \int_\tau^{\tau+2k\pi} c_1\left(\tilde z_1^2\left(\frac{\tau}{\omega_0}\right)\sin(\lambda) + 2a\tilde z_1\left(\frac{\tau}{\omega_0}\right)\sin^2(\lambda) \bluff \right.
\end{align*}
\begin{align}
\nonumber& \qquad  \left. \bluff + a^2\sin^3(\lambda)\right) \dd \lambda   - \int_\tau^{\tau+2k\pi} c_2\left(\tilde z_2^2\left(\frac{\tau}{\omega_0}\right)\sin(\lambda)\bluff \right. \\
\nonumber & \qquad \left. \bluff - 2a\tilde z_1\left(\frac{\tau}{\omega_0}\right)\sin(\lambda)\cos(\lambda) + a^2\cos^2(\lambda)\sin(\lambda)\right) \dd \lambda \\
\nonumber & \qquad - \int_\tau^{\tau+2k\pi} e\left(\frac{\tau}{\omega_0}\right)\sin(\lambda) \dd\lambda \\
\label{eq:first_delta}& = -2k\pi a c_1 \tilde{z}_{1,\text{avg}}.
\end{align}
Similar computation can be performed for second term in \eqref{eq:z1_avg_dot} that involves $\Delta$ which gives us
\begin{align}\label{eq:second_delta}
&\int_\tau^{\tau+2k\pi} \Delta\left(\frac{\tau}{\omega_0}\right)\cos(\lambda) \dd\lambda =  2k\pi a c_2  \tilde{z}_{2,\text{avg}}.
\end{align}
Substituting \eqref{eq:first_delta} and \eqref{eq:second_delta} into \eqref{eq:z1_avg_dot}, along with the fact that $z_3 = \cos(\theta)$ and $z_4 = \sin(\theta)$, yields
\begin{align*}
\frac{\dd\tilde{z}_{1,\text{avg}}}{\dd\tau} = & -\frac{ak_1\cos(\tilde{\theta}_{\text{avg}})}{\omega_0}%(ak_1\cos(\tilde{\theta}_{\text{avg}}))
\left(\bluff C_{z1}c_{1}\tilde{z}_{1,\text{avg}}\cos(\tilde{\theta}_{\text{avg}})\right.
\\& \left. \bluff+C_{z2}c_{2}\tilde{z}_{2,\text{avg}}\sin(\tilde{\theta}_{\text{avg}})\right)
%\end{aligned}
\end{align*}

Similarly, we can compute the averaged system to the $\tilde z_2$ system as follows.

\begin{align*}
%\begin{aligned}
& \frac{\dd\tilde{z}_{2,\text{avg}}}{\dd\tau}(\tau)=\frac{1}{2k\pi}\int_{\tau}^{\tau+2k\pi}\frac{\dd\tilde{z}_2}{\dd \tau}\dd\lambda\\
&=\frac{1}{\omega_0}\frac{1}{2k\pi}\int_{\tau}^{\tau+2k\pi}\left(\bluff k_1 z_4\left(\frac{\lambda}{\omega_0}\right) \left[\left(\bluff C_{z_1}\Delta\left(\frac{\lambda}{\omega_0}\right)\sin(\lambda)\right.\right. \right.
\\&\qquad+\left. \bluff a\omega_0 \cos(\lambda)\right) z_3\left(\frac{\lambda}{\omega_0}\right) + \left(\bluff a\omega_0 \sin(\lambda) \right.
\\&
\qquad
\left. \left.\bluff -C_{z_2}\Delta\left(\frac{\lambda}{\omega_0}\right) \cos(\lambda)\right)z_4\left(\frac{\lambda}{\omega_0}\right) \right]
\\&\qquad \left. \bluff -a\omega_0 \sin(\lambda)\right)\dd\lambda
\end{align*}
\begin{align}
\nonumber &\approx\frac{1}{\omega_0}\frac{1}{2k\pi}\int_{\tau}^{\tau+2k\pi}\left(\bluff k_1 z_4\left(\frac{\tau}{\omega_0}\right) \left[\left(\bluff C_{z_1}\Delta\left(\frac{\tau}{\omega_0}\right)\sin(\lambda)\right.\right. \right.
\\
\nonumber &\qquad+\left. \bluff a\omega_0 \cos(\lambda)\right) z_3\left(\frac{\tau}{\omega_0}\right) + \left(\bluff a\omega_0 \sin(\lambda) \right.
\\
\nonumber &
\qquad
\left. \left.\bluff -C_{z_2}\Delta\left(\frac{\tau}{\omega_0}\right) \cos(\lambda)\right)z_4\left(\frac{\tau}{\omega_0}\right) \right]
\\
\label{eq:z2_avg_dot}&\qquad \left. \bluff -a\omega_0 \sin(\lambda)\right)\dd\lambda
\end{align}
Following the same computation of the terms in \eqref{eq:z2_avg_dot} involving $\Delta$ as done before in \eqref{eq:first_delta} and \eqref{eq:second_delta}, it follows that
\begin{align*}
\frac{\dd\tilde{z}_{2,\text{avg}}}{\dd\tau}&=-\frac{ak_1\sin(\tilde{\theta}_{\text{avg}})}{\omega_0}\left(\bluff C_{z1}c_1\tilde{z}_{1,\text{avg}}\cos(\tilde{\theta}_{\text{avg}})\right.
\\& \left. +C_{z2}c_{2}\tilde{z}_{2,\text{avg}}\sin(\tilde{\theta}_{\text{avg}}) \bluff\right)
\end{align*}
Finally, we can compute the averaged system of $\tilde \theta$ as follows.

\begin{align*}
& \frac{\dd\tilde{\theta}_{\text{avg}}}{\dd\tau}(\tau)=\frac{1}{2k\pi}\int_{\tau}^{\tau+2k\pi}\frac{\dd\tilde{\theta}}{\dd\tau}\dd\lambda
\\
&=\frac{1}{\omega_0}\frac{1}{2k\pi}\int_{\tau}^{\tau+2k\pi} k_2 \left[-\left(\bluff -C_{z_2}\Delta\left(\frac{\lambda}{\omega_0}\right)\cos(\lambda)\right.\right.
\\&\qquad+\left. \bluff a\omega_0 \sin(\lambda)\right) z_3\left(\frac{\lambda}{\omega_0}\right) + \left(\bluff a\omega_0 \cos(\lambda) \right.
\\&
\qquad
\left. \left.\bluff +C_{z_1}\Delta\left(\frac{\lambda}{\omega_0}\right) \sin(\lambda)\right)z_4\left(\frac{\lambda}{\omega_0}\right) \right] \dd\lambda
\end{align*}
\begin{align}
\nonumber &\approx\frac{1}{\omega_0}\frac{1}{2k\pi}\int_{\tau}^{\tau+2k\pi} k_2 \left[-\left(\bluff -C_{z_2}\Delta\left(\frac{\tau}{\omega_0}\right)\cos(\lambda)\right.\right.
\\
\nonumber &\qquad+\left. \bluff a\omega_0 \sin(\lambda)\right) z_3\left(\frac{\tau}{\omega_0}\right) + \left(\bluff a\omega_0 \cos(\lambda) \right.
\\
\label{eq:theta_avg_dot} &
\qquad
\left. \left.\bluff +C_{z_1}\Delta\left(\frac{\tau}{\omega_0}\right) \sin(\lambda)\right)z_4\left(\frac{\tau}{\omega_0}\right) \right]
\dd\lambda
\end{align}
As before, substituting \eqref{eq:first_delta} and \eqref{eq:second_delta} into the above equation yields
\begin{align*}
\frac{\dd\tilde{\theta}_{\text{avg}}}{\dd\tau}&=-\frac{ak_2}{\omega_0}\left(\bluff C_{z1}c_{1}\tilde{z}_{1,\text{avg}}\sin(\tilde{\theta}_{\text{avg}}) \right.
\\&\qquad\left.\bluff - C_{z2}c_{2}\tilde{z}_{2,\text{avg}}\cos(\tilde{\theta}_{\text{avg}})\right)
\end{align*}
By defining the following auxiliary state variables
\begin{equation}
\begin{bmatrix}
z_5 \\ z_6
\end{bmatrix}=\begin{bmatrix}
\cos(\tilde{\theta}_{\text{avg}}) \\ \sin(\tilde{\theta}_{\text{avg}})
\end{bmatrix}
\end{equation}
the averaged closed-loop system in the time-scale $\tau$ is then given by
\begin{equation}
\begin{bmatrix}
\dot{\tilde{z}}_{1,\text{avg}}  \\
\dot{\tilde{z}}_{2,\text{avg}}  \\
\dot{z}_5   \\
\dot{z}_6
\end{bmatrix}=\begin{bmatrix}
-\frac{ak_1\cos(\tilde{\theta}_{\text{avg}})}{\omega_0}\big( C_{z1}c_{1}\tilde{z}_{1,\text{avg}}\cos(\tilde{\theta}_{\text{avg}})\\+C_{z2}c_{2}\tilde{z}_{2,\text{avg}}\sin(\tilde{\theta}_{\text{avg}})\big)  \\
-\frac{ak_1\sin(\tilde{\theta}_{\text{avg}})}{\omega_0}\big(C_{z1}c_{1}\tilde{z}_{1,\text{avg}}\cos(\tilde{\theta}_{\text{avg}})\\+C_{z2}c_{2}\tilde{z}_{2,\text{avg}}\sin(\tilde{\theta}_{\text{avg}})\big)  \\
\frac{ak_2\sin(\tilde{\theta_{\text{avg}}})}{\omega_0}\big(C_{z1}c_{1}\tilde{z}_{1,\text{avg}}\sin(\tilde{\theta}_{\text{avg}})\\-C_{z2}c_{2}\tilde{z}_{2,\text{avg}}\cos(\tilde{\theta}_{\text{avg}})\big) \\
-\frac{ak_2\cos(\tilde{\theta_{\text{avg}}})}{\omega_0}\big(C_{z1}c_{1}\tilde{z}_{1,\text{avg}}\sin(\tilde{\theta}_{\text{avg}})\\-C_{z2}c_{2}\tilde{z}_{2,\text{avg}}\cos(\tilde{\theta}_{\text{avg}})\big)
\end{bmatrix}.
\end{equation}

We will now analyze the stability of the averaged closed-loop system above with state variables $\tilde z_{\text{avg}}=\bbm{\tilde{z}_{1,\text{avg}} & \tilde{z}_{2,\text{avg}} & z_5 & z_6}^T$. For this purpose, we consider the following function
\begin{equation}
V(\tilde z_\text{avg})=\frac{1}{2}\left(C_{z1}c_{1}\tilde{z}^2_{1,\text{avg}}+C_{z2}c_{2}\tilde{z}^2_{2,\text{avg}}+z^2_{5}+z^2_{6}\right)
\end{equation}
which is positive definite and radially unbounded. A routine computation on its time derivative gives us
\begin{align}
\nonumber \dot{V}&=C_{z1}c_{1}\tilde{z}_{1,\text{avg}}\dot{\tilde{z}}_{1,\text{avg}}+C_{z2}c_{2}\tilde{z}_{2,\text{avg}}\dot{\tilde{z}}_{2,\text{avg}}+\tilde{z}_{5}\dot{\tilde{z}}_{5}+\tilde{z}_{6}\dot{\tilde{z}}_{6}
\\
\nonumber &=-\frac{ak_1}{\omega_0}\left(\tilde{z}_{1,\text{avg}}z_5C_{z1}c_{1}+\tilde{z}_{2,\text{avg}}z_6C_{z2}c_{2}\right)^2 \\
\label{eq:V_dot_ztilde}& = - \frac{ak_1}{\omega_0}\left(\bbm{C_{z_1}c_1\tilde z_{1,\text{avg}} & C_{z_2}c_2\tilde z_{2,\text{avg}}}\bbm{z_5 \\ z_6}\right)^2 \leq 0.
\end{align}

By the radial unboundedness of $V$, the above inequality implies that $\tilde z_{\text{avg}}$ is bounded. Following the similar arguments as in the proof of Proposition \ref{prop_1}, we will prove now the convergence of \eqref{eq:ESC_convergence}. By the La-Salle's invariance principle, the compactness of the solution of $\tilde z_{\text{avg}}$ means that it will converge to the largest invariance set $\Omega $ where $\dot{V}(\tau)=0$ for all $\tau\geq 0$. In other words, in $\Omega$, $\tilde z_{\text{avg}}$  satisfy
\begin{equation}\label{eq:M_in_omega}
\bbm{C_{z_1}c_1\tilde z_{1,\text{avg}}(\tau) & C_{z_2}c_2\tilde z_{2,\text{avg}}(\tau)}\bbm{z_5(\tau) \\ z_6(\tau)}=0
\end{equation}
for all $\tau \geq 0$.

In the following, for showing that in $\Omega$, $\tilde z_{1,\text{avg}}=0$ and $\tilde z_{2,\text{avg}}=0$, we will use a contradiction. Suppose now that $\tilde z_{1,\text{avg}}\neq 0$ or $\tilde z_{2,\text{avg}}\neq 0$ in $\Omega$. Firstly, we define
\[
M(\tau) := \bbm{C_{z_1}c_1\tilde z_{1,\text{avg}}(\tau) & C_{z_2}c_2\tilde z_{2,\text{avg}}(\tau)}\bbm{z_5(\tau) \\ z_6(\tau)}.
\]
Computing its time derivative w.r.t. $\tau$, we have
\begin{align*}
%\begin{aligned}
\dot{M}&=\begin{bmatrix}
C_{z1}c_{1}\dot{\tilde{z}}_{1,\text{avg}} & C_{z2}c_{2}\dot{\tilde{z}}_{2,\text{avg}}
\end{bmatrix}\begin{bmatrix}
z_5  \\z_6
\end{bmatrix}
\\&\qquad+\begin{bmatrix}
C_{z1}c_{1}\tilde{z}_{1,\text{avg}} & C_{z2}c_{2}\tilde{z}_{2,\text{avg}}
\end{bmatrix}\begin{bmatrix}
\dot{z}_5 \\ \dot{z}_6
\end{bmatrix}
\\&=-\frac{ak_1}{\omega_0}\left(\bluff z^2_5C_{z1}c_{1}\left(\tilde{z}_{1,\text{avg}}z_5C_{z1}c_{1}+\tilde{z}_{2,\text{avg}}z_6C_{z2}c_{2}\right) \right.
\\&\qquad \left. \bluff +z^2_6C_{z2}c_{2}\left(\tilde{z}_{1,\text{avg}}z_5C_{z1}c_{1}+\tilde{z}_{2,\text{avg}}z_6C_{z2}c_{2}\right)\right)
\\&\qquad+\frac{ak_2}{\omega_0}\left(\tilde{z}_{1,\text{avg}}z_6C_{z1}c_{1}-\tilde{z}_{2,\text{avg}}z_5C_{z2}c_{2}\right)^2.
%\end{aligned}
\end{align*}
Since in $\Omega$ we have that \eqref{eq:M_in_omega} holds for all $\tau\geq 0$, then the first two terms in the above equality are equal to zero so that
\begin{align*}
\dot M & = \frac{ak_2}{\omega_0}\left(\tilde{z}_{1,\text{avg}}z_6C_{z1}c_{1}-\tilde{z}_{2,\text{avg}}z_5C_{z2}c_{2}\right)^2 \\
& = \frac{ak_2}{\omega_0}\left(\bbm{-C_{z_2}c_2\tilde z_{2,\text{avg}} & C_{z_1}c_1\tilde z_{1,\text{avg}}}\bbm{z_5 \\ z_6}\right)^2.
\end{align*}
It is immediate to check that the vector $\sbm{C_{z_1}c_1\tilde z_{1,\text{avg}} & C_{z_2}c_2\tilde z_{2,\text{avg}}}^T$ in \eqref{eq:M_in_omega} is orthogonal to the vector $\sbm{-C_{z_2}c_2\tilde z_{2,\text{avg}} & C_{z_1}c_1\tilde z_{1,\text{avg}}}^T$ above. Consequently, as the $\bbm{z_5 & z_6}^T$ is orthogonal to $\sbm{C_{z_1}c_1\tilde z_{1,\text{avg}} & C_{z_2}c_2\tilde z_{2,\text{avg}}}^T$ in $\Omega$ (c.f. \eqref{eq:M_in_omega}), it implies that it is co-linear with $\sbm{-C_{z_2}c_2\tilde z_{2,\text{avg}} & C_{z_1}c_1\tilde z_{1,\text{avg}}}^T$. Thus, if $\tilde z_{1,\text{avg}}\neq 0$ or $\tilde z_{2,\text{avg}}\neq 0$ in $\Omega$ then
\begin{align*}
\dot M
& = \frac{ak_2}{\omega_0}\left(\bbm{-C_{z_2}c_2\tilde z_{2,\text{avg}} & C_{z_1}c_1\tilde z_{1,\text{avg}}}\bbm{z_5 \\ z_6}\right)^2 \neq 0,
\end{align*}
which contradicts \eqref{eq:M_in_omega}. Therefore, we establish that $\tilde z_{1,\text{avg}}=0$ and $\tilde z_{2,\text{avg}}=0$ in $\Omega$. By La-Salle invariance principle, it follows that $\tilde z_{1,\text{avg}}(\tau)\to 0$ and $\tilde z_{2,\text{avg}}(\tau) \to 0$ as $\tau \to \infty$. This concludes the proof.
\end{proof}

\ifCLASSOPTIONcaptionsoff
  \newpage
\fi

\begin{IEEEbiography}[{\includegraphics[width=1in,height=1.25in,clip,keepaspectratio]{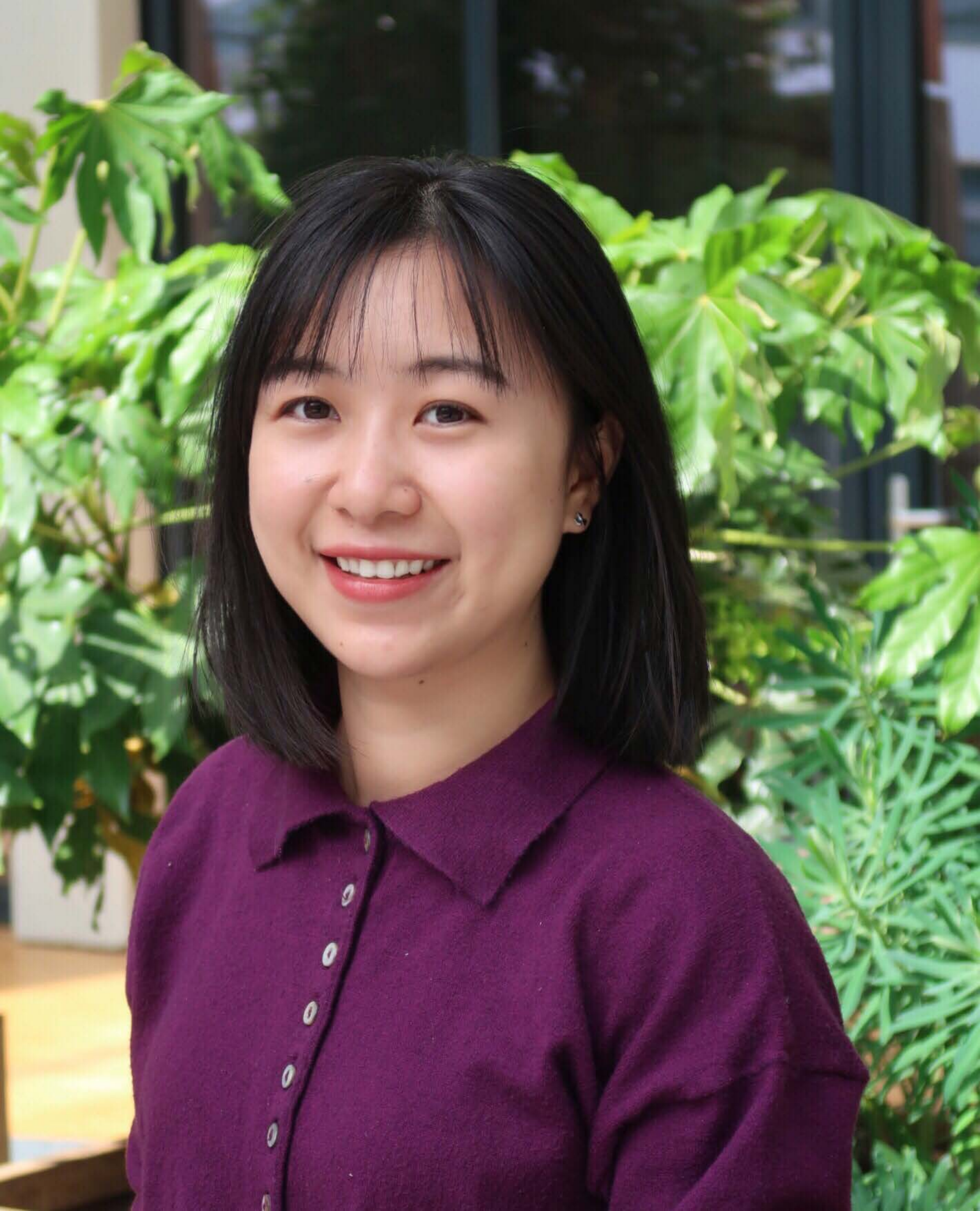}}]{Tinghua Li}
received the M.Sc. degree in control engineering from Shanghai University, Shanghai, China, in 2019. She is currently working towards the PhD. degree with the Faculty of Science and Engineering, University of Groningen, Groningen, The Netherlands. Her research interests include sensor application, motion control and navigation for autonomous mobile robots.
\end{IEEEbiography}
\vfill
%\vspace{-100 pt}
%\enlargethispage{-5in}

% if you will not have a photo at all:
\begin{IEEEbiography}[{\includegraphics[width=1in,height=1.25in,clip,keepaspectratio]{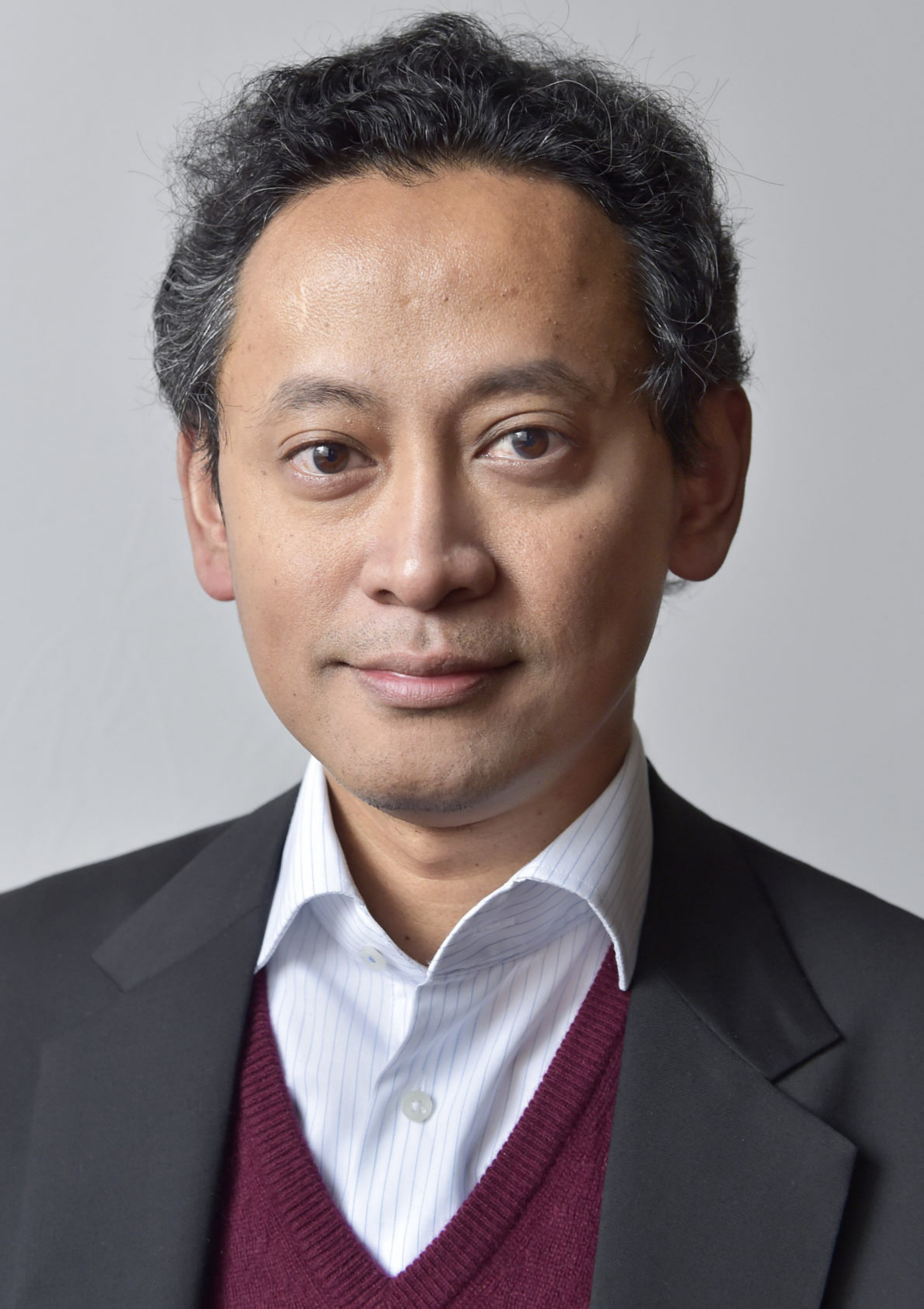}}]{Bayu Jayawardhana (SM’13)} received the B.Sc. degree in electrical and electronics engineering from the Institut Teknologi Bandung, Bandung, Indonesia, in 2000, the M.Eng. degree in electrical and electronics engineering from the Nanyang Technological University, Singapore, in 2003, and the Ph.D. degree in electrical and electronics engineering from Imperial College London, London, U.K., in 2006.

He is currently a professor of mechatronics and control of nonlinear systems in the Faculty of Science and Engineering, University of Groningen, Groningen, The Netherlands. He was with Bath University, Bath, U.K., and with University of Manchester, Manchester, U.K. His research interests include the analysis of nonlinear systems, systems with hysteresis, mechatronics, robotics and systems biology. Prof. Jayawardhana is a Subject Editor of the International Journal of Robust and Nonlinear Control, an Associate Editor of the European Journal of Control and a member of the Conference Editorial Board of the IEEE Control Systems Society.
\end{IEEEbiography}
\vfill
\vspace{-100 pt}
%\enlargethispage{-5in}

\begin{IEEEbiography}[{\includegraphics[width=1in,height=1.25in,clip,keepaspectratio]{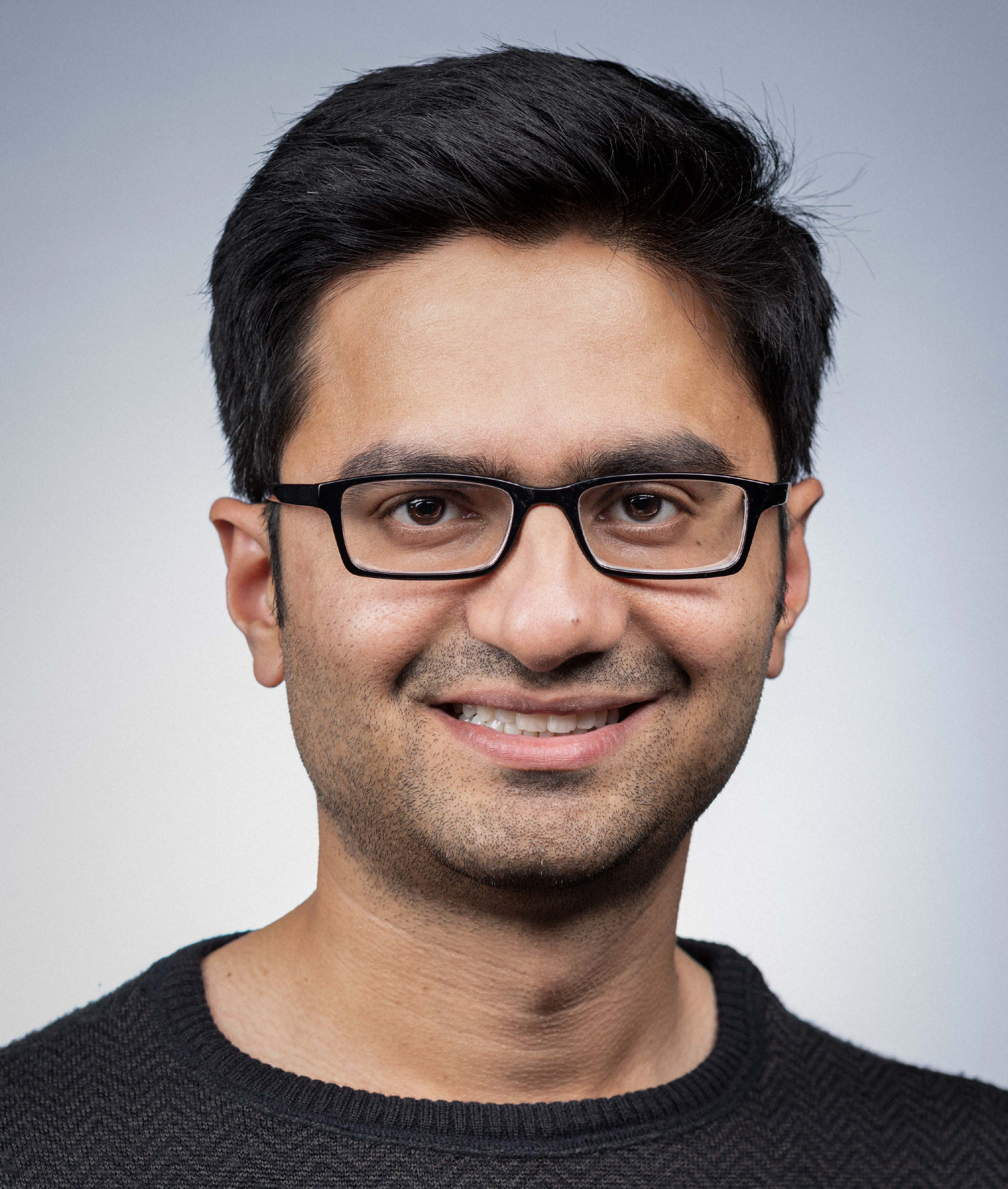}}]{Amar M. Kamat}
received his MS degree in Mechanical Engineering (2011) and his PhD degree in Engineering Science and Mechanics (2016) from the Pennsylvania State University (USA). He previously performed research internships at Intel Corporation (USA) and BASF SE (Germany), and is currently a postdoctoral researcher at the University of Groningen in the Netherlands. His research interests include additive manufacturing (3D printing), bioinspiration and biomimetics, and sensor development for biomedical applications. Dr. Kamat currently serves as a Review Editor (Micro- and Nanoelectromechanical Systems section) for \emph{Frontiers in Mechanical Engineering} and as an Editorial Board member (Manufacturing Processes and Systems section) for \emph{Materials}.
\end{IEEEbiography}
\vfill
\vspace{-100 pt}
%\enlargethispage{-5in}

\begin{IEEEbiography}[{\includegraphics[width=1in,height=1.25in,clip,keepaspectratio]{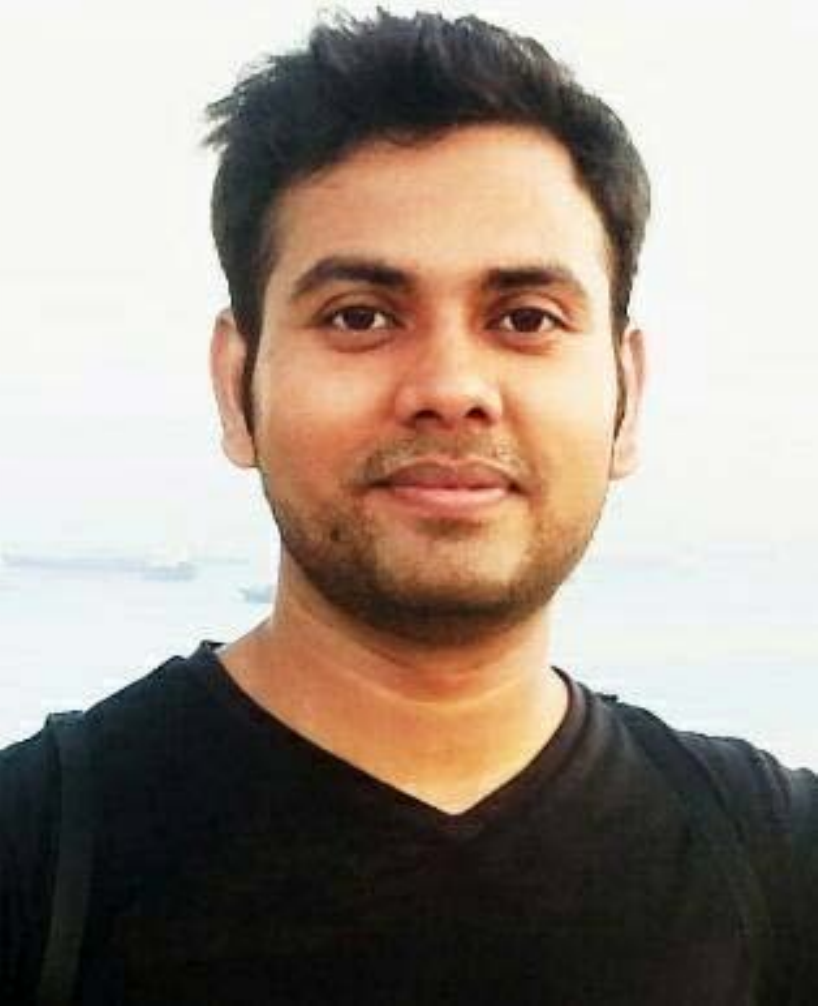}}]{Ajay Giri Prakash Kottapalli}
is an assistant professor in the Department of Advanced Production Engineering at University of Groningen. In 2013, he received his Ph.D. degree from Nanyang Technological University. During 2014-2015, he was a Postdoctoral Associate at Singapore-MIT Alliance for Research and Technology (SMART) and in 2016 he became a Principal Research Scientist at SMART. He is also currently a Research Affiliate with the MIT Sea Grant at MIT. In 2018, he was awarded the top-10 innovators under 35 in Asia-Pacific by MIT Technology Review. His research interests mainly include biomimetic/bio-inspired MEMS/NEMS, nanoelectronics, TENG and PENGs, and biomedical sensors.
\end{IEEEbiography}
\vfill
%\enlargethispage{-5in}

% You can push biographies down or up by placing
% a \vfill before or after them. The appropriate
% use of \vfill depends on what kind of text is
% on the last page and whether or not the columns
% are being equalized.

%\vfill

% Can be used to pull up biographies so that the bottom of the last one
% is flush with the other column.
%\enlargethispage{-5in}

% that's all folks

\end{document}